\def\eqref#1{equation~\ref{#1}}
\def\onedot{.}
\def\eg{\emph{e.g}\onedot} 
\def\ie{\emph{i.e}\onedot}
\newtheorem{theorem}{Theorem}
\newtheorem{lemma}[theorem]{Lemma}
\newtheorem{corollary}[theorem]{Corollary}
\newtheorem{remark}{Remark}
\newcounter{mylabelcounter}
\newcommand{\labelText}[2]{%
#1\refstepcounter{mylabelcounter}%
\immediate\write\@auxout{%
  \string\newlabel{#2}{{1}{\thepage}{{\unexpanded{#1}}}{mylabelcounter.\number\value{mylabelcounter}}{}}%
}%
}
\renewcommand{\comment}[1]{}
\newlength\myindent 
\def \problem {representation degeneration problem}
\newcommand{\ours}{\textsc{InvGC}}
\newcommand{\pool}{\textsc{AvgPool}}
\newcommand{\tail}{\textsc{LocalAdj}}
\newcommand{\fcut}{\textsc{InvGC} w/\textsc{LocalAdj}}
\useunder{\uline}{\ul}{}
\begin{document}
\title{\ours: Robust Cross-Modal Retrieval by Inverse Graph Convolution}

\author{Xiangru Jian \and Yimu Wang\thanks{\ \  Corresponding author.}\\
  University of Waterloo \\
  \texttt{\{xiangru.jian,yimu.wang\}@uwaterloo.ca} \\
}
\maketitle
\begin{abstract}
Over recent decades, significant advancements in cross-modal retrieval are mainly driven by breakthroughs in visual and linguistic modeling. 
However, a recent study shows that multi-modal data representations tend to cluster within a limited convex cone (as \problem), which hinders retrieval performance due to the inseparability of these representations. 
In our study, we first empirically validate the presence of the \problem\ across multiple cross-modal benchmarks and methods. 
Next, to address it, we introduce a novel method, called \ours, a post-processing technique inspired by graph convolution and average pooling. 
Specifically, \ours\ defines the graph topology within the datasets and then applies graph convolution in a subtractive manner. 
This method effectively separates representations by increasing the distances between data points. 
To improve the efficiency and effectiveness of \ours, we propose an advanced graph topology, \tail, which only aims to increase the distances between each data point and its nearest neighbors. 
To understand why \ours\ works, we present a detailed theoretical analysis, proving that the lower bound of recall will be improved after deploying \ours. 
Extensive empirical results show that \ours\ and \fcut\ significantly mitigate the \problem, thereby enhancing retrieval performance.

Our code is available at \href{https://github.com/yimuwangcs/Better_Cross_Modal_Retrieval}{link}.
\end{abstract}

\begin{figure}[t!]
  \centering
    \subcaptionbox{Original representations.\label{fig: datadeg a}
    }{
    \includegraphics[width=0.22\textwidth]{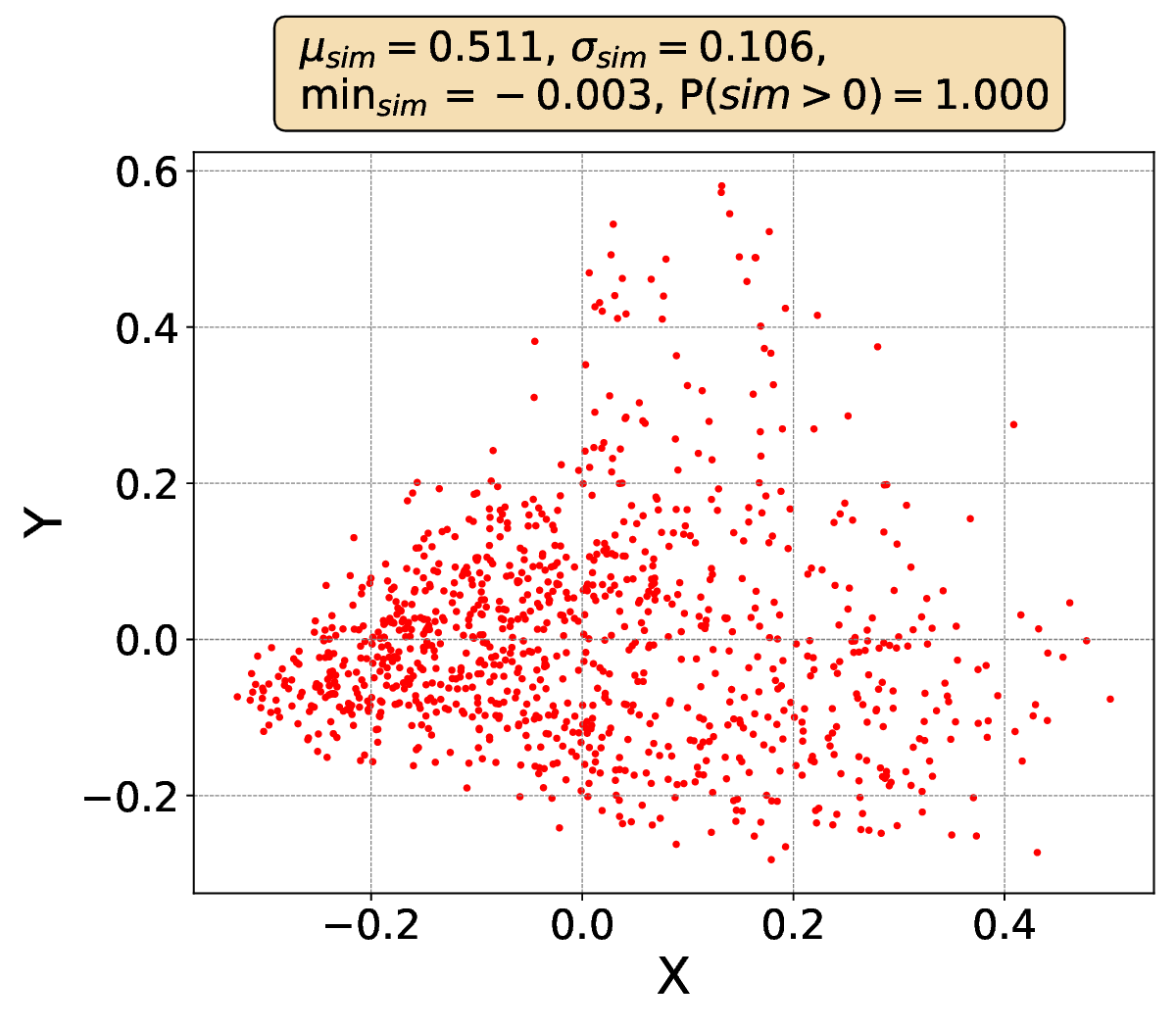}
    }
    \subcaptionbox{Updated representations by \ours. \label{fig: datadeg b}}{
    \includegraphics[width=0.22\textwidth]{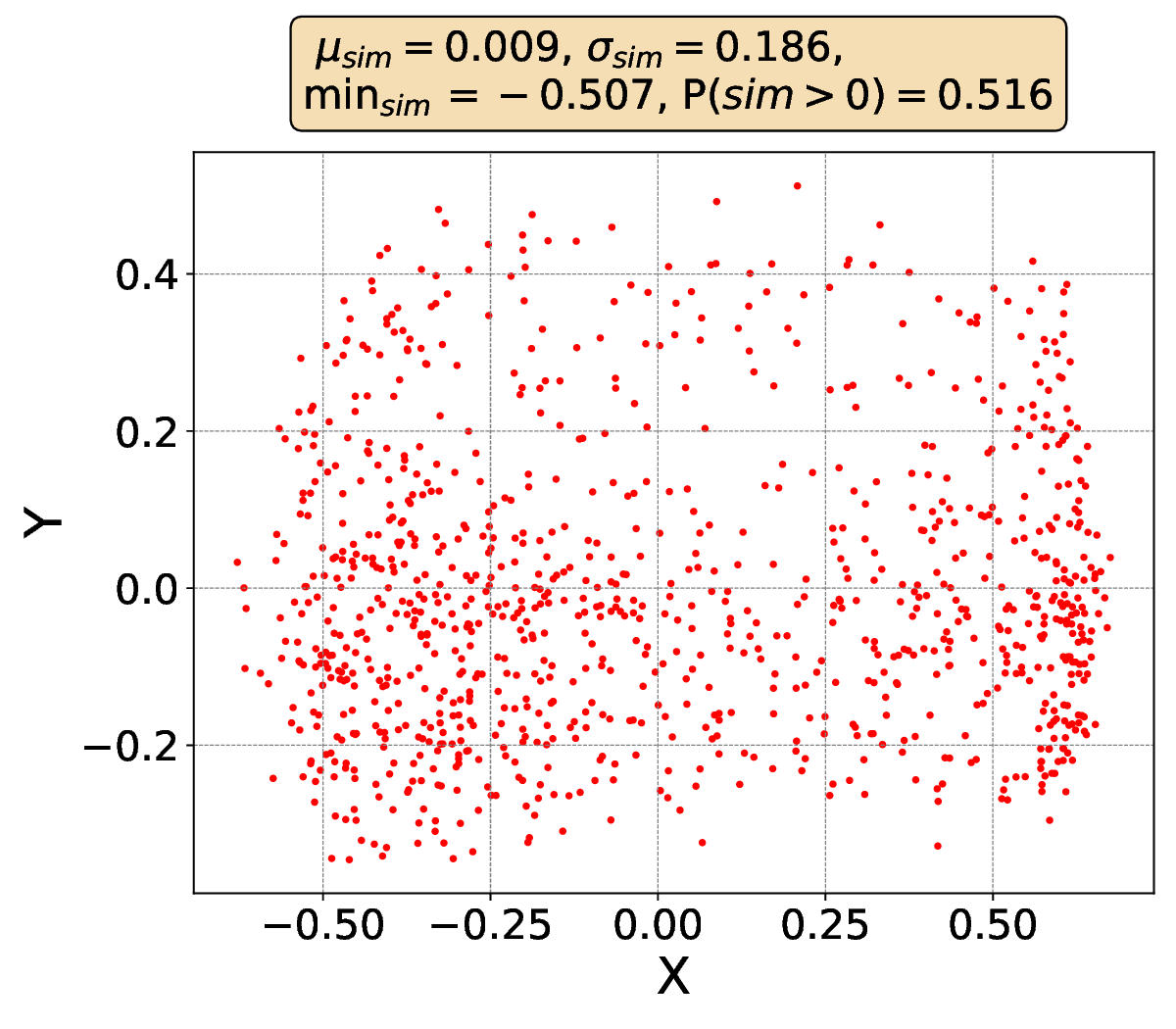}
    }

    \subcaptionbox{Histogram of the similarity between nearest neighbors. \label{fig: 3d_deg left}}{
    \includegraphics[width=0.22\textwidth]{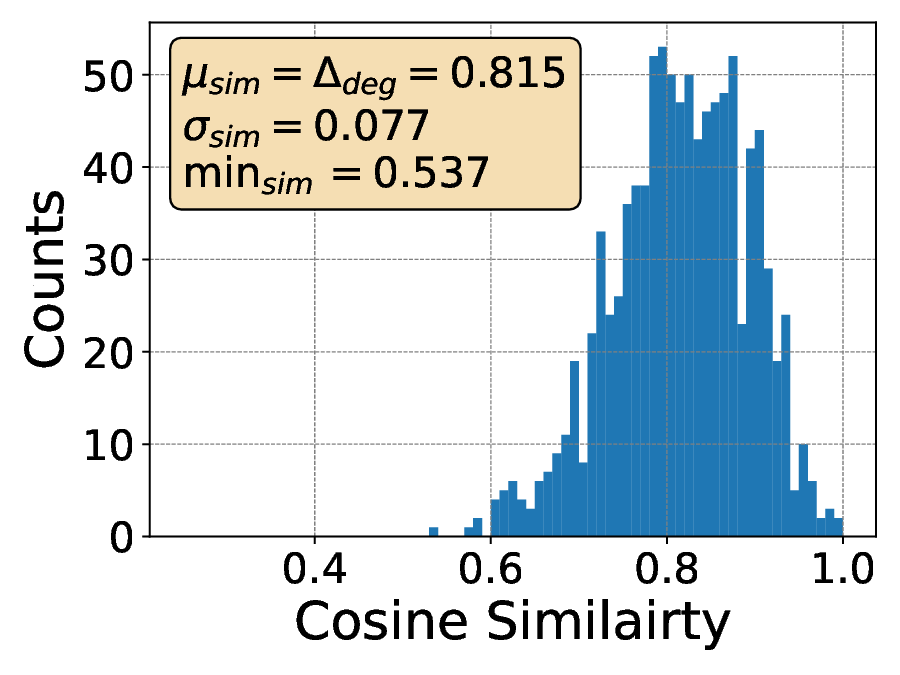}
    }
    \subcaptionbox{Histogram of the similarity between nearest neighbors after \ours. \label{fig: 3d_deg right}}{
    \includegraphics[width=0.22\textwidth]{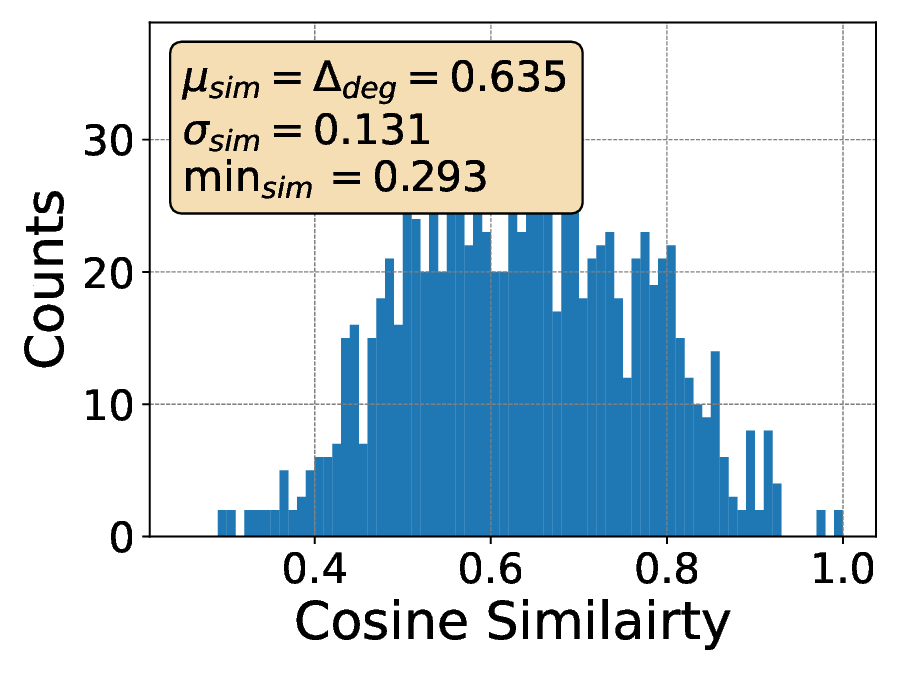}
    }
    \caption{A 2D visualization of the representation of text data uniformly sampled from the MSCOCO dataset generated by CLIP~\cite{DBLP:conf/icml/RadfordKHRGASAM21} and then reduced by PCA. 
    As shown in \Cref{fig: datadeg a}, the \problem\ can be observed as all the representations are concentrated in a small convex cone, while after employing \ours, representations are better separated. 
    $\mu_{sim}$, $\sigma_{sim}$, and $\min_{sim}$ are the mean, standard deviation, and minimum of the similarity. 
    $\Delta_{deg}$ is the score of \problem\ defined in \Cref{eq: deg} .
    }
    \label{fig: datadeg}
\end{figure}

\section{Introduction}
Cross-modal retrieval (CMR)~\cite{ijcai2021p156,yu2023multimodal,kim2023nice}, which aims to enable flexible retrieval across different modalities, \eg, images, videos, audio, and text, has attracted significant research interest in the last few decades. 
The goal of CMR is to learn a pair of encoders that map data from different modalities into a common space where they can be directly compared. 
For example, in text-to-video retrieval, the objective is to rank gallery videos based on the features of the query text.  
Recently, inspired by the success in self-supervised learning~\cite{DBLP:conf/icml/RadfordKHRGASAM21}, significant progress has been made in CMR, including image-text retrieval~\cite{DBLP:conf/icml/RadfordKHRGASAM21,DBLP:conf/eccv/Li0LZHZWH0WCG20,10.1145/3394171.3413882}, video-text retrieval~\cite{DBLP:conf/cvpr/ChenZJW20,DBLP:journals/corr/abs-2109-04290,DBLP:journals/corr/abs-2111-05610,DBLP:conf/cvpr/LeiLZGBB021,DBLP:conf/mm/MaXSYZJ22,park-etal-2022-exposing,DBLP:conf/mm/WangXHLJHD22,9878037,10.1145/3477495.3531950,wang_video-text_2023,wang2023balance}, and audio-text retrieval~\cite{DBLP:conf/interspeech/OncescuKHAA21}, with satisfactory retrieval performances.

However, \citet{liang2022mind} demonstrate that the current common learning paradigm of CMR leads to the \problem, which concentrates all the representations in a small (convex) cone~\cite{DBLP:conf/iclr/GaoHTQWL19,zhang-etal-2020-revisiting} in image-text retrieval and the cosine similarity between any two points is positive, as shown in \Cref{fig: datadeg a}. 
Consequently, retrieval performance will be significantly affected~\cite{JMLR:v11:radovanovic10a,DBLP:conf/iclr/GaoHTQWL19}. Due to the limitation of space, detailed related works are presented in \Cref{sec: relatedwork}. 

In this paper, to step forward in addressing \problem\ and further improve the retrieval performance, we first empirically test whether it is prevalent across various cross-modal retrieval models and datasets, including video, image, text, and audio. 
We found that the representations of MSCOCO generated by CLIP are gathered in a very narrow cone in the embedding space, proving the existence of \problem, as shown in \Cref{fig: datadeg a}. 
This case does not stand alone since similar observations are observed across several cross-modal retrieval models and datasets as shown in \Cref{sec: prevalence of prob}. 

Next, to model how severe the \problem\ is and the relationship between this problem and the retrieval performance, drawing inspiration from previous work~\cite{DBLP:conf/iclr/GaoHTQWL19,zhang-etal-2020-revisiting,yu-etal-2022-rare,liang2022mind,Tang2022cvpr,NEURIPS2021_Huang}, we define it in CMR as the average similarity between each point and its nearest neighbor in the gallery set, as shown in \Cref{eq: deg}. 
We observe that the scores are very high across different datasets and methods. They are able to model this problem as a high score always leads to more concentrated data distribution as shown in \Cref{sec: prevalence of prob}.

While CMR has suffered from this problem, on the other side, the graph convolution~\cite{kipf2017semisupervised} and average pooling~\cite{10.5555/3104322.3104338}, which are widely employed in graph neural networks~\cite{gilmer2017mpnn,kipf2017semisupervised,velickovic2018graph} and deep neural networks~\cite{DBLP:conf/cvpr/HeZRS16,yu2023multimodal}, respectively, are designed to move the representations closer to each other if they are semantically similar~\cite{baranwal2023effects}. It might lead to the emergence of \problem.

Drawing inspiration from the graph convolution and average pooling, we propose a novel method, \ours, which separates representations by performing the graph convolution inversely to separate representations with a bigger margin as shown in \Cref{fig: datadeg a}. 
Specifically, different from the vanilla graph convolution, considering one modality, \ours\ separates representations by subtracting the representation of the neighboring nodes from each node, instead of aggregating them as,
\begin{equation}\label{eq:invconvwithr}
    \mathbf{x_i}^\prime =  \mathbf{x_i} - r \sum_{j \neq i} S_{ij} \mathbf{x_j}, \forall i\,,
\end{equation}
where $\mathbf{x_i}^\prime$ and $\mathbf{x_i}$ are the updated and the original representations, $S_{ij}$ is the similarity between the $i$-th and $j$-th data, and $r$ is a predefined hyperparameter.
As shown in \Cref{fig: datadeg}, \ours\ better scatter representations and alleviates \problem.
Moreover, the histogram of similarity between any two points is more balanced with \ours, as shown in \Cref{fig: datadeg}. 
To boost the effectiveness and efficiency of \ours, we propose an advanced adjacency matrix \tail\ that directs \fcut\ to focus on the nearest neighbors of each data point instead of considering all the data points.

To evaluate the effectiveness of \ours\ and \fcut, we conducted experiments on eight cross-modal benchmarks~\cite{DBLP:conf/cvpr/XuMYR16,chen-dolan-2011-collecting,caba2015activitynet,hendricks_localizing_2017,DBLP:conf/eccv/LinMBHPRDZ14,DBLP:journals/ijcv/PlummerWCCHL17,kim-etal-2019-audiocaps,drossos_clotho_2020}. 
Experimental results show that \ours \ alleviates \problem\ across different datasets and methods and improves retrieval performance as a by-product. 

In summary, our contributions are as follows\footnote{The code is released at \href{https://github.com/yimuwangcs/Better_Cross_Modal_Retrieval}{link}.}:
\begin{itemize}
    \item We are the first to formalize the definition of representation degeneration in cross-modal retrieval and perform a theoretical analysis of the relationship between \problem\ and retrieval performance.
    \item Inspired by the graph convolution, we propose the first post-processing method in cross-modal retrieval, namely \ours, to alleviate \problem\ without any training process or additional data. 
    \item We design an adjacency matrix, called \tail, for the graph convolution, which leverages only the nearest neighbors of each data point instead of all the data.
    \ours\ with \tail, namely \fcut.
    It is shown to be more effective and efficient. 
    \item  Extensive experiments show that \ours\ and \fcut\ alleviate \problem\ and improve retrieval performance as a by-product.
\end{itemize}

\section{Preliminaries} 
\subsection{Task Definition}\label{sec:def}
In this paper, we focus on the representation degeneration problem in cross-modal retrieval~\cite{DBLP:conf/prcv/WangWXZ20}. 
Two modalities are denoted as $\mathcal{X}$ and $\mathcal{Y}$. $\mathcal{X}$ is the query modality, while $\mathcal{Y}$ is the gallery modality. 
The (test) gallery, denoted $G=\{\mathbf{g}_1, \ldots, \mathbf{g}_{N_g}\}$, contains all the representations of the (test) gallery data, where $N_g$ is the size. 
The query set is $Q = \{\mathbf{q}_1, \ldots, \mathbf{q}_{N_q}\}$, where $N_q$ is the number of queries.
Usually, in cross-modal retrieval, the gallery data does not overlap with the training data. 
Additionally, as \ours\ requires training (or validation) data to address the representation degeneration problem, we define the set of representations of training (or validation) query and gallery data as $\hat{Q} = \{\hat{\mathbf{q}}_1, \ldots, \hat{\mathbf{q}}_{N_{\hat{Q}}}\}$ and $\hat{G} = \{\hat{\mathbf{g}}_1, \ldots, \hat{\mathbf{g}}_{N_{\hat{G}}}\}$, respectively, where $N_{\hat{Q}}$ and $N_{\hat{G}}$ are the size of the training (or validation) query and gallery set, respectively. 
The similarity between two embeddings $\mathbf{a}$ and $\mathbf{b}$ is defined as $s_{\mathbf{a}, \mathbf{b}} = \operatorname{sim}(\mathbf{a}, \mathbf{b})$, where $\operatorname{sim}(\cdot, \cdot)$ could be some measure of distance.

\subsection{Representation Degeneration in Cross-modal Retrieval}

Taking inspiration from \citet{liang2022mind}, we define the \problem\ as the average similarity of pairs that are closest to each other,
\begin{equation}\label{eq: deg}
    \Delta_{deg} = \frac{1}{m} \sum_{\mathbf{x}} \operatorname{sim}(\mathbf{x}, \mathbf{y})\,,
\end{equation}
where $\mathbf{y}$ is the closest data point to $\mathbf{x}$ while $m$ is the number of data points.
A higher score, as shown in \Cref{sec: prevalence of prob}, indicates a more severe representation degeneration issue, with each data point being closer to its nearest neighbors in the set.

To understand the relationship between the degeneration score (\Cref{eq: deg}) and the retrieval performance, we present the following theorems.

\begin{theorem}\label{the: 1}
Let $\mathbf{x}_1$ be any point in $G$, $\mathbf{x}_2$ be the nearest neighbor of $\mathbf{x}_1$ and $n$ be the dimension of the representation. Given a query point $\mathbf{q}$ that is semantically similar to $\mathbf{x}_1$ and sampled from $Q$, which follows an independent and identical uniform distribution in $\mathbb{R}^n$, the probability of a query point $\mathbf{q}$ to successfully retrieve $\mathbf{x_1}$, denoted $\operatorname{P}(\mathbf{x_1}, b)$, is bounded by,

$$
\frac{n}{2 }\cdot b^{n} > \operatorname{P}(\mathbf{x_1},b) >\frac{1}{4}\cdot b^{n+1}\,,
$$
where $b = ||\mathbf{x_1} -\mathbf{x_2}||_2/2$.
\end{theorem}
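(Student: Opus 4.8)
The plan is to turn ``$\mathbf{q}$ successfully retrieves $\mathbf{x}_1$'' into the purely geometric event that $\mathbf{q}$ lands in the nearest-neighbor cell of $\mathbf{x}_1$ with respect to the gallery $G$ --- i.e. $\mathbf{q}$ is closer to $\mathbf{x}_1$ than to every other point of $G$ --- and then to evaluate the probability of that event as a ratio of volumes, which is legitimate because $\mathbf{q}$ is uniform over its bounded support $\operatorname{supp}(Q)$. The engine is a sandwich on this cell. For the inner inclusion I would use that $\mathbf{x}_2$ is the \emph{nearest} neighbor of $\mathbf{x}_1$: every $\mathbf{g} \in G \setminus \{\mathbf{x}_1\}$ has $\|\mathbf{g} - \mathbf{x}_1\|_2 \ge 2b$, so by the triangle inequality any $\mathbf{p}$ with $\|\mathbf{p} - \mathbf{x}_1\|_2 < b$ satisfies $\|\mathbf{p} - \mathbf{g}\|_2 \ge 2b - b = b > \|\mathbf{p} - \mathbf{x}_1\|_2$; hence the ball $B(\mathbf{x}_1, b)$ lies entirely inside the retrieval cell. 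For the outer inclusion I would note that the cell sits inside the half-space of points at least as close to $\mathbf{x}_1$ as to $\mathbf{x}_2$, whose bounding hyperplane is at distance exactly $b$ from $\mathbf{x}_1$; under the regime where the \problem\ is pronounced one can moreover argue that some gallery point cuts off each outward direction from $\mathbf{x}_1$ within distance of order $b$, so the cell is effectively trapped inside a ball of radius a constant multiple of $b$ around $\mathbf{x}_1$.

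Given the sandwich, the two inequalities follow by passing to volumes. Writing $V_n = \pi^{n/2}/\Gamma(\tfrac{n}{2}+1)$ for the volume of the unit $n$-ball, the inner inclusion gives $\operatorname{P}(\mathbf{x}_1, b) \ge \operatorname{vol}(B(\mathbf{x}_1,b) \cap \operatorname{supp}(Q)) / \operatorname{vol}(\operatorname{supp}(Q))$, and the outer inclusion gives the reverse with $B(\mathbf{x}_1,b)$ replaced by the trapping ball. The explicit constants $\tfrac{n}{2}$ and $\tfrac14$ then come from two ingredients: crude two-sided bounds on the Gamma-function factor in $V_n$ (this is where the $\tfrac{n}{2}$ enters, via ratios of $\Gamma(\tfrac{n}{2}+1)$-type terms), and a first-order spherical-cap correction for the case $\mathbf{x}_1$ is near the boundary of $\operatorname{supp}(Q)$ --- there only a fraction of $B(\mathbf{x}_1,b)$ is admissible and that fraction decays linearly in $b$, which is exactly what downgrades the naive $b^{n}$ scaling of the lower bound to the conservative $b^{n+1}$. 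I would also flag that, as stated, the bounds are informative only for small $b$ (equivalently large $\Delta_{deg}$), which is precisely the degenerate regime the theorem targets.

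The step I expect to be the real obstacle is the \emph{upper} bound, because the perpendicular-bisector half-space alone is unbounded and on its own yields only a vacuous $O(1)$ estimate; reaching $\tfrac{n}{2} b^{n}$ forces one either to invoke a density/packing consequence of the \problem\ (so that every outward direction from $\mathbf{x}_1$ is blocked within $O(b)$ by another gallery point) or to restrict the geometry of $\operatorname{supp}(Q)$ so that the radius-$b$ ball around $\mathbf{x}_1$ already covers the relevant part of the support. I would state the precise hypothesis used, after which the remaining volume computation is routine. The lower bound is comparatively painless: it uses only the inner ball $B(\mathbf{x}_1,b)$, subtracts the worst-case cap loss near the boundary, and then loosens freely to reach $\tfrac14 b^{n+1}$.
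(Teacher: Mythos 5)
There is a genuine gap, and it is concentrated exactly where you predicted trouble but also in a place you did not. The paper's proof does not work with a Voronoi cell in a bounded region of $\mathbb{R}^n$ at all: because the similarity is cosine similarity, it first normalizes every gallery and query point onto the unit sphere $\mathcal{S}_{n-1}$, identifies the retrieval event with the query landing in the spherical cap $\operatorname{Cap}_{n,\mathbf{x}_1,b}$ of base radius $b$ around $\mathbf{x}_1$, and then bounds the ratio of that cap's measure to the whole sphere's by sandwiching the cap between a cone (base an $(n-1)$-ball of radius $b$, height $h=1-\sqrt{1-b^2}>b^2/2$) and half of the $n$-ball of radius $b$. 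This is why both constants come out clean: in the ratio $\operatorname{V}(\operatorname{Cap})/\operatorname{V}(\mathcal{B}_{n,1})$ the dimensional volume constants essentially cancel (the relevant factor is $V_{n-1}/V_n$, which is bounded below by a constant), leaving $\tfrac{1}{4n}b^{n+1}$ below and $\tfrac12 b^n$ above, and the surface-area-equals-$n$-times-volume identity supplies the remaining factor of $n$. Your upper bound never materializes --- you concede it requires an unstated packing or support hypothesis --- whereas in the paper it is immediate, since a spherical cap of base radius $b$ is contained in half the ball $\mathcal{B}_{n,b}$ regardless of where the other gallery points sit.

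The subtler problem is that your lower bound, as sketched, does not reach $\tfrac14 b^{n+1}$ either. In your Euclidean framework the probability of landing in $B(\mathbf{x}_1,b)$ is $V_n\, b^n/\operatorname{vol}(\operatorname{supp}(Q))$ with $V_n=\pi^{n/2}/\Gamma(\tfrac n2+1)$, and the $\Gamma(\tfrac n2+1)$ in the denominator makes this quantity decay super-exponentially in $n$; no amount of ``loosening freely'' gets you up to $\tfrac14 b^{n+1}$, and the boundary-cap correction you invoke moves the bound in the wrong direction besides. The paper escapes this precisely because it compares a cap to the sphere it lives on rather than a ball to an arbitrary support, so the offending Gamma factors appear in both numerator and denominator. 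If you want to salvage your route, the essential missing step is the reduction to the unit sphere (legitimate because cosine similarity is scale-invariant), after which your inner-ball inclusion becomes the paper's cone-in-cap inclusion and the upper bound becomes the trivial cap-in-half-ball inclusion.
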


\begin{corollary}\label{the: 4} 
The ratio between the probability of successful retrieval of any two different neighborhood radii, namely $b_1$ and $b_2$, is 
$$
\frac{\operatorname{P}(\mathbf{x_1},b_1)}{\operatorname{P}(\mathbf{x_1},b_2)} = \mathcal{O}\left(n \cdot \left(\frac{b_1}{b_2}\right)^{n} \right)\,.
$$

\end{corollary}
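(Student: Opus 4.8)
The plan is to derive the corollary directly from the two-sided estimate in \Cref{the: 1}, applied separately to the two radii $b_1$ and $b_2$, and then to divide. Concretely, \Cref{the: 1} gives $\frac{n}{2}b_1^{n} > \operatorname{P}(\mathbf{x_1},b_1) > \frac{1}{4}b_1^{n+1}$ and, with the same point $\mathbf{x_1}$, $\frac{n}{2}b_2^{n} > \operatorname{P}(\mathbf{x_1},b_2) > \frac{1}{4}b_2^{n+1}$. No extra probabilistic or geometric ingredients are needed; the whole argument is algebraic manipulation of these inequalities.

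For the upper estimate on the ratio I would pair the upper bound on the numerator with the lower bound on the denominator:
$$\frac{\operatorname{P}(\mathbf{x_1},b_1)}{\operatorname{P}(\mathbf{x_1},b_2)} < \frac{(n/2)\,b_1^{n}}{(1/4)\,b_2^{n+1}} = \frac{2n}{b_2}\left(\frac{b_1}{b_2}\right)^{n}.$$
Since $b_2$ is the (fixed) half-distance from $\mathbf{x_1}$ to its nearest neighbour in the gallery and is treated as a constant independent of $n$, the right-hand side is $\mathcal{O}\!\left(n\,(b_1/b_2)^n\right)$, which is exactly the claimed bound. For completeness I would also record the matching lower estimate, pairing the lower bound on the numerator with the upper bound on the denominator:
$$\frac{\operatorname{P}(\mathbf{x_1},b_1)}{\operatorname{P}(\mathbf{x_1},b_2)} > \frac{(1/4)\,b_1^{n+1}}{(n/2)\,b_2^{n}} = \frac{b_1}{2n}\left(\frac{b_1}{b_2}\right)^{n},$$
so that the ratio is sandwiched between $\frac{b_1}{2n}(b_1/b_2)^n$ and $\frac{2n}{b_2}(b_1/b_2)^n$; up to the polynomial-in-$n$ slack, this pins the growth rate to $(b_1/b_2)^n$, and in particular confirms that the $\mathcal{O}$ bound is essentially tight.

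The only place where I would be careful — and the sole subtlety in the argument — is the bookkeeping hidden inside $\mathcal{O}(\cdot)$: the quantities $b_1,b_2$ must be regarded as $n$-independent (with $b_2$ bounded away from $0$) for the statement to be meaningful, and the factor-of-$n^2$ gap between my upper and lower estimates is an artifact of the $b^{n}$ versus $b^{n+1}$ mismatch in \Cref{the: 1} rather than something that can be tightened here. I would state this dependence explicitly so the reader sees that the exponential factor $(b_1/b_2)^n$ is the dominant term and the $n$ prefactor is the residual polynomial overhead inherited from the bounds of \Cref{the: 1}.
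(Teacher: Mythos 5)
Your argument is correct and is essentially identical to the paper's own proof: both pair the upper bound $\frac{n}{2}b_1^{n}$ on the numerator from \Cref{the: 1} with the lower bound $\frac{1}{4}b_2^{n+1}$ on the denominator to obtain $\frac{2n}{b_2}\left(\frac{b_1}{b_2}\right)^{n}$, and then absorb the $n$-independent factor $2/b_2$ into the $\mathcal{O}(\cdot)$. Your additional matching lower estimate and the explicit caveat about treating $b_1,b_2$ as $n$-independent are reasonable extras but do not change the route.
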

Due to space limitation, the proofs are deferred to \Cref{A:1}. 

\begin{remark}
    These theorems show that a high similarity of the nearest neighbor, \ie, smaller $b$, leads to an exponentially lower probability for successful retrieval. 
    Therefore, a higher $\Delta_{deg}$ score leads to bad retrieval performance. 
\end{remark}

\section{\ours}

To alleviate the \problem\ and further boost the retrieval performance, we design a post-processing method, called \ours, which does not need any additional training.

Our idea is generally based on the mathematical principles laid out in \Cref{eq:invconvwithr} and \Cref{fig: datadeg}, where the inverse version of graph convolution is able to decrease the similarity between data points and their neighbors. 
For the sake of clear representation, we use the cosine similarity as the similarity metric in this section following \citet{DBLP:journals/ijon/LuoJZCLDL22}, as it is the most common practice in cross-modal retrieval\footnote{\ours\ can be easily migrated to other similarity metrics adopted by different retrieval methods~\cite{DBLP:conf/iccv/CroitoruBLJZAL21,DBLP:conf/bmvc/LiuANZ19}.}.
\textbf{The formal definition of \ours\ is presented in \Cref{sec: formal define}}.

\subsection{Mathematical Intuition}
\ours\ originates from graph convolution, which is widely employed in graph neural networks~\cite{kipf2017semisupervised,gilmer2017mpnn,velickovic2018graph}. 

Specifically, graph convolution will concentrate all the embeddings of similar nodes which might lead to the concentration of similarity~\cite{10.1214/aop/1176988291} and the data degeneration problem ~\cite{baranwal2023effects}. 
On the other side, a similar operation, average pooling, has been employed in computer vision~\cite{DBLP:conf/cvpr/HeZRS16,wang_video-text_2023}. 
Average pooling will aggregate the features that are location-based similar\footnote{The details of graph convolution and average pooling discussed in this study are deferred to the \Cref{sec:graphconv,sec:averagepool}, respectively.}.

As a result, graph convolution and average pooling concentrate the representation of all the similar nodes and force them to become very similar to each other, potentially leading to \problem. 
This observation inspires us to pose the following research question:
\begin{center}
{\textit{Can the issue of \problem\ in cross-modal retrieval be alleviated by conducting inverse graph convolution (\ours)?}}
\end{center}
To answer this question, we first give an inverse variant of graph convolution as follows, 
\begin{equation}\label{eq:invconv}
\mathbf{x_i}^\prime = \mathbf{x_i} - \sum_{j} A_{ij} \mathbf{x_j}\,,
\end{equation}
where $A$ is the adjacency matrix for the data in the gallery set. 
Since the adjacency matrix is not available, to encourage separating the nearest neighbor in terms of the similarity score of each node, we choose $A_{ij} = \operatorname{sim}(i,j) \coloneqq S_{ij}$ with detailed discussion in \Cref{Sec adj}. 
Therefore, based on the inverse graph convolution, we propose the basic setting of \ours\ as shown in \Cref{eq:invconvwithr}, which only considers one modality. 
We notice that it is able to alleviate the \problem\ as shown in \Cref{fig: datadeg}.

Note that the ultimate goal of \ours\ is to reduce $\Delta_{deg}$ score of the distribution of the representation of the gallery instead of merely the gallery set $G$, which can be regarded only as a sampled subset of the distribution with very limited size in practice. 
Therefore, the best approximation of the distribution is the training (or validation) gallery set $\hat{G}$ since it is the largest one we can obtain\footnote{In practice, the test queries are invisible to each other as the queries do not come at the same time. So the size of the query set $N_g$ is equal to 1.}. 

Similarly, the distribution of the query set $\hat{Q}$ is theoretically expected to be similar to that of $\hat{G}$ as a basic assumption in machine learning~\cite{DBLP:conf/cvpr/BogolinCJLA22}. 
A detailed explanation of the claims is included in \Cref{sec: distribution}. 
Moreover, as CMR needs to contrast the data points from both modalities, we utilize the (train or validation) gallery set $\hat{G}$ and the (train or validation) query set $\hat{Q}$ to better estimate the hidden distribution as shown in \Cref{eq:invconvpttp}, 
\begin{equation}\label{eq:invconvpttp}
\mathbf{x_i}^\prime =  \mathbf{x_i} - r_g \sum_{\mathbf{x_j} \in \hat{G}} S^g_{ij} \mathbf{x_j} - r_q \sum_{\mathbf{x_j} \in \hat{Q}} S^q_{ij} \mathbf{x_j}\,.
\end{equation}
where $r_g$ and $r_q$ are two hyperparameters, $\mathcal{S}^{g} \in \mathbb{R}^{N_g \times N_{\hat{G}}}$ and $\mathcal{S}^{q} \in \mathbb{R}^{N_q \times N_{\hat{Q}}}$ is the adjacency matrices between every pair of embedding from $G$ and $\hat{G}$ and that from $Q$ and $\hat{Q}$, respectively.

{To the best of our knowledge, \ours\ is the first to utilize the (inverse) graph convolution for separating the representation of data. Instead of the commonly used capability of aggregation and message passing, we introduce an inverse variant of convolution that separates data representations compared to the vanilla graph convolution.}

\subsection{Constructing Adjacency matrix}\label{Sec adj}

Next, we need to establish the graph structure, \ie, build the adjacency matrix $\mathcal{S}^{g}$ and $\mathcal{S}^{q}$, since there is no natural graph structure in the dataset. 
The simplest idea will be that the edge weight between $i$ and $j$ equals 1, \ie, $S_{ij} = 1$, if the cosine similarity between $\mathbf{x_i}$ and $\mathbf{x_j}$, \ie, $\operatorname{sim}(\mathbf{x_i}, \mathbf{x_j})$, is larger than 0 (or some thresholds). 
\ours\ with this form of the adjacency matrix is an inverse variant of average pooling. It serves as a baseline in this study, denoted as \pool. 

However, this scheme is not capable to reflect the degree of the similarity between $\mathbf{x_i}$ and $\mathbf{x_j}$ since it cannot precisely depict the relation between different data points. 

As the magnitude of $S_{ij}$ directly controls how far will $\mathbf{x_i}$ go in the opposite direction of $\mathbf{x_j}$, a greedy design will be using the similarity score between them as the edge weight, \ie, $S_{ij}= \operatorname{sim}(\mathbf{x_i}, \mathbf{x_j})$.

Therefore, we can calculate the adjacency matrix $\mathcal{S}^{g}$, which contains the similarity score between every pair of embedding from $G$ and $\hat{G}$, respectively. 
Specifically, the $(i,j)$-entry of $\mathcal{S}^{g}$ follows,
\begin{equation} \label{eq:sim1}
\mathcal{S}^{g}_{i,j} = \operatorname{sim}(\mathbf{g_i}, \mathbf{\hat{g}_j})\,.
\end{equation}
Similarly, the matrix $\mathcal{S}^{q}$ containing the similarity score between every pair of embedding from $G$ and $\hat{Q}$ is calculated as follows,
\begin{equation} \label{eq:sim2}
    \mathcal{S}^{q}_{i,j} = \operatorname{sim}(\mathbf{g_i}, \mathbf{\hat{q}_j})\,.
\end{equation}
Now, with the well-defined $\mathcal{S}^{q}$ and $\mathcal{S}^{g}$, we can finally perform \ours\ to alleviate \problem.

As shown in~\Cref{the: 1}, given any data point, the similarity of the nearest neighbor in the representation space is critical to retrieval performance. 
Inspired by this, when performing the inverse convolution on a node, we force \ours\ to pay attention to those most similar nodes to it. 
This can be achieved by assigning edge weight only to the nodes having the top $k$ percent of the largest similarity scores relative to the given node.
Specifically, each entry of $\mathcal{S}^{g}$ and $\mathcal{S}^{q}$ in this case is calculated as follows,
\begin{align} \label{eq:sim3}
\small
\mathcal{S}^{g}(i,j)= \begin{cases}
\operatorname{sim}(\mathbf{g_i}, \mathbf{\hat{g}_j})&\text{, if}\ \operatorname{sim}(\mathbf{g_i}, \mathbf{\hat{g}_j}) \geq P_i(\hat{G},k)\\
0\,&\text{,\,else}
\end{cases}\,
\end{align}
\begin{align} \label{eq:sim4}
\small
\mathcal{S}^{q}(i,j)= \begin{cases}
\operatorname{sim}(\mathbf{q_i}, \mathbf{\hat{q}_j})&\text{, if} \operatorname{sim}(\mathbf{q_i}, \mathbf{\hat{q}_j}) \geq P_i(\hat{Q},k)\\
0\,&\text{,\,else}
\end{cases}\,
\end{align}

where $P_i(\hat{G},k)$ is the value of $k$-percentage largest similarity between node $i$ and all the nodes in $\hat{G}$. The same approach applies to $P_i(\hat{Q},k)$ as well.
We denote this refined adjacency matrix as \tail\ since it focuses on the local neighborhoods of each node.

\subsection{Formal Definition of \ours}\label{sec: formal define}

With the well-formed adjacency matrices , we formally define \ours\ to obtain the updated embedding of $G$, denoted as $G^\prime$, in a matrix form as,
\begin{equation} \label{eq:formaldef}
\small
     G^\prime =   \frac{1}{2}\left[ \operatorname{norm}(G -r_g \mathcal{S}^{g} \hat{G}) + \operatorname{norm}(G -r_q \mathcal{S}^{q} \hat{Q})\right]\,,
\end{equation}
where $\operatorname{norm}(\cdot)$ normalizes a matrix with respect to rows, which is employed for uniformly distributing the intensity of convolution when cosine similarity is used and should be removed when adopting other similarity metrics and the adjacency matrices $\mathcal{S}^{g}$ and $\mathcal{S}^{q}$ can be calculated as \Cref{eq:sim1,eq:sim2,eq:sim3}. 
Note that, compared to \Cref{eq:invconvpttp}, we separate the convolution on $\hat{G}$ and $\hat{Q}$ to pursue more robust results, for avoiding the distribution shifts between $\hat{G}$ and $\hat{Q}$ due to the imperfection of the representation method. 

In summary, \ours, as shown in \Cref{eq:formaldef}, is a brand new type of graph operation performed on the specifically designed graph structures (adjacency matrix) of data points in the cross-modal dataset. It helps alleviate the \problem\ by separating data points located close to each other in the representation space.

After obtaining $G^{\prime}$, the similarity between the $i$-th gallery points $\mathbf{g}^{\prime}$ and a query point $\mathbf{q}$ is calculated as $sim(\mathbf{q}, \mathbf{g}^{\prime})$.

\begin{table*}[t!]
\centering
\caption{\textbf{Similarity measures within gallery set (MeanSim@1 equivalent to $\Delta_{deg}(G)$) on MSCOCO.} 
``MeanSim'' and ``MeanSim@k'' refer to the mean of the similarity between all data points and that between all data points and their top-k neighbors.
}
\label{tab: sim measure intra}
\resizebox{\textwidth}{!}{%
\begin{tabular}{l|ccc|ccc}
\toprule
& \multicolumn{3}{c|}{Text-to-Video Retrieval}                                                 & \multicolumn{3}{c}{Video-to-Text Retrieval}                                                 \\
& MeanSim $\downarrow$& MeanSim@1$\downarrow$ & MeanSim@10$\downarrow$ & MeanSim $\downarrow$& MeanSim@1$\downarrow$ & MeanSim@10$\downarrow$\\
\midrule
CLIP & 0.4717 & 0.8211 & 0.7803 & 0.5162 & 0.9029 & 0.8596  \\
\midrule
\rowcolor{red!10} CLIP w. \ours\  & 0.4693 & 0.8142 & 0.7738 & 0.5137 & 0.8972 & 0.8542        \\
Difference to the baseline (\%) & 0.51 $\downarrow$  & 0.84 $\downarrow$  & 0.83 $\downarrow$  & 0.48 $\downarrow$  & 0.63 $\downarrow$  & 0.63  $\downarrow$                 \\
\midrule 
\rowcolor{red!10} CLIP w. \fcut\  & 0.4646 & 0.8059 & 0.7647 & 0.5105 & 0.8924 & 0.8477        \\
Difference to the baseline (\%) & 1.51 $\downarrow$  & 1.85  $\downarrow$ & 2.00  $\downarrow$ & 1.09 $\downarrow$  & 1.16 $\downarrow$  & 1.38   $\downarrow$                \\
\bottomrule
\end{tabular}%
}
\end{table*}

\begin{table*}[ht!]
\centering
\caption{\textbf{Similarity measures between test gallery and query set on MSCOCO}. The nearest neighbor is considered with respect to the gallery set.}
\label{tab: sim measure inter}
\resizebox{\textwidth}{!}{%
\begin{tabular}{l|ccc|ccc}
\toprule
& \multicolumn{3}{c|}{Text-to-Video Retrieval}                                                 & \multicolumn{3}{c}{Video-to-Text Retrieval}                                                 \\
& MeanSim$\downarrow$ & MeanSim@1 $\downarrow$& MeanSim@10$\downarrow$ & MeanSim$\downarrow$ & MeanSim@1$\downarrow$ & MeanSim@10$\downarrow$\\
\midrule
CLIP & 0.1516 & 0.3282 & 0.3138 & 0.1516 & 0.3213 & 0.3009 \\
\midrule
\rowcolor{red!10} CLIP w. \ours\  & 0.1500 & 0.3245 & 0.3101  & 0.1511 & 0.3198 & 0.2994        \\

Difference to the baseline(\%) & 1.06$\downarrow$ & 1.13$\downarrow$ & 1.18$\downarrow$ & 0.33 $\downarrow$& 0.47$\downarrow$ & 0.50 $\downarrow$                  \\
\midrule
\rowcolor{red!10} CLIP w. \fcut\  & 0.0635 & 0.2214 & 0.2035  & 0.0921 & 0.2414 & 0.2208       \\
Difference to the baseline(\%) & 58.11 $\downarrow$ & 32.54 $\downarrow$& 35.15 $\downarrow$& 39.25 $\downarrow$& 24.87 $\downarrow$& 26.62 $\downarrow$ \\
\bottomrule
\end{tabular}%
}
\end{table*}

\section{Experiments}

We conduct a series of experiments to demonstrate that \ours\ can efficiently alleviate the \problem\ in a post-hoc manner.
We compare \ours\ and \fcut\ with the baseline performance produced by the representation model adopted. 
We also introduce the inverse version of average pooling, namely \pool, as another baseline. 
A series of ablation studies indicate that \ours\ addresses the \problem\ by reducing the similarity between points in the gallery set and is not sensitive to hyperparameters and the amount of training data.

\subsection{Experimental and implementation settings} \label{sec:setting}

The implementation of \ours\ exactly follows \Cref{eq:formaldef} with the adjacency matrix defined in \Cref{eq:sim1,eq:sim2} and a separate pair of tuned hyperparameters $r_g$ and $r_q$ for each retrieval model of each dataset. To balance the edge weight in $\mathcal{S}^{q}$ and $\mathcal{S}^{g}$ and make sure the scale of weights is stable, they subtract their average edge weights, respectively.

The only difference between \fcut\ and \ours\ is the adjacency matrix applied. Instead of the ones from \Cref{eq:sim1,eq:sim2}, we apply $\mathcal{S}^{q}$ and $\mathcal{S}^{g}$ defined in \Cref{eq:sim3,eq:sim4}. We choose the value of $k$ to be 1 (\ie, top 1\% nearest neighbors) throughout the study while the proposed method is very robust to the value of the $k$. There is a large range of reasonable values that can boost the performance, proved by the ablation study in \Cref{sec: ablation}.

\pool\ is the simplest variant of inverse graph convolution with binary adjacency matrix where the edge weight between a pair of nodes is 1 if they are neighbors, or 0 otherwise. 
Then, following the approach in \tail, we update the embeddings using only the top $p$ percent of nearest neighbors.
To provide a more comprehensive benchmark, we pick four values of $p$ from $10\%$ to $100\%$. Note that we also independently tune $r_g$ and $r_q$ for \pool\ to make sure the evaluation is fair. The comparison with \pool\ validates not only the effectiveness of \ours\ but also the importance of the similarity-based adjacency matrix. The detailed setting is included in \Cref{sec:pool}.

\subsection{Datasets and evaluation metrics}\label{sec:data}
While we mainly focus our experiments on standard benchmarks for text-image retrieval, \ie, MSCOCO~\cite{DBLP:conf/eccv/LinMBHPRDZ14} and Flickr30k~\cite{DBLP:journals/ijcv/PlummerWCCHL17}, we also explore the generalization of \ours\ by selecting four text-video retrieval benchmarks (MSR-VTT~\cite{DBLP:conf/cvpr/XuMYR16}, MSVD~\cite{chen-dolan-2011-collecting}, Didemo~\cite{hendricks_localizing_2017}, and ActivityNet~\cite{caba2015activitynet}) and two text-audio retrieval benchmark (AudioCaps~\cite{kim-etal-2019-audiocaps} and CLOTHO~\cite{drossos_clotho_2020}). 
The details of the benchmarks are deferred to \Cref{sec:dateapp}.

To evaluate the retrieval performance of \ours, we use recall at Rank K (R@K, higher is better), median rank (MdR, lower is better), and mean rank (MnR, lower is better) as retrieval metrics, which are widely used in previous retrieval works~\cite{DBLP:journals/ijon/LuoJZCLDL22,DBLP:conf/mm/MaXSYZJ22,DBLP:conf/icml/RadfordKHRGASAM21}.

\subsection{\ours\ and \fcut} \label{sec: ablation}
In this section, to answer a series of questions relating to \ours\ and \fcut, we investigate its performance on MSCOCO with CLIP given different settings. Due to space limitations, discussions of the sensitivity of \fcut\ to $k$ and the complexity of both methods are included in the Appendix.

\textbf{RQ1: Is the data degeneration problem alleviated?} 
This problem can be firstly explained by the change of the similarity measure within the test gallery set $G$. 
As presented in \Cref{tab: sim measure intra}, we collect the mean similarity of the gallery set of both tasks for three scenarios, the overall mean (MeanSim), the mean between the nearest neighbor(MeanSim@1), and the mean between nearest 10 neighbors (MeanSim@10). Note that MeanSim@1 is strictly equivalent to $\Delta_{deg}(G)$.
It is very clear that \ours\ and \fcut\ do reduce the similarity on all accounts, especially MeanSim@1, indicating a targeted ability to alleviate the data degeneration issue.

Besides, given the assumption that the distribution of the test query set $Q$ is close to that of $G$, the similarity score of the gallery set to the query set for both tasks is also worth exploring. Since any point in $G$ should theoretically share an identical representation with its corresponding query in $Q$ and we try to maximize the similarity between them, we exclude this similarity score between them. Consequently, we want the similarity to be as small as possible since it reflects the margin of the retrieval task, as a gallery point is supposed to be as far from the irrelevant queries as possible to have a more robust result. We adopt the same metrics as \Cref{tab: sim measure intra}, and the results are presented in \Cref{tab: sim measure inter}. 
Again, we observe a comprehensive decrease in the similarity score, especially between the nearest neighbors. Note that, compared to \ours, \fcut\ can better address the \problem, validating that the design of \tail\ does help alleviate the problem by focusing more on the local information. 
Not for MSCOCO alone, we witness exactly similar results across all the datasets and methods, whose detail is included in \textbf{Continuation on RQ1} \Cref{sec:more_ablt}.

\begin{figure}[t!]
    \centering
    \subcaptionbox{Image-to-Text R@1 w.r.t $r_g$}{\includegraphics[width=0.235\textwidth]{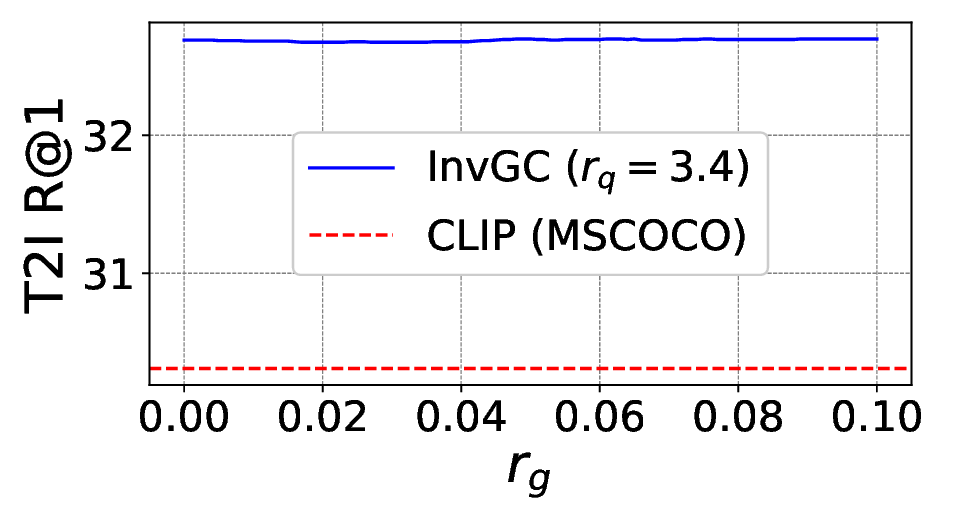}}
    \subcaptionbox{Image-to-Text R@1 w.r.t $r_q$}{\includegraphics[width=0.235\textwidth]{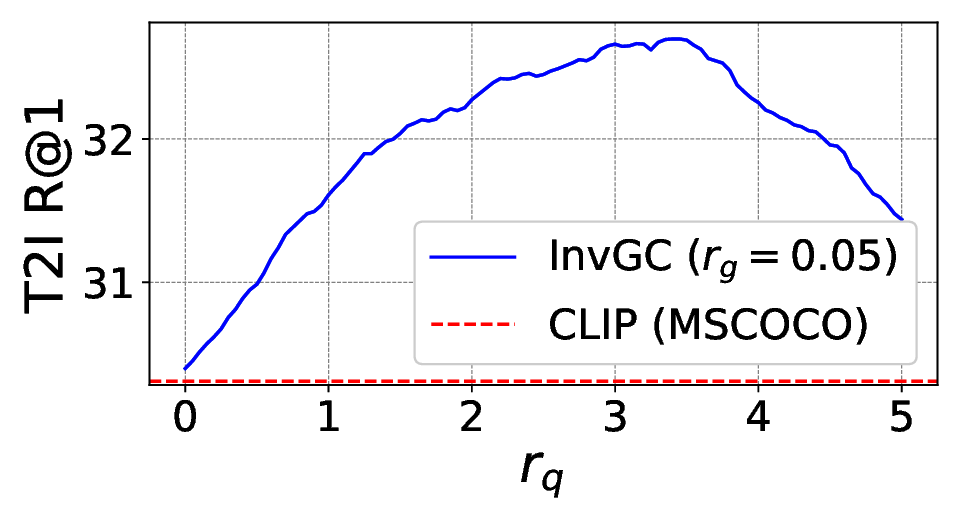}}
    
    \subcaptionbox{Text-to-Image R@1 w.r.t $r_g$}{\includegraphics[width=0.235\textwidth]{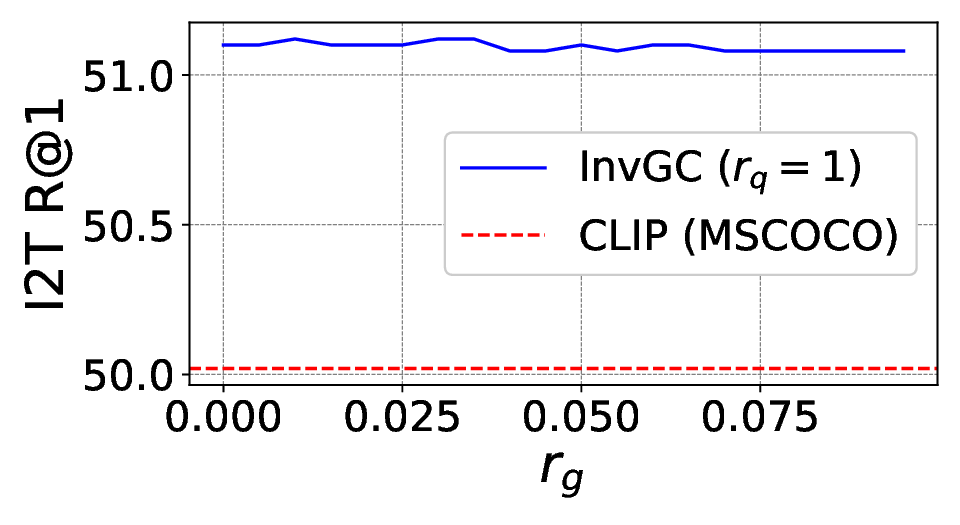}}
    \subcaptionbox{Text-to-Image R@1 w.r.t $r_q$}{\includegraphics[width=0.235\textwidth]{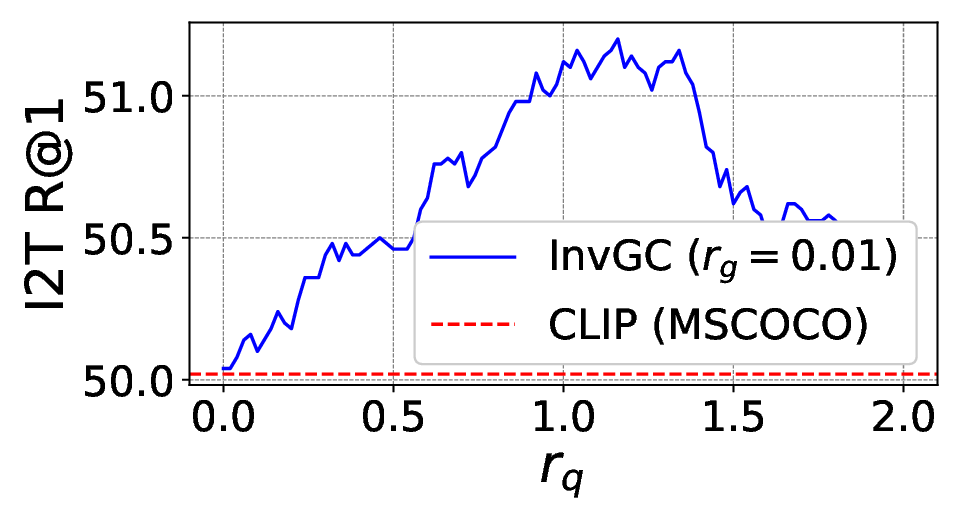}}
    \caption{Hyperpapameters sensitivity of \ours\ on MSCOCO.
    }
    \label{fig: hyper sen}
\end{figure}

\begin{figure}[t!]
    \centering
    \subcaptionbox{Image-to-Text R@1 w.r.t the size of data we use.}{\includegraphics[width=0.235\textwidth]{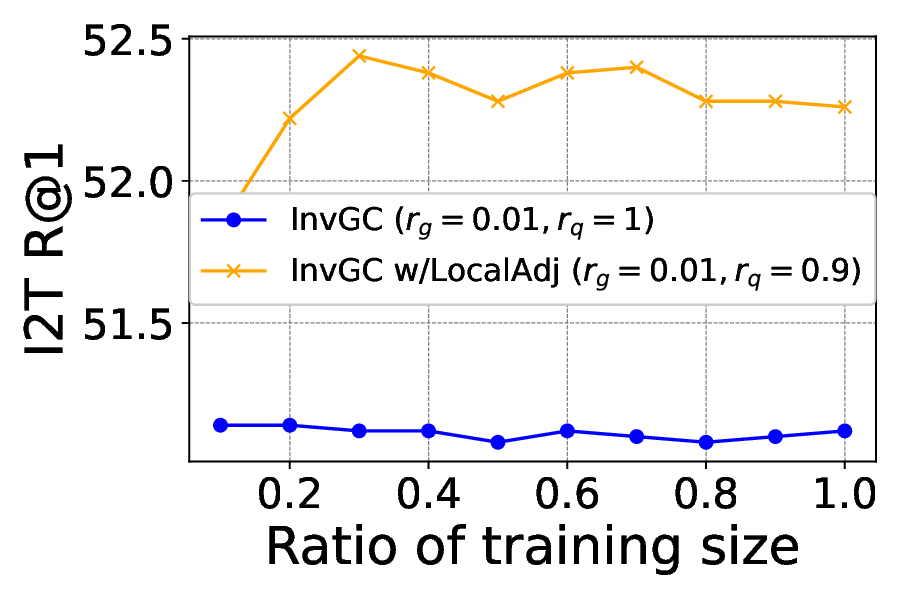}}
    \subcaptionbox{Text-to-Image R@1 w.r.t the size of data we use.}{\includegraphics[width=0.235\textwidth]{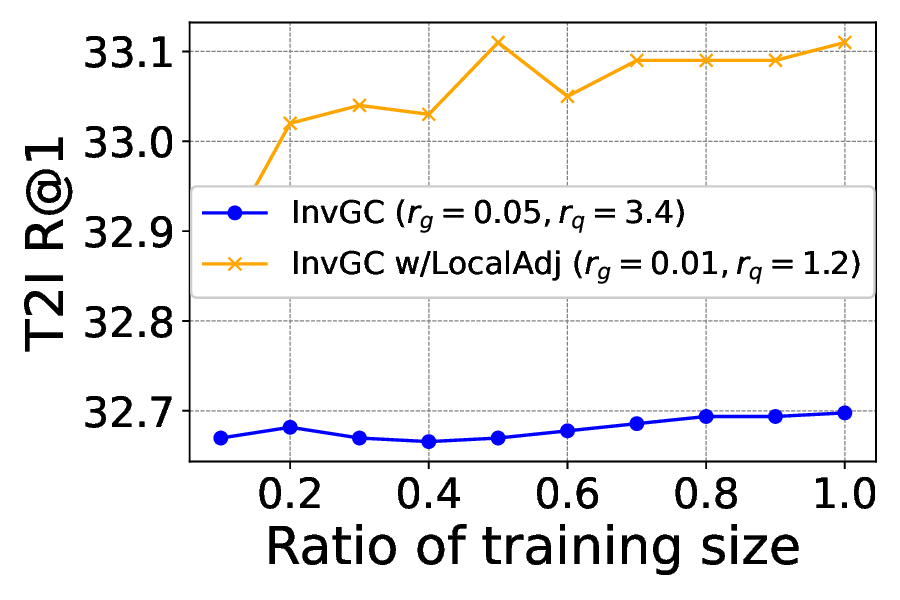}}
    \caption{Image-text retrieval performance w.r.t the size on
MSCOCO.}
    \label{fig: train rate}
\end{figure}

\textbf{RQ2: Is \ours\ (or \fcut) sensitive to the hyperparameter $r_g$ and $r_q$?}
To answer the question, we evaluate the R@1 metrics of \ours\ with a very large range of hyperparameters respective to the optimal choice adopted. We defer the analysis of \fcut\ to \Cref{sec:more_ablt}. For each task, we fix one of $r_g$ or $r_q$ and tune the other to show the change on the R@1 metrics. 
The results of the MSCOCO dataset are shown in \Cref{fig: hyper sen}. Although subject to some variation, \ours\ constantly outperforms the baseline, which is presented as the red dashed line. This indicates that the proposed method can consistently improve performance with a very large range of parameters.

\textbf{RQ3: How much data is needed for both proposed methods?}
Since we use the training query and gallery set as a sampled subset from the hidden distribution of representation, it is quite intuitive to ask if the performance of \ours\ or \fcut\ is sensitive to the size of this sampled subset. Therefore, we uniformly sample different ratios of data from both the training query and gallery set at the same time and evaluate the performance of both methods with the identical hyperparameters, as presented in \Cref{fig: train rate}.
From the results, we conclude that both proposed methods perform stably and robustly regardless of the size of the training data.

\textbf{RQ4: Is \fcut\ sensitive to the hyperparameter $k$?}
To address the question, we evaluate the R@1 metrics of \fcut\ with a very large range of possible $k$ values (even in logarithmic scale) compared to the optimal choice adopted(\ie,$k=1$). For each task, we fix everything except the $k$ value.The results of MSCOCO dataset are shown in \Cref{fig: k_value}. Compared with the baseline, it is safe to claim that the proposed \fcut\ is very robust to the choice of $k$. 

\begin{figure}[ht!]
    \centering
    \subcaptionbox{Image-to-Text R@1 w.r.t the value of $k$ we use.}{\includegraphics[width=0.235\textwidth]{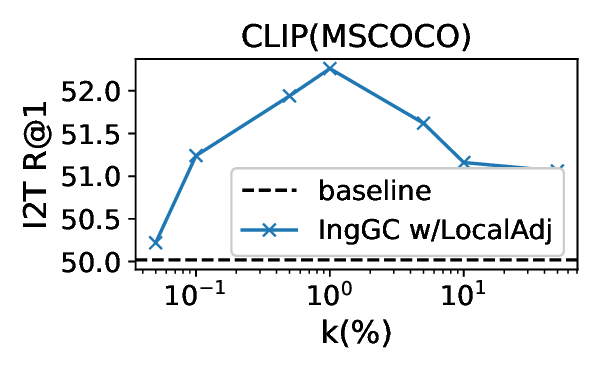}}
    \subcaptionbox{Text-to-Image R@1 w.r.t the value of $k$ we use.}{\includegraphics[width=0.235\textwidth]{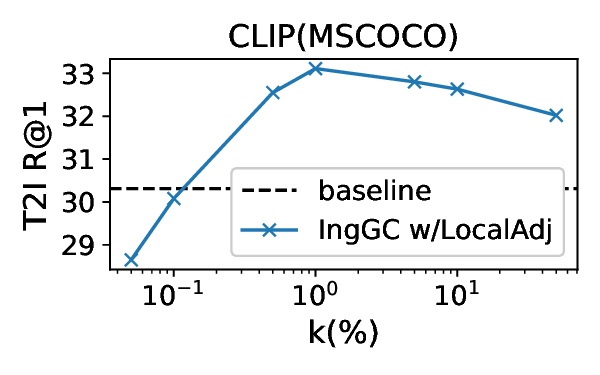}}
    \caption{The value of $k$ of \fcut\ on MSCOCO.}
    \label{fig: k_value}
\end{figure}

\subsection{Quantative results}

In this section, we present the quantitative results of four cross-modal retrieval benchmarks. Across eight different benchmarks, \ours\ and \fcut\ significantly improve upon the baselines, demonstrating the superiority of our proposed methods.

\begin{table}[t!]
\centering
\caption{Retrieval performance on MSCOCO (5k split). Best in \textbf{Bold} and the second best is \underline{underlined}. }
\label{tab: quan: coco_t2v}
\resizebox{\columnwidth}{!}{%
\begin{tabular}{ll|ccccc}
\toprule
                       & \multirow{2}{*}{Normalization}      & \multicolumn{5}{c}{Text-to-Image Retrieval}                                                   \\
                        &                                & R@1 $\uparrow$ & R@5 $\uparrow$ & R@10 $\uparrow$ & MdR $\downarrow$ & MnR  $\downarrow$ \\ \midrule
\multirow{2}{*}{CLIP}  &                     & 30.34          & 54.74          & 66.08          & 4.0 & 25.39                \\
 & +\pool (ratio=0.1)  & 30.37          & 54.77          & 66.14          & 4.0 & {\ul 25.36}           \\
 & +\pool (ratio=0.25) & 30.37          & 54.77          & 66.14          & 4.0 & {\ul 25.36}            \\
 & +\pool (ratio=0.5)  & 30.38          & 54.77          & 66.10          & 4.0 & 25.38                 \\
 & +\pool (ratio=1)    & 30.39          & 54.77          & 66.11          & 4.0 & 25.38                 \\
 \rowcolor{red!10}& +\ours              & {\ul 32.70}    & {\ul 57.53}    & \textbf{68.24} & 4.0 & \textbf{24.35}     \\
 \rowcolor{red!10}& +\fcut        & \textbf{33.11} & \textbf{57.49} & {\ul 68.19}    & 4.0 & 28.95       \\ \midrule
\multirow{2}{*}{Oscar}  &                     & 52.50          & 80.03          & \textbf{87.96} & 1.0 & {\ul 10.68}           \\
 & +\pool (ratio=0.1)  & 52.52          & {\ul 80.04}    & {\ul 87.95}    & 1.0    & 10.70    \\
 & +\pool (ratio=0.25) & 52.52          & 80.03          & \textbf{87.96} & 1.0 & {\ul 10.68}        \\
 & +\pool (ratio=0.5)  & 52.51          & 80.00          & \textbf{87.96} & 1.0 & \textbf{10.67}        \\
 & +\pool (ratio=1)    & 52.50          & 80.02          & \textbf{87.96} & 1.0 & {\ul 10.68}       \\
\rowcolor{red!10} & +\ours              & {\ul 52.63}    & \textbf{80.05} & \textbf{87.96} & 1.0 & 10.72      \\
 \rowcolor{red!10}& +\fcut        & \textbf{52.93} & \textbf{80.05} & 87.78          & 1.0 & 11.09     \\ 
\bottomrule
\end{tabular}%
}
\end{table}

\begin{table}[t!]
\centering
\caption{Retrieval performance on Flickr30k. Best in \textbf{Bold} and the second best is \underline{underlined}.}
\label{tab: quan: f30k_t2v}
\resizebox{\columnwidth}{!}{%
\begin{tabular}{ll|ccccc}
\toprule
                       & \multirow{2}{*}{Normalization}      & \multicolumn{5}{c}{Text-to-Image Retrieval}                                                     \\
                        &                                & R@1 $\uparrow$ & R@5 $\uparrow$ & R@10 $\uparrow$ & MdR $\downarrow$ & MnR  $\downarrow$ \\ \midrule
\multirow{2}{*}{CLIP}   &                     & 58.98          & 83.48          & 90.14          & 1.0 & 6.04         \\
 & +\pool (ratio=0.1)  & 59.10          & \textbf{83.56} & {\ul 90.18}    & 1.0 & 6.04      \\
 & +\pool (ratio=0.25) & 59.10          & \textbf{83.56} & {\ul 90.18}    & 1.0 & 6.04       \\
 & +\pool (ratio=0.5)  & 59.10          & {\ul 83.54}    & {\ul 90.18}    & 1.0 & 6.05       \\
 & +\pool (ratio=1)    & 59.10          & {\ul 83.54}    & {\ul 90.18}    & 1.0 & 6.04      \\
 \rowcolor{red!10}& +\ours              & {\ul 60.18}    & 85.30          & \textbf{91.20} & 1.0 & \textbf{5.52}    \\
 \rowcolor{red!10}& +\fcut       & \textbf{60.48} & 85.30          & 91.10          & 1.0 & {\ul 5.59}   \\ \midrule
\multirow{2}{*}{Oscar} &                                & 71.60          & 91.50          & 94.96          & 1.0 & \textbf{4.24}               \\

  & +\pool (ratio=0.1)  & 71.62          & 91.44          & 94.92          & 1.0 & {\ul 4.25}           \\
 & +\pool (ratio=0.25) & 71.66          & 91.50          & 94.94          & 1.0 & \textbf{4.24}         \\
 & +\pool (ratio=0.5)  & 71.66          & 91.50          & 94.92          & 1.0 & \textbf{4.24}     \\
 & +\pool (ratio=1)    & 71.62          & {\ul 91.52}    & 94.96          & 1.0 & \textbf{4.24}  \\
 \rowcolor{red!10}& +\ours              & {\ul 71.68}    & 91.46          & \textbf{95.06} & 1.0 & {\ul 4.25}            \\
 \rowcolor{red!10}& +\fcut        & \textbf{71.74} & \textbf{91.56} & {\ul 94.98}    & 1.0 & 4.29    \\
\bottomrule
\end{tabular}%
}
\end{table}

\begin{table}[t!]
\centering
\caption{Retrieval performance on MSR-VTT (full split and 1k split). Best in \textbf{Bold} and the second best is \underline{underlined}. 
}{}
\label{tab: quan: msrvtt_t2v}
\resizebox{\columnwidth}{!}{%
\begin{tabular}{ll|ccccc}
\toprule
          &   \multirow{2}{*}{Normalization}          & \multicolumn{5}{c}{Text-to-Video Retrieval}         \\
            &          & R@1   & R@5   & R@10  & MdR & MnR     \\ \midrule

\multirow{2}{*}{CLIP4Clip} &                     & 44.10          & {\ul 71.70}    & 81.40          & 2.0 & {\ul 15.51}      \\
 & +\pool (ratio=0.1)  & {\ul 44.20}    & 71.60          & {\ul 81.50}    & 2.0 & 15.55                \\
 & +\pool (ratio=0.25) & {\ul 44.20}    & 71.60          & {\ul 81.50}    & 2.0 & 15.55                 \\
 & +\pool (ratio=0.5)  & {\ul 44.20}    & 71.50          & {\ul 81.50}    & 2.0 & 15.54                   \\
 & +\pool (ratio=1)    & 44.10          & 71.60          & {\ul 81.50}    & 2.0 & 15.52                \\
 \rowcolor{red!10}& +\ours              & \textbf{44.40} & \textbf{71.90} & \textbf{81.60} & 2.0 & \textbf{15.36}  \\
 \rowcolor{red!10}& +\fcut        & \textbf{44.40} & {\ul 71.70}    & 81.20          & 2.0 & 15.65        \\\midrule
\multirow{2}{*}{CLIP2Video} 
 &                     & 46.00          & 71.60          & {\ul 81.60}    & 2.0 & \multicolumn{1}{c}{14.51}             \\
 & +\pool (ratio=0.1)  & 45.90          & 71.70          & 81.50          & 2.0 & 14.52                  \\
 & +\pool (ratio=0.25) & 46.10          & {\ul 71.80}    & 81.50          & 2.0 & 14.53                  \\
 & +\pool (ratio=0.5)  & 46.00          & 71.70          & 81.50          & 2.0 & 14.53                 \\
 & +\pool (ratio=1)    & 45.90          & 71.70          & 81.50          & 2.0 & 14.52                   \\
 \rowcolor{red!10}& +\ours              & {\ul 46.20}    & 71.70          & 81.30          & 2.0 & \textbf{14.44}  \\
 \rowcolor{red!10}& +\fcut        & \textbf{46.60} & \textbf{72.10} & \textbf{81.70} & 2.0 & {\ul 14.50}          \\ \midrule
\multirow{2}{*}{X-CLIP}   &                     & 46.30          & {\ul 74.00}    & {\ul 83.40}    & 2.0 & \multicolumn{1}{c}{\textbf{12.80}}         \\
 & +\pool (ratio=0.1)  & 46.50          & {\ul 74.00}    & {\ul 83.40}    & 2.0 & 12.88                  \\
 & +\pool (ratio=0.25) & 46.40          & {\ul 74.00}    & \textbf{83.50} & 2.0 & 12.85                  \\
 & +\pool (ratio=0.5)  & 46.30          & {\ul 74.00}    & 83.30          & 2.0 & {\ul 12.83}       \\
 & +\pool (ratio=1)    & 46.20          & {\ul 74.00}    & {\ul 83.40}    & 2.0 & {\ul 12.83}     \\
 \rowcolor{red!10}& +\ours              & \textbf{47.30} & {\ul 74.00}    & 83.30          & 2.0 & 13.42                  \\
 \rowcolor{red!10}& +\fcut        & {\ul 47.10}    & \textbf{74.20} & \textbf{83.50} & 2.0 & 13.09          \\
\bottomrule
\end{tabular}%
 }
\end{table}

\begin{table}[t!]
\centering
\caption{Retrieval performance on ActivityNet. Best in \textbf{Bold} and the second best is \underline{underlined}.}
\label{tab: quan: actnet_t2v}
\resizebox{\columnwidth}{!}{%
\begin{tabular}{ll|ccccc}
\toprule
                           & \multirow{2}{*}{Normalization} & \multicolumn{5}{c}{Text-to-Video Retrieval}                                                        \\
                           &                                & R@1            & R@5            & R@10           & MdR & MnR                      \\ \midrule
\multirow{2}{*}{CLIP4Clip}  &                     & 41.85          & 74.44          & 84.84          & 2.0 & {\ul 6.84}            \\
 & +\pool (ratio=0.1)  & 41.83          & {\ul 74.47}    & 84.84          & 2.0 & {\ul 6.84}             \\
 & +\pool (ratio=0.25) & 41.80          & 74.44          & 84.84          & 2.0 & {\ul 6.84}            \\
 & +\pool (ratio=0.5)  & 41.85          & 74.44          & 84.84          & 2.0 & {\ul 6.84}             \\
 & +\pool (ratio=1)    & 41.88          & 74.44          & 84.84          & 2.0 & {\ul 6.84}             \\
 \rowcolor{red!10}& +\ours              & {\ul 41.90}    & 74.40          & {\ul 84.86}    & 2.0 & {\ul 6.84}       \\
 \rowcolor{red!10}& +\fcut        & \textbf{43.23} & \textbf{75.58} & \textbf{85.74} & 2.0 & \textbf{6.82}\\\midrule
\multirow{2}{*}{X-CLIP}          &                     & 46.25          & {\ul 76.02}    & 86.05          & 2.0 & \multicolumn{1}{c}{{\ul 6.37}}        \\
 & +\pool (ratio=0.1)  & {\ul 46.47}    & 75.94          & 86.05          & 2.0 & 6.38                \\
 & +\pool (ratio=0.25) & 46.38          & 75.98          & 86.01          & 2.0 & 6.38                  \\
 & +\pool (ratio=0.5)  & {\ul 46.47}    & 75.94          & 86.01          & 2.0 & 6.38                  \\
 & +\pool (ratio=1)    & 46.43          & 75.89          & 86.01          & 2.0 & 6.38                 \\
\rowcolor{red!10} & +\ours              & 46.43          & 75.68          & {\ul 86.22}    & 2.0 & \textbf{6.35}    \\
 \rowcolor{red!10}& +\fcut        & \textbf{47.82} & \textbf{76.46} & \textbf{86.36} & 2.0 & 6.91 \\\bottomrule
\end{tabular}%
}
\end{table}

\begin{table}[h!]
\centering
\caption{Retrieval performance on MSVD. Best in \textbf{Bold} and the second best is \underline{underlined}.}
\label{tab: quan: mvd_t2v}
\resizebox{\columnwidth}{!}{%
\begin{tabular}{ll|ccccc}
\toprule
                            & \multirow{2}{*}{Normalization} & \multicolumn{5}{c}{Text-to-Video Retrieval}                                                       \\
                            &                                & R@1            & R@5            & R@10           & MdR & MnR            \\\midrule
\multirow{2}{*}{CLIP4Clip}   &                     & 44.64          & 74.66          & 83.99          & 2.0 & {\ul 10.32}           \\
 & +\pool (ratio=0.1)  & 44.87          & 73.89          & 83.07          & 2.0 & 11.93               \\
 & +\pool (ratio=0.25) & 45.06          & 74.04          & 83.49          & 2.0 & \textbf{11.29}       \\
 & +\pool (ratio=0.5)  & 45.12          & 74.32          & 83.66          & 2.0 & 10.76              \\
 & +\pool (ratio=1)    & 45.09          & 74.54          & 83.91          & 2.0 & 10.45             \\
 \rowcolor{red!10}& +\ours              & {\ul 45.43}    & {\ul 74.82}    & {\ul 84.00}    & 2.0 & 10.42        \\
 \rowcolor{red!10}& +\fcut        & \textbf{45.73} & \textbf{75.53} & \textbf{84.37} & 2.0 & 10.42       \\\midrule
\multirow{2}{*}{CLIP2Video}  &                     & 47.05          & 76.97          & 85.59          & 2.0 & {\ul 9.53}           \\
 & +\pool (ratio=0.1)  & 47.04          & 76.98          & 85.61          & 2.0 & 9.54             \\
 & +\pool (ratio=0.25) & 47.07          & 76.98          & 85.61          & 2.0 & 9.54              \\
 & +\pool (ratio=0.5)  & 47.06          & 76.98          & 85.62          & 2.0 & {\ul 9.53}        \\
 & +\pool (ratio=1)    & 47.06          & 76.97          & 85.61          & 2.0 & {\ul 9.53}          \\
 \rowcolor{red!10}& +\ours              & {\ul 47.09}    & {\ul 77.00}    & {\ul 85.64}    & 2.0 & \textbf{9.48}  \\
 \rowcolor{red!10}& +\fcut        & \textbf{47.47} & \textbf{77.46} & \textbf{85.84} & 2.0 & {\ul 9.53}    \\ \midrule
\multirow{2}{*}{X-CLIP}     &                     & 46.31          & \textbf{76.84} & {\ul 85.31}    & 2.0 & \textbf{9.59}         \\
 & +\pool (ratio=0.1)  & 46.36          & 76.70          & 85.28          & 2.0 & 9.66               \\
 & +\pool (ratio=0.25) & 46.33          & 76.71          & 85.28          & 2.0 & 9.64                \\
 & +\pool (ratio=0.5)  & 46.35          & 76.71          & 85.25          & 2.0 & 9.62               \\
 & +\pool (ratio=1)    & 46.41          & 76.77          & 85.27          & 2.0 & {\ul 9.60}          \\
 \rowcolor{red!10}& +\ours              & \textbf{46.82} & 76.69          & \textbf{85.38} & 2.0 & 9.63           \\
 \rowcolor{red!10}& +\fcut        & {\ul 46.49}    & {\ul 76.82}    & 85.29          & 2.0 & 9.63       \\ \bottomrule
\end{tabular}%
}
\end{table}

\begin{table}[h!]
\centering
\caption{Text-audio retrieval performance on CLOTHO. Best in \textbf{Bold} and the second best is \underline{underlined}. 
'*' refers to the results direct copied from \citep{koepke_audio_2022}. 
MoEE~\cite{miech_learning_2020}, MMT~\cite{DBLP:conf/eccv/Gabeur0AS20} are two methods used for audio-text retrieval.
}
\label{tab: quan: clotho_t2v}
\resizebox{\columnwidth}{!}{%
\begin{tabular}{ll|ccccc}
\toprule
\multirow{2}{*}{Method} & \multirow{2}{*}{Normalization}     & \multicolumn{5}{c}{Text-to-Audio Retrieval}  \\
                        &                                    & R@1 $\uparrow$    & R@5 $\uparrow$    & R@10 $\uparrow$   & MdR $\downarrow$   & MnR  $\downarrow$     \\\midrule
                        \textcolor{gray}{MoEE}*                                                &                                                     & \textcolor{gray}{6.00  }         & \textcolor{gray}{20.80  }        & \textcolor{gray}{32.30  }         & \textcolor{gray}{23.0   }          & \textcolor{gray}{60.20   }                            \\
\textcolor{gray}{MMT}*                                                 &                                                     & \textcolor{gray}{6.50     }      & \textcolor{gray}{21.60   }       & \textcolor{gray}{66.90 }          & \textcolor{gray}{23.0}             & \textcolor{gray}{67.70}                                      \\ \midrule
\multirow{2}{*}{AR-CE}  &                     & 6.27          & 22.32          & 33.30          & 23.0          & 58.96            \\
 & +\pool (ratio=0.1)  & 6.37          & {\ul 22.33}    & 33.59          & 23.0          & 65.57                   \\
 & +\pool (ratio=0.25) & 6.47          & 22.12          & 33.58          & {\ul 22.0}    & 59.84                   \\
 & +\pool (ratio=0.5)  & 6.39          & 22.24          & 33.30          & {\ul 22.0}    & 59.14             \\
 & +\pool (ratio=1)    & 6.28          & 22.32          & 33.30          & 23.0          & {\ul 58.95}     \\
 \rowcolor{red!10}& +\ours              & \textbf{6.81} & 22.14          & \textbf{34.72} & \textbf{21.0} & 61.98             \\
 \rowcolor{red!10}& +\fcut        & {\ul 6.58}    & \textbf{22.35} & {\ul 34.64}    & {\ul 22.0}    & \textbf{58.51}  \\ 
                        \bottomrule
\end{tabular}%
}
\end{table}

\begin{table}[h!]
\centering
\caption{
Text-audio retrieval performance on AudioCaps. Best in \textbf{Bold} and the second best is \underline{underlined}. 
'*' refers to the results direct copied from \citet{koepke_audio_2022}. 
MoEE~\cite{miech_learning_2020}, MMT~\cite{DBLP:conf/eccv/Gabeur0AS20} are two methods used for audio-text retrieval.
}
\label{tab: quan: audiocaps_t2v}
\resizebox{\columnwidth}{!}{%
\begin{tabular}{ll|ccccc}
\toprule
\multirow{2}{*}{Method} & \multirow{2}{*}{Normalization} & \multicolumn{5}{c}{Text-to-Audio Retrieval}                                                                                \\
                        &                                & R@1 $\uparrow$ & R@5 $\uparrow$ & R@10 $\uparrow$ & MdR $\downarrow$ & MnR  $\downarrow$  \\ \midrule
\textcolor{gray}{MoEE}*                                                &                                                     & \textcolor{gray}{23.00}          & \textcolor{gray}{55.70}          & \textcolor{gray}{71.00}           & \textcolor{gray}{4.0}              & \textcolor{gray}{16.30}                                              \\
\textcolor{gray}{MMT}*                                                 &                                                     & \textcolor{gray}{36.10}          & \textcolor{gray}{72.00}          & \textcolor{gray}{84.50}           & \textcolor{gray}{2.3}              & \textcolor{gray}{7.50}                                            \\ \midrule
\multirow{2}{*}{AR-CE}    &                     & {\ul 22.33}    & {\ul 54.49}    & {\ul 70.54}    & {\ul 5.0}    & \textbf{15.89}  \\
 & +\pool (ratio=0.1)  & {\ul 22.33}    & {\ul 54.49}    & {\ul 70.54}    & {\ul 5.0}    & \textbf{15.89}           \\
 & +\pool (ratio=0.25) & {\ul 22.33}    & {\ul 54.49}    & {\ul 70.54}    & {\ul 5.0}    & \textbf{15.89}           \\
 & +\pool (ratio=0.5)  & {\ul 22.33}    & {\ul 54.49}    & {\ul 70.54}    & {\ul 5.0}    & \textbf{15.89}           \\
 & +\pool (ratio=1)    & {\ul 22.33}    & {\ul 54.49}    & {\ul 70.54}    & {\ul 5.0}    & \textbf{15.89}   \\
 \rowcolor{red!10}& +\ours              & {\ul 22.33}    & 54.46          & \textbf{70.56} & {\ul 5.0}    & \textbf{15.89}          \\
 \rowcolor{red!10}& +\fcut        & \textbf{24.07} & \textbf{55.69} & 70.20          & \textbf{4.0} & {\ul 16.54}    \\ 
\bottomrule
\end{tabular}%
}
\end{table}

\textbf{Text-Image Retrieval.} 
Results are presented in \Cref{tab: quan: coco_t2v,tab: quan: f30k_t2v}. We observe that one of our methods achieves the best performance on R@1, R@5, and R@10 by a large margin.
When evaluated on the CLIP method, \fcut\ outperforms the baselines on both the MSCOCO and Flickr30k datasets, improving R@1 and R@5 by at least 2\% compared to all baselines.

\textbf{Text-Video Retrieval.} 
Results are presented in {\Cref{tab: quan: msrvtt_t2v,tab: quan: actnet_t2v,tab: quan: mvd_t2v}}. We can also conclude that one of our methods achieves the best performance on R@1, R@5, and R@10. Specifically, on the ActivityNet dataset,  \fcut\ shows excellent performance with both CLIP4CLIP and X-CLIP methods, significantly outperforming all the baselines on R@1 and R@5 roughly by 2\%.

\textbf{Text-Audio Retrieval.} 
Results are presented in \Cref{tab: quan: clotho_t2v,tab: quan: audiocaps_t2v}. On the CLOTHO dataset, \ours\ exhibits significantly better performance compared to all the baselines while \fcut\ achieves the best results on the AudioCaps dataset.

In summary, our experiments demonstrate that employing \ours\ consistently improves retrieval performance across different datasets and retrieval tasks. The models with \ours\ demonstrate better accuracy and ranking in retrieving relevant videos, images, and textual descriptions based on given queries. The complete results of retrieval performance can be found in \Cref{sec:more exp}.

\section{Conclusion}
This paper addressed \problem\ in cross-modal retrieval, which led to a decrease in retrieval performance. 
The
\problem\ was validated across multiple benchmarks and methods. 
To alleviate this issue, we proposed a novel method called \ours, inspired by graph convolution and average pooling. 
The method established a graph topology structure within the datasets and applied graph convolution in an inverse form with subtraction over the neighborhood. Additionally, we designed the adjacency matrix, \tail, that only leveraged the nearest neighbors of each data point rather than the entire dataset, resulting in a more effective and efficient method, \fcut.
Both \ours\ and \fcut\ were validated through theoretical analysis and demonstrated their ability to separate representations. Finally, extensive experiments on various cross-modal benchmarks showed that both of our methods successfully alleviated the problem of representation degeneration and, as a result, improved retrieval performance.

\section*{Limitations}
First, although \ours\ has been validated through theoretical analysis and demonstrated its efficacy in separating representations, its performance may vary across different datasets and modalities. 
The effectiveness of our method might be influenced by variations in dataset characteristics, such as data distribution, scale, and complexity. 
Further investigation and experimentation on a wider range of datasets are needed to fully understand the generalizability of \ours\ and its performance under diverse conditions.

Second, while our method shows promising results in alleviating the \problem, it is worth noting that cross-modal retrieval tasks can still pose challenges due to inherent differences in modalities. 
Variations in feature spaces, data modalities, and semantic gaps between modalities may limit the overall retrieval performance. 
Future research efforts should focus on exploring complementary techniques, such as multimodal fusion, attention mechanisms, or domain adaptation, to further enhance the retrieval accuracy and alleviate \problem.

In the future, it would be interesting to explore the performance of \ours\ on diverse datasets, the challenges associated with cross-modal differences, and better definitions or metrics for measuring \problem.

{
\bibliography{custom,references}

\begin{thebibliography}{64}
\expandafter\ifx\csname natexlab\endcsname\relax\def\natexlab#1{#1}\fi

\bibitem[{Baranwal et~al.(2023)Baranwal, Fountoulakis, and
  Jagannath}]{baranwal2023effects}
Aseem Baranwal, Kimon Fountoulakis, and Aukosh Jagannath. 2023.
\newblock \href {https://openreview.net/forum?id=P-73JPgRs0R} {Effects of graph
  convolutions in multi-layer networks}.
\newblock In \emph{The Eleventh International Conference on Learning
  Representations}.

\bibitem[{Bellman(2015)}]{DBLP:books/degruyter/Bellman15}
Richard Bellman. 2015.
\newblock \href {https://doi.org/10.1515/9781400874668} {\emph{Adaptive Control
  Processes - {A} Guided Tour (Reprint from 1961)}}, volume 2045 of
  \emph{Princeton Legacy Library}.
\newblock Princeton University Press.

\bibitem[{Bogolin et~al.(2022)Bogolin, Croitoru, Jin, Liu, and
  Albanie}]{DBLP:conf/cvpr/BogolinCJLA22}
Simion{-}Vlad Bogolin, Ioana Croitoru, Hailin Jin, Yang Liu, and Samuel
  Albanie. 2022.
\newblock \href {https://doi.org/10.1109/CVPR52688.2022.00513} {Cross modal
  retrieval with querybank normalisation}.
\newblock In \emph{{IEEE/CVF} Conference on Computer Vision and Pattern
  Recognition, {CVPR} 2022, New Orleans, LA, USA, June 18-24, 2022}, pages
  5184--5195. {IEEE}.

\bibitem[{Boureau et~al.(2010)Boureau, Ponce, and
  LeCun}]{10.5555/3104322.3104338}
Y-Lan Boureau, Jean Ponce, and Yann LeCun. 2010.
\newblock A theoretical analysis of feature pooling in visual recognition.
\newblock In \emph{Proceedings of the 27th International Conference on
  International Conference on Machine Learning}, ICML'10, page 111–118,
  Madison, WI, USA. Omnipress.

\bibitem[{Bruna et~al.(2014)Bruna, Zaremba, Szlam, and
  LeCun}]{bruna2014spectral}
Joan Bruna, Wojciech Zaremba, Arthur Szlam, and Yann LeCun. 2014.
\newblock \href {http://arxiv.org/abs/1312.6203} {Spectral networks and locally
  connected networks on graphs}.

\bibitem[{Chen and Dolan(2011)}]{chen-dolan-2011-collecting}
David Chen and William Dolan. 2011.
\newblock \href {https://aclanthology.org/P11-1020} {Collecting highly parallel
  data for paraphrase evaluation}.
\newblock In \emph{Proceedings of the 49th Annual Meeting of the Association
  for Computational Linguistics: Human Language Technologies}, pages 190--200,
  Portland, Oregon, USA. Association for Computational Linguistics.

\bibitem[{Chen et~al.(2020)Chen, Zhao, Jin, and Wu}]{DBLP:conf/cvpr/ChenZJW20}
Shizhe Chen, Yida Zhao, Qin Jin, and Qi~Wu. 2020.
\newblock \href {https://doi.org/10.1109/CVPR42600.2020.01065} {Fine-grained
  video-text retrieval with hierarchical graph reasoning}.
\newblock In \emph{2020 {IEEE/CVF} Conference on Computer Vision and Pattern
  Recognition, {CVPR} 2020, Seattle, WA, USA, June 13-19, 2020}, pages
  10635--10644. Computer Vision Foundation / {IEEE}.

\bibitem[{Cheng et~al.(2021)Cheng, Lin, Wu, Yang, and
  Shen}]{DBLP:journals/corr/abs-2109-04290}
Xing Cheng, Hezheng Lin, Xiangyu Wu, Fan Yang, and Dong Shen. 2021.
\newblock \href {http://arxiv.org/abs/2109.04290} {Improving video-text
  retrieval by multi-stream corpus alignment and dual softmax loss}.
\newblock \emph{CoRR}, abs/2109.04290.

\bibitem[{Croitoru et~al.(2021)Croitoru, Bogolin, Leordeanu, Jin, Zisserman,
  Albanie, and Liu}]{DBLP:conf/iccv/CroitoruBLJZAL21}
Ioana Croitoru, Simion{-}Vlad Bogolin, Marius Leordeanu, Hailin Jin, Andrew
  Zisserman, Samuel Albanie, and Yang Liu. 2021.
\newblock \href {https://doi.org/10.1109/ICCV48922.2021.01138} {Teachtext:
  Crossmodal generalized distillation for text-video retrieval}.
\newblock In \emph{2021 {IEEE/CVF} International Conference on Computer Vision,
  {ICCV} 2021, Montreal, QC, Canada, October 10-17, 2021}, pages 11563--11573.
  {IEEE}.

\bibitem[{de~la Pena and Montgomery-Smith(1995)}]{10.1214/aop/1176988291}
Victor~H. de~la Pena and S.~J. Montgomery-Smith. 1995.
\newblock \href {https://doi.org/10.1214/aop/1176988291} {{Decoupling
  Inequalities for the Tail Probabilities of Multivariate $U$-Statistics}}.
\newblock \emph{The Annals of Probability}, 23(2):806 -- 816.

\bibitem[{Devlin et~al.(2019)Devlin, Chang, Lee, and
  Toutanova}]{DBLP:conf/naacl/DevlinCLT19}
Jacob Devlin, Ming{-}Wei Chang, Kenton Lee, and Kristina Toutanova. 2019.
\newblock \href {https://doi.org/10.18653/v1/n19-1423} {{BERT:} pre-training of
  deep bidirectional transformers for language understanding}.
\newblock In \emph{Proceedings of the 2019 Conference of the North American
  Chapter of the Association for Computational Linguistics: Human Language
  Technologies, {NAACL-HLT} 2019, Minneapolis, MN, USA, June 2-7, 2019, Volume
  1 (Long and Short Papers)}, pages 4171--4186. Association for Computational
  Linguistics.

\bibitem[{Drossos et~al.(2020)Drossos, Lipping, and
  Virtanen}]{drossos_clotho_2020}
Konstantinos Drossos, Samuel Lipping, and Tuomas Virtanen. 2020.
\newblock \href {https://doi.org/10.1109/ICASSP40776.2020.9052990} {Clotho: an
  {Audio} {Captioning} {Dataset}}.
\newblock In \emph{{ICASSP} 2020 - 2020 {IEEE} {International} {Conference} on
  {Acoustics}, {Speech} and {Signal} {Processing} ({ICASSP})}, pages 736--740.
\newblock ISSN: 2379-190X.

\bibitem[{Fabian Caba~Heilbron and Niebles(2015)}]{caba2015activitynet}
Bernard~Ghanem Fabian Caba~Heilbron, Victor~Escorcia and Juan~Carlos Niebles.
  2015.
\newblock Activitynet: A large-scale video benchmark for human activity
  understanding.
\newblock In \emph{Proceedings of the IEEE Conference on Computer Vision and
  Pattern Recognition}, pages 961--970.

\bibitem[{Gabeur et~al.(2020)Gabeur, Sun, Alahari, and
  Schmid}]{DBLP:conf/eccv/Gabeur0AS20}
Valentin Gabeur, Chen Sun, Karteek Alahari, and Cordelia Schmid. 2020.
\newblock \href {https://doi.org/10.1007/978-3-030-58548-8\_13} {Multi-modal
  transformer for video retrieval}.
\newblock In \emph{Computer Vision - {ECCV} 2020 - 16th European Conference,
  Glasgow, UK, August 23-28, 2020, Proceedings, Part {IV}}, volume 12349 of
  \emph{Lecture Notes in Computer Science}, pages 214--229. Springer.

\bibitem[{Gan et~al.(2020)Gan, Chen, Li, Zhu, Cheng, and
  Liu}]{DBLP:conf/nips/Gan0LZ0020}
Zhe Gan, Yen{-}Chun Chen, Linjie Li, Chen Zhu, Yu~Cheng, and Jingjing Liu.
  2020.
\newblock \href
  {https://proceedings.neurips.cc/paper/2020/hash/49562478de4c54fafd4ec46fdb297de5-Abstract.html}
  {Large-scale adversarial training for vision-and-language representation
  learning}.
\newblock In \emph{Advances in Neural Information Processing Systems 33: Annual
  Conference on Neural Information Processing Systems 2020, NeurIPS 2020,
  December 6-12, 2020, virtual}.

\bibitem[{Gao et~al.(2019)Gao, He, Tan, Qin, Wang, and
  Liu}]{DBLP:conf/iclr/GaoHTQWL19}
Jun Gao, Di~He, Xu~Tan, Tao Qin, Liwei Wang, and Tie{-}Yan Liu. 2019.
\newblock \href {https://openreview.net/forum?id=SkEYojRqtm} {Representation
  degeneration problem in training natural language generation models}.
\newblock In \emph{7th International Conference on Learning Representations,
  {ICLR} 2019, New Orleans, LA, USA, May 6-9, 2019}. OpenReview.net.

\bibitem[{Gao et~al.(2021)Gao, Liu, Chen, Chang, Zhang, and
  Yuan}]{DBLP:journals/corr/abs-2111-05610}
Zijian Gao, Jingyu Liu, Sheng Chen, Dedan Chang, Hao Zhang, and Jinwei Yuan.
  2021.
\newblock \href {http://arxiv.org/abs/2111.05610} {{CLIP2TV:} an empirical
  study on transformer-based methods for video-text retrieval}.
\newblock \emph{CoRR}, abs/2111.05610.

\bibitem[{Gilmer et~al.(2017)Gilmer, Schoenholz, Riley, Vinyals, and
  Dahl}]{gilmer2017mpnn}
Justin Gilmer, Samuel~S. Schoenholz, Patrick~F. Riley, Oriol Vinyals, and
  George~E. Dahl. 2017.
\newblock Neural message passing for quantum chemistry.
\newblock In \emph{Proceedings of the 34th International Conference on Machine
  Learning - Volume 70}, ICML'17, page 1263–1272. JMLR.org.

\bibitem[{Gong et~al.(2013)Gong, Lazebnik, Gordo, and
  Perronnin}]{DBLP:journals/pami/GongLGP13}
Yunchao Gong, Svetlana Lazebnik, Albert Gordo, and Florent Perronnin. 2013.
\newblock \href {https://doi.org/10.1109/TPAMI.2012.193} {Iterative
  quantization: {A} procrustean approach to learning binary codes for
  large-scale image retrieval}.
\newblock \emph{{IEEE} Trans. Pattern Anal. Mach. Intell.}, 35(12):2916--2929.

\bibitem[{Goodfellow et~al.(2016)Goodfellow, Bengio, and
  Courville}]{Goodfellow-et-al-2016}
Ian Goodfellow, Yoshua Bengio, and Aaron Courville. 2016.
\newblock \emph{Deep Learning}.
\newblock MIT Press.
\newblock \url{http://www.deeplearningbook.org}.

\bibitem[{Gorti et~al.(2022)Gorti, Vouitsis, Ma, Golestan, Volkovs, Garg, and
  Yu}]{DBLP:conf/cvpr/GortiVMGVGY22}
Satya~Krishna Gorti, No{\"{e}}l Vouitsis, Junwei Ma, Keyvan Golestan, Maksims
  Volkovs, Animesh Garg, and Guangwei Yu. 2022.
\newblock \href {https://doi.org/10.1109/CVPR52688.2022.00495} {X-pool:
  Cross-modal language-video attention for text-video retrieval}.
\newblock In \emph{{IEEE/CVF} Conference on Computer Vision and Pattern
  Recognition, {CVPR} 2022, New Orleans, LA, USA, June 18-24, 2022}, pages
  4996--5005. {IEEE}.

\bibitem[{GraphConv()}]{simple-conv}
Simple GraphConv.
\newblock Pyg description of simple graph convolution.
\newblock
  \url{https://pytorch-geometric.readthedocs.io/en/latest/generated/torch_geometric.nn.conv.SimpleConv.html#torch_geometric.nn.conv.SimpleConv}.
\newblock Accessed: 2023-06-17.

\bibitem[{Hamilton(2020)}]{willgnnbook}
William~L. Hamilton. 2020.
\newblock Graph representation learning.
\newblock \emph{Synthesis Lectures on Artificial Intelligence and Machine
  Learning}, 14(3):1--159.

\bibitem[{He et~al.(2016)He, Zhang, Ren, and Sun}]{DBLP:conf/cvpr/HeZRS16}
Kaiming He, Xiangyu Zhang, Shaoqing Ren, and Jian Sun. 2016.
\newblock \href {https://doi.org/10.1109/CVPR.2016.90} {Deep residual learning
  for image recognition}.
\newblock In \emph{2016 {IEEE} Conference on Computer Vision and Pattern
  Recognition, {CVPR} 2016, Las Vegas, NV, USA, June 27-30, 2016}, pages
  770--778. {IEEE} Computer Society.

\bibitem[{Hendricks et~al.(2017)Hendricks, Wang, Shechtman, Sivic, Darrell, and
  Russell}]{hendricks_localizing_2017}
Lisa~Anne Hendricks, Oliver Wang, Eli Shechtman, Josef Sivic, Trevor Darrell,
  and Bryan Russell. 2017.
\newblock \href {https://doi.org/10.1109/ICCV.2017.618} {Localizing {Moments}
  in {Video} with {Natural} {Language}}.
\newblock In \emph{2017 {IEEE} {International} {Conference} on {Computer}
  {Vision} ({ICCV})}, pages 5804--5813, Venice. IEEE.

\bibitem[{Huang et~al.(2021)Huang, Niu, Liu, Ding, Xiao, Wu, and
  Peng}]{NEURIPS2021_Huang}
Zhenyu Huang, Guocheng Niu, Xiao Liu, Wenbiao Ding, Xinyan Xiao, Hua Wu, and
  Xi~Peng. 2021.
\newblock \href
  {https://proceedings.neurips.cc/paper_files/paper/2021/file/f5e62af885293cf4d511ceef31e61c80-Paper.pdf}
  {Learning with noisy correspondence for cross-modal matching}.
\newblock In \emph{Advances in Neural Information Processing Systems},
  volume~34, pages 29406--29419. Curran Associates, Inc.

\bibitem[{Keogh and Mueen(2017)}]{Keogh2017}
Eamonn Keogh and Abdullah Mueen. 2017.
\newblock \href {https://doi.org/10.1007/978-1-4899-7687-1_192} {\emph{Curse of
  Dimensionality}}, pages 314--315. Springer US, Boston, MA.

\bibitem[{Kim et~al.(2019)Kim, Kim, Lee, and Kim}]{kim-etal-2019-audiocaps}
Chris~Dongjoo Kim, Byeongchang Kim, Hyunmin Lee, and Gunhee Kim. 2019.
\newblock \href {https://doi.org/10.18653/v1/N19-1011} {{A}udio{C}aps:
  Generating captions for audios in the wild}.
\newblock In \emph{Proceedings of the 2019 Conference of the North {A}merican
  Chapter of the Association for Computational Linguistics: Human Language
  Technologies, Volume 1 (Long and Short Papers)}, pages 119--132, Minneapolis,
  Minnesota. Association for Computational Linguistics.

\bibitem[{Kim et~al.(2023)Kim, Ahn, Kim, Lee, Marsden, Sala, Kim, Han, Lee,
  Lee, Bae, Wu, Gao, Zhang, Yang, Guo, Lu, Oh, Cho, jin Kim, Kweon, Kim, Kang,
  Jhoo, Roh, Mun, Oh, Ak, Lee, Xu, Shen, Hwang, Shin, Lee, Park, Lee, Kwak,
  Wang, Wang, Gu, Lv, and Sun}]{kim2023nice}
Taehoon Kim, Pyunghwan Ahn, Sangyun Kim, Sihaeng Lee, Mark Marsden, Alessandra
  Sala, Seung~Hwan Kim, Bohyung Han, Kyoung~Mu Lee, Honglak Lee, Kyounghoon
  Bae, Xiangyu Wu, Yi~Gao, Hailiang Zhang, Yang Yang, Weili Guo, Jianfeng Lu,
  Youngtaek Oh, Jae~Won Cho, Dong jin Kim, In~So Kweon, Junmo Kim, Wooyoung
  Kang, Won~Young Jhoo, Byungseok Roh, Jonghwan Mun, Solgil Oh, Kenan~Emir Ak,
  Gwang-Gook Lee, Yan Xu, Mingwei Shen, Kyomin Hwang, Wonsik Shin, Kamin Lee,
  Wonhark Park, Dongkwan Lee, Nojun Kwak, Yujin Wang, Yimu Wang, Tiancheng Gu,
  Xingchang Lv, and Mingmao Sun. 2023.
\newblock \href {http://arxiv.org/abs/2309.01961} {Nice: Cvpr 2023 challenge on
  zero-shot image captioning}.

\bibitem[{Kipf and Welling(2017)}]{kipf2017semisupervised}
Thomas~N. Kipf and Max Welling. 2017.
\newblock \href {https://openreview.net/forum?id=SJU4ayYgl} {Semi-supervised
  classification with graph convolutional networks}.
\newblock In \emph{International Conference on Learning Representations}.

\bibitem[{Koepke et~al.(2022)Koepke, Oncescu, Henriques, Akata, and
  Albanie}]{koepke_audio_2022}
A.~Sophia Koepke, Andreea-Maria Oncescu, Joao Henriques, Zeynep Akata, and
  Samuel Albanie. 2022.
\newblock \href {https://doi.org/10.1109/TMM.2022.3149712} {Audio {Retrieval}
  with {Natural} {Language} {Queries}: {A} {Benchmark} {Study}}.
\newblock \emph{IEEE Transactions on Multimedia}, pages 1--1.
\newblock Conference Name: IEEE Transactions on Multimedia.

\bibitem[{Lei et~al.(2021)Lei, Li, Zhou, Gan, Berg, Bansal, and
  Liu}]{DBLP:conf/cvpr/LeiLZGBB021}
Jie Lei, Linjie Li, Luowei Zhou, Zhe Gan, Tamara~L. Berg, Mohit Bansal, and
  Jingjing Liu. 2021.
\newblock \href {https://doi.org/10.1109/CVPR46437.2021.00725} {Less is more:
  Clipbert for video-and-language learning via sparse sampling}.
\newblock In \emph{{IEEE} Conference on Computer Vision and Pattern
  Recognition, {CVPR} 2021, virtual, June 19-25, 2021}, pages 7331--7341.
  Computer Vision Foundation / {IEEE}.

\bibitem[{Li et~al.(2020)Li, Yin, Li, Zhang, Hu, Zhang, Wang, Hu, Dong, Wei,
  Choi, and Gao}]{DBLP:conf/eccv/Li0LZHZWH0WCG20}
Xiujun Li, Xi~Yin, Chunyuan Li, Pengchuan Zhang, Xiaowei Hu, Lei Zhang, Lijuan
  Wang, Houdong Hu, Li~Dong, Furu Wei, Yejin Choi, and Jianfeng Gao. 2020.
\newblock \href {https://doi.org/10.1007/978-3-030-58577-8\_8} {Oscar:
  Object-semantics aligned pre-training for vision-language tasks}.
\newblock In \emph{Computer Vision - {ECCV} 2020 - 16th European Conference,
  Glasgow, UK, August 23-28, 2020, Proceedings, Part {XXX}}, volume 12375 of
  \emph{Lecture Notes in Computer Science}, pages 121--137. Springer.

\bibitem[{Liang et~al.(2022)Liang, Zhang, Kwon, Yeung, and Zou}]{liang2022mind}
Weixin Liang, Yuhui Zhang, Yongchan Kwon, Serena Yeung, and James Zou. 2022.
\newblock \href {https://openreview.net/forum?id=S7Evzt9uit3} {Mind the gap:
  {Understanding} the modality gap in multi-modal contrastive representation
  learning}.
\newblock In \emph{Advances in neural information processing systems}.

\bibitem[{Lin et~al.(2014)Lin, Maire, Belongie, Hays, Perona, Ramanan,
  Doll{\'{a}}r, and Zitnick}]{DBLP:conf/eccv/LinMBHPRDZ14}
Tsung{-}Yi Lin, Michael Maire, Serge~J. Belongie, James Hays, Pietro Perona,
  Deva Ramanan, Piotr Doll{\'{a}}r, and C.~Lawrence Zitnick. 2014.
\newblock \href {https://doi.org/10.1007/978-3-319-10602-1\_48} {Microsoft
  {COCO:} common objects in context}.
\newblock In \emph{Computer Vision - {ECCV} 2014 - 13th European Conference,
  Zurich, Switzerland, September 6-12, 2014, Proceedings, Part {V}}, volume
  8693 of \emph{Lecture Notes in Computer Science}, pages 740--755. Springer.

\bibitem[{Liong et~al.(2017)Liong, Lu, Tan, and
  Zhou}]{DBLP:conf/iccv/LiongLT017}
Venice~Erin Liong, Jiwen Lu, Yap{-}Peng Tan, and Jie Zhou. 2017.
\newblock \href {https://doi.org/10.1109/ICCV.2017.439} {Cross-modal deep
  variational hashing}.
\newblock In \emph{{IEEE} International Conference on Computer Vision, {ICCV}
  2017, Venice, Italy, October 22-29, 2017}, pages 4097--4105. {IEEE} Computer
  Society.

\bibitem[{Liu et~al.(2019)Liu, Albanie, Nagrani, and
  Zisserman}]{DBLP:conf/bmvc/LiuANZ19}
Yang Liu, Samuel Albanie, Arsha Nagrani, and Andrew Zisserman. 2019.
\newblock \href {https://bmvc2019.org/wp-content/uploads/papers/0363-paper.pdf}
  {Use what you have: Video retrieval using representations from collaborative
  experts}.
\newblock In \emph{30th British Machine Vision Conference 2019, {BMVC} 2019,
  Cardiff, UK, September 9-12, 2019}, page 279. {BMVA} Press.

\bibitem[{Luo et~al.(2022)Luo, Ji, Zhong, Chen, Lei, Duan, and
  Li}]{DBLP:journals/ijon/LuoJZCLDL22}
Huaishao Luo, Lei Ji, Ming Zhong, Yang Chen, Wen Lei, Nan Duan, and Tianrui Li.
  2022.
\newblock \href {https://doi.org/10.1016/j.neucom.2022.07.028} {Clip4clip: An
  empirical study of {CLIP} for end to end video clip retrieval and
  captioning}.
\newblock \emph{Neurocomputing}, 508:293--304.

\bibitem[{Ma et~al.(2022)Ma, Xu, Sun, Yan, Zhang, and
  Ji}]{DBLP:conf/mm/MaXSYZJ22}
Yiwei Ma, Guohai Xu, Xiaoshuai Sun, Ming Yan, Ji~Zhang, and Rongrong Ji. 2022.
\newblock \href {https://doi.org/10.1145/3503161.3547910} {{X-CLIP:} end-to-end
  multi-grained contrastive learning for video-text retrieval}.
\newblock In \emph{{MM} '22: The 30th {ACM} International Conference on
  Multimedia, Lisboa, Portugal, October 10 - 14, 2022}, pages 638--647. {ACM}.

\bibitem[{Micciancio and Voulgaris(2010)}]{cap_vol}
Daniele Micciancio and Panagiotis Voulgaris. 2010.
\newblock Faster exponential time algorithms for the shortest vector problem.
\newblock In \emph{Proceedings of the Twenty-First Annual ACM-SIAM Symposium on
  Discrete Algorithms}, SODA '10, page 1468–1480, USA. Society for Industrial
  and Applied Mathematics.

\bibitem[{Miech et~al.(2021)Miech, Alayrac, Laptev, Sivic, and
  Zisserman}]{DBLP:conf/cvpr/MiechALSZ21}
Antoine Miech, Jean{-}Baptiste Alayrac, Ivan Laptev, Josef Sivic, and Andrew
  Zisserman. 2021.
\newblock \href {https://doi.org/10.1109/CVPR46437.2021.00970} {Thinking fast
  and slow: Efficient text-to-visual retrieval with transformers}.
\newblock In \emph{{IEEE} Conference on Computer Vision and Pattern
  Recognition, {CVPR} 2021, virtual, June 19-25, 2021}, pages 9826--9836.
  Computer Vision Foundation / {IEEE}.

\bibitem[{Miech et~al.(2020)Miech, Laptev, and Sivic}]{miech_learning_2020}
Antoine Miech, Ivan Laptev, and Josef Sivic. 2020.
\newblock \href {https://doi.org/10.48550/arXiv.1804.02516} {Learning a
  {Text}-{Video} {Embedding} from {Incomplete} and {Heterogeneous} {Data}}.
\newblock ArXiv:1804.02516 [cs].

\bibitem[{Oncescu et~al.(2021)Oncescu, Koepke, Henriques, Akata, and
  Albanie}]{DBLP:conf/interspeech/OncescuKHAA21}
Andreea{-}Maria Oncescu, A.~Sophia Koepke, Jo{\~{a}}o~F. Henriques, Zeynep
  Akata, and Samuel Albanie. 2021.
\newblock \href {https://doi.org/10.21437/Interspeech.2021-2227} {Audio
  retrieval with natural language queries}.
\newblock In \emph{Interspeech 2021, 22nd Annual Conference of the
  International Speech Communication Association, Brno, Czechia, 30 August - 3
  September 2021}, pages 2411--2415. {ISCA}.

\bibitem[{Park et~al.(2022)Park, Shen, Farhadi, Darrell, Choi, and
  Rohrbach}]{park-etal-2022-exposing}
Jae~Sung Park, Sheng Shen, Ali Farhadi, Trevor Darrell, Yejin Choi, and Anna
  Rohrbach. 2022.
\newblock \href {https://doi.org/10.18653/v1/2022.naacl-main.261} {Exposing the
  limits of video-text models through contrast sets}.
\newblock In \emph{Proceedings of the 2022 Conference of the North American
  Chapter of the Association for Computational Linguistics: Human Language
  Technologies}, pages 3574--3586, Seattle, United States. Association for
  Computational Linguistics.

\bibitem[{Plummer et~al.(2017)Plummer, Wang, Cervantes, Caicedo, Hockenmaier,
  and Lazebnik}]{DBLP:journals/ijcv/PlummerWCCHL17}
Bryan~A. Plummer, Liwei Wang, Chris~M. Cervantes, Juan~C. Caicedo, Julia
  Hockenmaier, and Svetlana Lazebnik. 2017.
\newblock \href {https://doi.org/10.1007/s11263-016-0965-7} {Flickr30k
  entities: Collecting region-to-phrase correspondences for richer
  image-to-sentence models}.
\newblock \emph{Int. J. Comput. Vis.}, 123(1):74--93.

\bibitem[{Radford et~al.(2021)Radford, Kim, Hallacy, Ramesh, Goh, Agarwal,
  Sastry, Askell, Mishkin, Clark, Krueger, and
  Sutskever}]{DBLP:conf/icml/RadfordKHRGASAM21}
Alec Radford, Jong~Wook Kim, Chris Hallacy, Aditya Ramesh, Gabriel Goh,
  Sandhini Agarwal, Girish Sastry, Amanda Askell, Pamela Mishkin, Jack Clark,
  Gretchen Krueger, and Ilya Sutskever. 2021.
\newblock \href {http://proceedings.mlr.press/v139/radford21a.html} {Learning
  transferable visual models from natural language supervision}.
\newblock In \emph{Proceedings of the 38th International Conference on Machine
  Learning, {ICML} 2021, 18-24 July 2021, Virtual Event}, volume 139 of
  \emph{Proceedings of Machine Learning Research}, pages 8748--8763. {PMLR}.

\bibitem[{Radovanovic et~al.(2010)Radovanovic, Nanopoulos, and
  Ivanovi}]{JMLR:v11:radovanovic10a}
Milos Radovanovic, Alexandros Nanopoulos, and Mirjana Ivanovi. 2010.
\newblock \href {http://jmlr.org/papers/v11/radovanovic10a.html} {Hubs in
  space: Popular nearest neighbors in high-dimensional data}.
\newblock \emph{Journal of Machine Learning Research}, 11(86):2487--2531.

\bibitem[{Singh et~al.(2022)Singh, Hu, Goswami, Couairon, Galuba, Rohrbach, and
  Kiela}]{DBLP:conf/cvpr/SinghHGCGRK22}
Amanpreet Singh, Ronghang Hu, Vedanuj Goswami, Guillaume Couairon, Wojciech
  Galuba, Marcus Rohrbach, and Douwe Kiela. 2022.
\newblock \href {https://doi.org/10.1109/CVPR52688.2022.01519} {{FLAVA:} {A}
  foundational language and vision alignment model}.
\newblock In \emph{{IEEE/CVF} Conference on Computer Vision and Pattern
  Recognition, {CVPR} 2022, New Orleans, LA, USA, June 18-24, 2022}, pages
  15617--15629. {IEEE}.

\bibitem[{Su et~al.(2019)Su, Zhong, and Zhang}]{DBLP:conf/iccv/SuZZ19}
Shupeng Su, Zhisheng Zhong, and Chao Zhang. 2019.
\newblock \href {https://doi.org/10.1109/ICCV.2019.00312} {Deep joint-semantics
  reconstructing hashing for large-scale unsupervised cross-modal retrieval}.
\newblock In \emph{2019 {IEEE/CVF} International Conference on Computer Vision,
  {ICCV} 2019, Seoul, Korea (South), October 27 - November 2, 2019}, pages
  3027--3035. {IEEE}.

\bibitem[{Velickovic et~al.(2018)Velickovic, Cucurull, Casanova, Romero, Lio,
  and Bengio}]{velickovic2018graph}
Petar Velickovic, Guillem Cucurull, Arantxa Casanova, Adriana Romero, Pietro
  Lio, and Yoshua Bengio. 2018.
\newblock \href {https://openreview.net/forum?id=rJXMpikCZ} {Graph attention
  networks}.
\newblock In \emph{International Conference on Learning Representations}.

\bibitem[{Wang et~al.(2022{\natexlab{a}})Wang, Xu, He, Li, Ji, Han, and
  Ding}]{DBLP:conf/mm/WangXHLJHD22}
Haoran Wang, Di~Xu, Dongliang He, Fu~Li, Zhong Ji, Jungong Han, and Errui Ding.
  2022{\natexlab{a}}.
\newblock \href {https://doi.org/10.1145/3503161.3548010} {Boosting video-text
  retrieval with explicit high-level semantics}.
\newblock In \emph{{MM} '22: The 30th {ACM} International Conference on
  Multimedia, Lisboa, Portugal, October 10 - 14, 2022}, pages 4887--4898.
  {ACM}.

\bibitem[{Wang et~al.(2022{\natexlab{b}})Wang, Zhu, Zheng, Xu, and
  Yang}]{9878037}
Xiaohan Wang, Linchao Zhu, Zhedong Zheng, Mingliang Xu, and Yi~Yang.
  2022{\natexlab{b}}.
\newblock \href {https://doi.org/10.1109/TMM.2022.3204444} {Align and tell:
  Boosting text-video retrieval with local alignment and fine-grained
  supervision}.
\newblock \emph{IEEE Transactions on Multimedia}, pages 1--11.

\bibitem[{Wang et~al.(2023)Wang, Jian, and Xue}]{wang2023balance}
Yimu Wang, Xiangru Jian, and Bo~Xue. 2023.
\newblock \href {http://arxiv.org/abs/2310.11612} {Balance act: Mitigating
  hubness in cross-modal retrieval with query and gallery banks}.

\bibitem[{Wang et~al.(2020{\natexlab{a}})Wang, Lu, and
  Zhang}]{10.1145/3394171.3413882}
Yimu Wang, Shiyin Lu, and Lijun Zhang. 2020{\natexlab{a}}.
\newblock Searching privately by imperceptible lying: A novel private hashing
  method with differential privacy.
\newblock In \emph{Proceedings of the 28th ACM International Conference on
  Multimedia}, page 2700–2709.

\bibitem[{Wang and Shi(2023)}]{wang_video-text_2023}
Yimu Wang and Peng Shi. 2023.
\newblock \href {https://doi.org/10.48550/arXiv.2302.09473} {Video-{Text}
  {Retrieval} by {Supervised} {Multi}-{Space} {Multi}-{Grained} {Alignment}}.
\newblock ArXiv:2302.09473 [cs].

\bibitem[{Wang et~al.(2020{\natexlab{b}})Wang, Wei, Xue, and
  Zhang}]{DBLP:conf/prcv/WangWXZ20}
Yimu Wang, Xiu{-}Shen Wei, Bo~Xue, and Lijun Zhang. 2020{\natexlab{b}}.
\newblock Piecewise hashing: {A} deep hashing method for large-scale
  fine-grained search.
\newblock In \emph{Pattern Recognition and Computer Vision - Third Chinese
  Conference, {PRCV} 2020, Nanjing, China, October 16-18, 2020, Proceedings,
  Part {II}}, pages 432--444.

\bibitem[{Wang et~al.(2021)Wang, Xue, Cheng, Chen, and Zhang}]{ijcai2021p156}
Yimu Wang, Bo~Xue, Quan Cheng, Yuhui Chen, and Lijun Zhang. 2021.
\newblock Deep unified cross-modality hashing by pairwise data alignment.
\newblock In \emph{Proceedings of the Thirtieth International Joint Conference
  on Artificial Intelligence, {IJCAI-21}}, pages 1129--1135.

\bibitem[{Xu et~al.(2016)Xu, Mei, Yao, and Rui}]{DBLP:conf/cvpr/XuMYR16}
Jun Xu, Tao Mei, Ting Yao, and Yong Rui. 2016.
\newblock \href {https://doi.org/10.1109/CVPR.2016.571} {{MSR-VTT:} {A} large
  video description dataset for bridging video and language}.
\newblock In \emph{2016 {IEEE} Conference on Computer Vision and Pattern
  Recognition, {CVPR} 2016, Las Vegas, NV, USA, June 27-30, 2016}, pages
  5288--5296. {IEEE} Computer Society.

\bibitem[{Yang et~al.(2022)Yang, Huang, Hu, Li, Lv, and Peng}]{Tang2022cvpr}
Mouxing Yang, Zhenyu Huang, Peng Hu, Taihao Li, Jiancheng Lv, and Xi~Peng.
  2022.
\newblock \href {https://doi.org/10.1109/CVPR52688.2022.01391} {Learning with
  twin noisy labels for visible-infrared person re-identification}.
\newblock In \emph{2022 IEEE/CVF Conference on Computer Vision and Pattern
  Recognition (CVPR)}, pages 14288--14297.

\bibitem[{Yu et~al.(2023)Yu, Liu, Wang, Xu, and Liu}]{yu2023multimodal}
Qiying Yu, Yang Liu, Yimu Wang, Ke~Xu, and Jingjing Liu. 2023.
\newblock \href {https://openreview.net/forum?id=Hnk1WRMAYqg} {Multimodal
  federated learning via contrastive representation ensemble}.
\newblock In \emph{The Eleventh International Conference on Learning
  Representations}.

\bibitem[{Yu et~al.(2022)Yu, Song, Kim, Lee, Ryu, and Yoon}]{yu-etal-2022-rare}
Sangwon Yu, Jongyoon Song, Heeseung Kim, Seongmin Lee, Woo-Jong Ryu, and
  Sungroh Yoon. 2022.
\newblock \href {https://doi.org/10.18653/v1/2022.acl-long.3} {Rare tokens
  degenerate all tokens: Improving neural text generation via adaptive gradient
  gating for rare token embeddings}.
\newblock In \emph{Proceedings of the 60th Annual Meeting of the Association
  for Computational Linguistics (Volume 1: Long Papers)}, pages 29--45, Dublin,
  Ireland. Association for Computational Linguistics.

\bibitem[{Zhang et~al.(2020)Zhang, Gao, Xu, Miao, Yang, and
  Shao}]{zhang-etal-2020-revisiting}
Zhong Zhang, Chongming Gao, Cong Xu, Rui Miao, Qinli Yang, and Junming Shao.
  2020.
\newblock \href {https://doi.org/10.18653/v1/2020.findings-emnlp.46}
  {Revisiting representation degeneration problem in language modeling}.
\newblock In \emph{Findings of the Association for Computational Linguistics:
  EMNLP 2020}, pages 518--527, Online. Association for Computational
  Linguistics.

\bibitem[{Zhao et~al.(2022)Zhao, Zhu, Wang, and Yang}]{10.1145/3477495.3531950}
Shuai Zhao, Linchao Zhu, Xiaohan Wang, and Yi~Yang. 2022.
\newblock \href {https://doi.org/10.1145/3477495.3531950} {Centerclip: Token
  clustering for efficient text-video retrieval}.
\newblock In \emph{Proceedings of the 45th International ACM SIGIR Conference
  on Research and Development in Information Retrieval}, SIGIR '22, page
  970–981, New York, NY, USA. Association for Computing Machinery.

\bibitem[{Zhong et~al.(2017)Zhong, Zheng, Cao, and
  Li}]{DBLP:conf/cvpr/ZhongZCL17}
Zhun Zhong, Liang Zheng, Donglin Cao, and Shaozi Li. 2017.
\newblock \href {https://doi.org/10.1109/CVPR.2017.389} {Re-ranking person
  re-identification with k-reciprocal encoding}.
\newblock In \emph{2017 {IEEE} Conference on Computer Vision and Pattern
  Recognition, {CVPR} 2017, Honolulu, HI, USA, July 21-26, 2017}, pages
  3652--3661. {IEEE} Computer Society.

\end{thebibliography}
\bibliographystyle{acl_natbib}
}
\newpage
\clearpage
\appendix
\useunder{\uline}{\ul}{}

\section{Related Work} \label{sec: relatedwork}

We review prior work in cross-modal retrieval and representation degeneration, which are the two most related areas to our work.

\textbf{Cross-modal Retrival.} 
The goal of cross-modal retrieval is to learn a common representation space, where the similarity between samples from different modalities can be directly measured. 
Recently, inspired by the success of deep learning~\cite{DBLP:conf/naacl/DevlinCLT19, DBLP:conf/cvpr/HeZRS16}, numerous methods based on deep neural networks have been proposed for image-text retrieval~\cite{DBLP:conf/icml/RadfordKHRGASAM21}, video-text retrieval~\cite{DBLP:journals/ijon/LuoJZCLDL22}, and audio-text retrieval~\cite{DBLP:conf/interspeech/OncescuKHAA21}. 
Further, to learn a better representation space, vision-language pretraining~\cite{DBLP:conf/nips/Gan0LZ0020, DBLP:conf/eccv/Li0LZHZWH0WCG20, DBLP:conf/cvpr/SinghHGCGRK22} on large-scale unlabeled cross-modal data has been widely employed and have shown promising performance. 
Motivated by this, recent works have attempted to pretrain or fine-tune cross-modal retrieval models, \eg, image-text retrieval~\cite{DBLP:conf/icml/RadfordKHRGASAM21,DBLP:conf/eccv/Li0LZHZWH0WCG20}, video-text retrieval~\cite{DBLP:conf/cvpr/ChenZJW20,DBLP:journals/corr/abs-2109-04290,DBLP:journals/corr/abs-2111-05610,DBLP:conf/cvpr/GortiVMGVGY22,DBLP:conf/cvpr/LeiLZGBB021,DBLP:conf/mm/MaXSYZJ22,park-etal-2022-exposing,DBLP:conf/mm/WangXHLJHD22,9878037,10.1145/3477495.3531950}, and audio-text retrieval~\cite{DBLP:conf/interspeech/OncescuKHAA21} in an end-to-end manner. 

In contrast to the methods that focus on improving the representation learning ability, another line of cross-modal retrieval research has focused on improving the effectiveness of retrieval, including k-d trees~\cite{DBLP:books/degruyter/Bellman15}, re-ranking~\cite{DBLP:conf/cvpr/ZhongZCL17,DBLP:conf/cvpr/MiechALSZ21}, query expansion~\cite{DBLP:conf/cvpr/ChenZJW20}, vector compression schemes based on binary codes~\cite{DBLP:conf/iccv/SuZZ19,DBLP:conf/iccv/LiongLT017} and quantization~\cite{DBLP:journals/pami/GongLGP13} that help address the curse of dimensionality~\cite{Keogh2017}. 
However, a recent study~\cite{liang2022mind} shows that the \problem\ has significantly affected the performance of multi-modal learning. 
To investigate the influence of \problem\ in the cross-modal retrieval, 

we show that \problem\ widely exists in different datasets and models.

\textbf{Representation degeneration}. 
The representation degeneration problem was first introduced in the natural language processing (NLP) area~\cite{DBLP:conf/iclr/GaoHTQWL19}. 
It was found that when training a model for natural language generation tasks through likelihood maximization with the weight-tying trick, especially with large training datasets, many of the learned word embeddings tend to degenerate and be distributed into a narrow cone. 
This limitation largely reduces the representation power of word embeddings. 
The representation degeneration problem leads to an increase in the overall similarity between token embeddings, which has a negative effect on the performance of the models.
It was noted that Laplacian regularization can address this problem better than cosine regularization through theoretical proof and empirical experiments \cite{zhang-etal-2020-revisiting}. Subsequent work highlighted \cite{yu-etal-2022-rare} that the training dynamics of the token embeddings focus on rare token embedding which leads to the degeneration problem for all tokens. 
To this end, they use adaptive gradient gating which gates the specific part of the gradient for rare token embeddings and thus better alleviates the data degeneration problem.

Though representation degeneration has been explored in NLP, it remains unexplored in multi-modal learning for a long time. 
A recent work~\cite{liang2022mind} shows that the representation generated by a common deep neural network is restricted to a narrow cone and consequently, with two modality-dependent encoders, the representations from the two modalities are clearly apart during the whole training procedure. 
Further, they also show that varying the modality gap distance has a significant impact on improving the model’s downstream zero-shot classification performance and fairness.

To step forward towards better representation learning in cross-modal retrieval, different from the previous methods in NLP which focus on addressing this problem in the training procedure, we propose a novel method, namely \ours, which proposes to avoid representation degeneration in a post-processing manner. 

Inspired by the representation aggregation induced by graph convolution~\cite{Keogh2017,baranwal2023effects}, we utilize the graph convolution in an opposite way to separate the data points that share similar representation.

As the first method in solving the representation degeneration problem in cross-modal retrieval, \ours\ does not require retraining the model or any other time-consuming operation. \ours\ achieves better retrieval performance with a larger margin between different representations compared to the baselines and does this faster.

\section{Elaborations on Methodologies}

\subsection{Graph Convolution}\label{sec:graphconv}

Graph convolution is a mathematical operation that transforms the features of nodes in a graph based on their local neighborhoods. The objective is to learn a function of signals/features on a graph, which takes into account the graph structure and the node features. It can be regarded as a generalization of convolutions to non-Euclidean data~\cite{bruna2014spectral}. The operation is first introduced and popularized in the work of Graph Convolution Networks~\cite{kipf2017semisupervised}, which is considered one of the most seminal papers in the area of graph learning.

The main idea behind a graph convolution operation is to generate a new representation for each node that captures the local neighborhood information around it. This is usually achieved by aggregating feature information from a node's direct neighbors, sometimes including the node itself. Formally, it can be defined as a special case of a Message Passing Network (MPNN), in which vector messages are exchanged between nodes and updated using neural networks ~\cite{gilmer2017mpnn}. The basic operation of MPNN can be expressed as ~\cite{willgnnbook}

$$
\begin{aligned}
\mathbf{x}_i^{(k+1)}  =&\operatorname{U}^{(k)}\left(\mathbf{x}_i^{(k)},\operatorname{AG}^{(k)}\left(\left\{\mathbf{x}_j^{(k)}, \forall j \in \mathcal{N}(i)\right\}\right)\right) \\
 =&\operatorname{U}^{(k)}\left(\mathbf{x}_i^{(k)}, \mathbf{m}_{\mathcal{N}(i)}^{(k)}\right)\,,
\end{aligned}
$$

where $\operatorname{U}$ (short for update) and $\operatorname{AG}$ (short for aggregation) are arbitrary differentiable functions (i.e., neural networks) and $\mathbf{x}_i^{(k)}$ is the embedding(representation) of node $i$ at $k$-th iteration. $\mathbf{m}_{\mathcal{N}(i)}$ is the "message" that is aggregated from $i$ 's graph neighborhood $\mathcal{N}(i)$. 

In this study, we adopt a simple message passing operator that performs non-trainable propagation since we want to propose a post-processing method with any training. The adopt operator is actually the backbone of multiple GNN studies, which can be expressed as~\cite{simple-conv},
$$
\mathbf{x}_i^{\prime}=\bigoplus_{j \in \mathcal{N}(i)} e_{j i} \cdot \mathbf{x}_j\,,
$$
where $\bigoplus$ defines a custom aggregation scheme. $\mathbf{x}_i^{\prime}$ is updated representation of node $i$ and $e_{ji}$ is edge weight between node $i$ and $j$.

For the sake of simplicity and explainability, we concretize the above operation only with simple addition and self-loop, as follows,
\begin{equation} \label{eq: conv}
    \mathbf{x_i}^\prime = \mathbf{x_i} + \sum_{j} A_{ij} \mathbf{x_j}\,.
\end{equation}
Note that the only difference with \Cref{eq: conv} and inverse convolution \Cref{eq:invconv} we apply in the study is that addiction is replaced with subtraction, leading to the name 'inverse'.

\subsection{Average Pooling in CNN} \label{sec:averagepool}

Average pooling is one type of pooling layers that conducts dimensionality reduction, reducing the number of parameters in the input~\cite{Goodfellow-et-al-2016}. Similar to the convolutional layer, the pooling operation sweeps a filter across the entire input, but the difference is that this filter does not have any weights. Instead, the kernel applies an aggregation function to the values within the receptive field. Specifically, the adopted average pooling calculates the average value within the receptive field of the filter as it moves across the input. Here in this study, the receptive field is subject to the size of the neighborhood of each data point. 

\subsection{Distribution of Data Representation} \label{sec: distribution}

While we keep using the discrete gallery and query set like in \Cref{eq:invconv}, they can be regarded as sampled results from the hidden continuous distribution dependent on the intrinsic properties of the corresponding dataset and the representation learning method applied. 
Therefore, ideally, two gallery sets sampled from the same dataset will probably have different $\Delta_{deg}$ scores even with the same representation learning method due to the variance introduced by the sampling process. 
That is to say, performing the same inverse convolution operation on the two sampled gallery sets might have quite different effects, especially when the sample size is small. 
Also, since the proposed method in this study is a post-processing approach without any training, we need to control the magnitude of the convolution with the help of some hyperparameters. The reason for doing this is to control the change in the representation of the gallery data that has already been aligned with the embedding space of query data by the representation learning model. 
Given the sampled gallery set is small, this means a very large variance in the value of the best hyperparameters as well. 

Unfortunately, it is quite common in practice that we only have a small sampled gallery set. 
Usually, when cross-modal retrieval is carried out, we constantly cut down the size of the gallery set with the help of some pre-ranking or indexing techniques. 
This process can be somehow regarded as sampling a set from the distribution that is empirically represented by the whole gallery set. 
Also, during the evaluation of any method on various datasets, the size of the test or evaluation gallery set is typically much smaller compared to the training set.
 
Both cases make the result of the proposed methods subjected to potentially large variance.

However, it would be more promising that our method is generally stable and robust to any size of the gallery set. 
The ideal case would be that we can perform the inverse convolution similar to \Cref{eq:invconvwithr} but based on the continuous distribution, where $\operatorname{P}(\mathbf{x_j})$ is data point $\mathbf{x_j}$ to be sampled from the hidden distribution, as follows,
\begin{equation}\label{eq:invconvideal}
\mathbf{x_i}^\prime =  \mathbf{x_i} - r\int_{\mathbf{x_j}} S_{ij} \operatorname{P}(\mathbf{x_j})\mathbf{x_j}\,.
\end{equation}

However, it is impossible to have exact access to this hidden distribution in practice. 
The best approximation is the training (or validation) gallery set $\hat{G}$ since it is the largest one we can obtain. 
Therefore, we can perform the inverse convolution on $\hat{G}$. 
Note that the distribution of the query set $\hat{Q}$ should theoretically be similar to that of $\hat{G}$ as this a basic assumption in machine learning~\cite{DBLP:conf/cvpr/BogolinCJLA22}. 
Therefore, it is possible to combine the (train or validation) gallery set $\hat{G}$ and the (train or validation)  query set $\hat{Q}$ to be the even better estimation of the hidden distribution. Thus, we go on to refine \ours\ as in \Cref{eq:invconvpttp},

In general, with reasonable and general assumptions, we strike the importance of utilizing the data of both the modality from the training set when we want to capture a more accurate and stable distribution of data representation. The idea is not bound to the proposed methods and can be adopted by any future work on the post-processing of cross-modal retrieval tasks. 

\section{Experiments}

\subsection{Datasets Details} \label{sec:dateapp}
The experiments are conducted on eight cross-modal benchmarks, which include four video-text retrieval benchmarks (MSR-VTT~\cite{DBLP:conf/cvpr/XuMYR16}, MSVD~\cite{chen-dolan-2011-collecting}, ActivityNet~\cite{caba2015activitynet}, and DiDemo~\cite{hendricks_localizing_2017}), two image-text retrieval benchmarks (MSCOCO~\cite{DBLP:conf/eccv/LinMBHPRDZ14} and Flickr30k~\cite{DBLP:journals/ijcv/PlummerWCCHL17}), as well as two audio-text retrieval benchmarks (AudioCaps~\cite{kim-etal-2019-audiocaps} and CLOTHO~\cite{drossos_clotho_2020}). The details of the datasets are presented below:
\begin{itemize}
    \item \textbf{MSR-VTT}~\cite{DBLP:conf/cvpr/XuMYR16}: Comprises approximately 10k videos, each accompanied by 20 captions. For text-video retrieval, we follow the protocol set by previous works~\cite{DBLP:conf/bmvc/LiuANZ19,DBLP:conf/iccv/CroitoruBLJZAL21,DBLP:journals/ijon/LuoJZCLDL22,DBLP:conf/mm/MaXSYZJ22,park-etal-2022-exposing}, using both the official (full) split and the 1k-A split. The full split includes 2,990 videos for testing and 497 for validation, whereas the 1k-A split has 1,000 videos for testing and around 9,000 for training.

    \item \textbf{MSVD}~\cite{chen-dolan-2011-collecting}: Contains 1,970 videos and about 80k captions. The standard split used in prior works~\cite{DBLP:conf/bmvc/LiuANZ19,DBLP:conf/iccv/CroitoruBLJZAL21,DBLP:journals/ijon/LuoJZCLDL22,park-etal-2022-exposing} is adopted for reporting results, which includes 1,200 videos for training, 100 for validation, and 670 for testing.

    \item \textbf{ActivityNet}~\cite{caba2015activitynet}: Contains 20k videos and approximately 100K descriptive sentences. These videos are extracted from YouTube. We employ a paragraph video retrieval setup as defined in prior works~\cite{DBLP:conf/bmvc/LiuANZ19,DBLP:conf/iccv/CroitoruBLJZAL21,DBLP:journals/ijon/LuoJZCLDL22,park-etal-2022-exposing}. We report results on the val1 split. The training split includes 10,009 videos, with 4,917 videos allocated for testing.

    \item \textbf{DiDemo}~\cite{hendricks_localizing_2017}: Includes over 10,000 personal videos, each lasting between 25-30 seconds, along with over 40,000 localized text descriptions. The videos are divided into training (8,395), validation (1,065), and testing (1,004) sets.

    \item \textbf{MSCOCO}~\cite{DBLP:conf/eccv/LinMBHPRDZ14}: Consists of 123k images, each accompanied by 5 captions. The 5k split is used for evaluation.

    \item \textbf{Flickr30k}~\cite{DBLP:journals/ijcv/PlummerWCCHL17}: This dataset contains 31,000 images collected from Flickr, each accompanied by 5 reference sentences provided by human annotators.

    \item \textbf{AudioCaps}~\cite{kim-etal-2019-audiocaps}: Comprises sound samples with event descriptions. We adopt the same setup as prior work~\cite{koepke_audio_2022} where 49,291 samples are used for training, 428 for validation, and 816 for testing.

    \item \textbf{CLOTHO}~\cite{drossos_clotho_2020}: Comprises of 4,981 audio samples of 15 to 30 seconds in duration and 24,905 captions of eight to 20 words in length (five captions for each audio sample).
\end{itemize}

\subsection{Experiment Details} \label{sec:exp}
The public codes and weights of all the tasks in this study are summarized in \Cref{tab: methods links}. Note that there is no available trained model for \href{https://github.com/openai/CLIP}{CLIP} and \href{https://github.com/xuguohai/X-CLIP}{X-CLIP}. Therefore, we train both models on a single A100 GPU with the hyperparameters recommended by the original studies.

\begin{table}[]
\centering
\caption{Implmentation details.}
\label{tab: methods links}
\resizebox{\columnwidth}{!}{%
\begin{tabular}{l|l|l}
\toprule
Method     & Public Code                                                    & Public Weights                                                                                         \\\midrule
CE+        & \href{https://github.com/albanie/collaborative-experts}{Link}  & \href{http://www.robots.ox.ac.uk/~vgg/research/teachtext/data-hq/models/msrvtt-train-gpt2-xl-finetuned-adam/244af891/seed-0/2020-10-01_12-22-00/trained_model.pth}{MSR-VTT}, \href{http://www.robots.ox.ac.uk/~vgg/research/teachtext/data-hq/models/msvd-train-gpt2-xl-finetuned-adam/db396303/seed-0/2020-10-01_13-17-33/trained_model.pth}{MSVD}, \href{http://www.robots.ox.ac.uk/~vgg/research/teachtext/data-hq/models/didemo-train-gpt2-xl-finetuned-adam/616cf11b/seed-0/2020-10-01_13-31-57/trained_model.pth}{DiDeMo}, and \href{http://www.robots.ox.ac.uk/~vgg/research/teachtext/data-hq/models/activity-net-train-gpt2-xl-finetuned-adam/a791f27d/seed-0/2020-10-01_13-42-29/trained_model.pth}{ActivityNet}   \\
TT-CE+     & \href{https://github.com/albanie/collaborative-experts}{Link}  & \href{http://www.robots.ox.ac.uk/~vgg/research/teachtext/data-hq/models/msrvtt-train-gpt2-xl-finetuned-mte-adam/6427fd41/seed-0/2020-09-30_20-34-12/trained_model.pth}{MSR-VTT}, \href{http://www.robots.ox.ac.uk/~vgg/research/teachtext/data-hq/models/msvd-train-gpt2-xl-finetuned-mte-adam/0af2a1ed/seed-0/2020-09-30_21-30-15/trained_model.pth}{MSVD}, \href{http://www.robots.ox.ac.uk/~vgg/research/teachtext/data-hq/models/didemo-train-ce-intra-mte/1a5a249f/seed-0/2020-11-06_19-12-39/trained_model.pth}{DiDeMo}, and \href{http://www.robots.ox.ac.uk/~vgg/research/teachtext/data-hq/models/activity-net-train-gpt2-xl-finetuned-mte-adam/87d04a50/seed-0/2020-10-01_08-48-36/trained_model.pth}{ActivityNet} \\
CLIP4Clip  & \href{https://github.com/ArrowLuo/CLIP4Clip}{Link}             &N/A\\
CLIP2Video & \href{https://github.com/CryhanFang/CLIP2Video}{Link}          & \href{https://drive.google.com/drive/folders/1a5Dcg8wNh88Z-bxb0ZMV3IJFjtSe7X2A?usp=sharing}{MSR-VTT} and \href{https://drive.google.com/drive/folders/1LKMUZFf9EAxFbGShlA22eUCeGKC8DWx4?usp=sharing}{MSVD}  \\
X-CLIP     & \href{https://github.com/xuguohai/X-CLIP}{Link}                & N/A \\
CLIP       & \href{https://github.com/openai/CLIP}{Link}                    & \href{https://github.com/openai/CLIP}{CLIP}  \\
Oscar      & \href{https://github.com/microsoft/Oscar}{Link}                & \href{https://github.com/microsoft/Oscar}{MSCOCO and Flickr30k}\\
AR-CE      & \href{https://github.com/oncescuandreea/audio-retrieval}{Link} & \href{http://www.robots.ox.ac.uk/~vgg/research/collaborative-experts/data/models/audiocaps-train-vggish-vggsound/7e2eda12/seed-0/2021-06-09_17-06-26/trained_model.pth}{AudioCaps} and \href{http://www.robots.ox.ac.uk/~vgg/research/collaborative-experts/data/models/clotho-train-vggish-vggsound/dec0c820/seed-0/2021-06-10_14-45-51/trained_model.pth}{CLOTHO}\\\bottomrule  
\end{tabular}%
}
\end{table}

\subsection{Prevalence of Representation Degeneration Problem across Datasets and Methods} \label{sec: prevalence of prob}
To show that the \problem\ prevails in all datasets, methods, and tasks, we perform the same analysis in \Cref{fig: datadeg a}. We uniformly sample a subset of the gallery set of both retrieval tasks (\ie, text to other modality or other modality to text, other modalities can be video, image, or audio depending on the dataset), and perform PCA upon it to reduce the dimension of the representations down to 2, which is the first two principal dimensions. The results are presented in \Cref{fig: prevalence of problem}. Note that $\Delta_{deg}$ (\ie, the degree of \problem) included in each figure is the one for the complete gallery set instead of the sampled set used in the figure.

For this qualitative but very intuitive study,  We firmly validate again that almost all the data representations gathered in a very narrow cone in the embedding space for basically all the datasets, methods, and tasks. Also, though subject to the difference between datasets and methods, we can witness that a more convex-shaped distribution usually generally leads to a larger $\Delta_{deg}$.

The results imply the universality of the \problem. More quantitative results can be found in \textbf{RQ1} in \Cref{sec: ablation} and in the \textbf{Continuation on RQ1} section (\Cref{sec:more_ablt}).

\begin{table*}[ht!]
\centering
\caption{Retrieval performance on MSCOCO (5k split). Best in \textbf{Bold} and the second best is \underline{underlined}.  }
\label{tab: quan: coco}
\resizebox{\textwidth}{!}{%
\begin{tabular}{ll|ccccc|ccccc}
\toprule
                       & \multirow{2}{*}{Normalization}      & \multicolumn{5}{c|}{Text-to-Image Retrieval}                            & \multicolumn{5}{c}{Image-to-Text Retrieval}                           \\
                        &                                & R@1 $\uparrow$ & R@5 $\uparrow$ & R@10 $\uparrow$ & MdR $\downarrow$ & MnR  $\downarrow$ & R@1 $\uparrow$ & R@5  $\uparrow$ & R@10 $\uparrow$ & MdR  $\downarrow$ & MnR $\downarrow$ \\ \midrule
CLIP  &                     & 30.34          & 54.74          & 66.08          & 4.0 & 25.39          & 50.04          & 74.80          & {\ul 83.38}    & \textbf{1.0} & 9.22          \\
 & +\pool (ratio=0.1)  & 30.37          & 54.77          & 66.14          & 4.0 & {\ul 25.36}    & 49.98          & 75.08          & 83.34          & {\ul 2.0}    & 9.20          \\
 & +\pool (ratio=0.25) & 30.37          & 54.77          & 66.14          & 4.0 & {\ul 25.36}    & 49.98          & 75.06          & 83.30          & {\ul 2.0}    & 9.20          \\
 & +\pool (ratio=0.5)  & 30.38          & 54.77          & 66.10          & 4.0 & 25.38          & 49.98          & 75.10          & 83.34          & {\ul 2.0}    & 9.20          \\
 & +\pool (ratio=1)    & 30.39          & 54.77          & 66.11          & 4.0 & 25.38          & 49.98          & 75.06          & 83.36          & {\ul 2.0}    & 9.20          \\
 \rowcolor{red!10}& +\ours              & {\ul 32.70}    & {\ul 57.53}    & \textbf{68.24} & 4.0 & \textbf{24.35} & {\ul 51.04}    & {\ul 75.18}    & 83.24          & \textbf{1.0} & {\ul 8.93}    \\
 \rowcolor{red!10}& +\fcut          & \textbf{33.11} & \textbf{57.49} & {\ul 68.19}    & 4.0 & 28.95          & \textbf{52.26} & \textbf{76.42} & \textbf{84.32} & \textbf{1.0} & \textbf{8.83}\\ \midrule
Oscar  &                     & 52.50          & 80.03          & \textbf{87.96} & 1.0 & {\ul 10.68}    & 66.74          & {\ul 89.98}    & 94.98          & 1.0 & 2.95          \\
 & +\pool (ratio=0.1)  & 52.52          & {\ul 80.04}    & {\ul 87.95}    & 1.0 & 10.70          & 66.98          & {\ul 89.98}    & 95.00          & 1.0 & 2.96          \\
 & +\pool (ratio=0.25) & 52.52          & 80.03          & \textbf{87.96} & 1.0 & {\ul 10.68}    & 66.98          & 89.96          & 94.96          & 1.0 & 2.95          \\
 & +\pool (ratio=0.5)  & 52.51          & 80.00          & \textbf{87.96} & 1.0 & \textbf{10.67} & 66.94          & 89.92          & 94.94          & 1.0 & 2.95          \\
 & +\pool (ratio=1)    & 52.50          & 80.02          & \textbf{87.96} & 1.0 & {\ul 10.68}    & 66.70          & 89.90          & 95.00          & 1.0 & 2.96          \\
 \rowcolor{red!10}& +\ours              & {\ul 52.63}    & \textbf{80.05} & \textbf{87.96} & 1.0 & 10.72          & \textbf{67.90} & 89.96          & \textbf{95.22} & 1.0 & \textbf{2.92} \\
 \rowcolor{red!10}& +\fcut          & \textbf{52.93} & \textbf{80.05} & 87.78          & 1.0 & 11.09          & {\ul 67.68}    & \textbf{90.24} & {\ul 95.20}    & 1.0 & {\ul 2.94}   \\ 
\bottomrule
\end{tabular}%
}
\end{table*}

\begin{figure*}[h!]
    \begin{center}
    \subfloat[CLOTHO (AR-CE, T2A).]{
        \centering
        \includegraphics[width=0.24\textwidth]{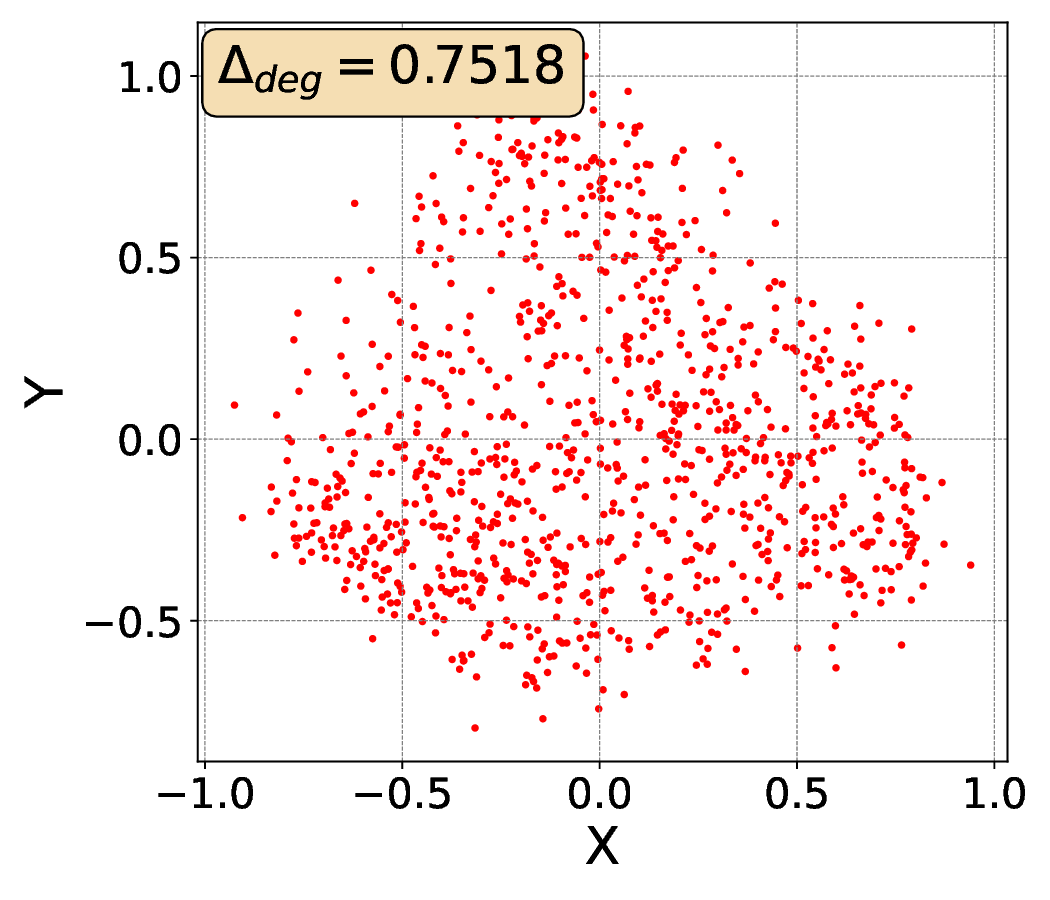}
    }
    \subfloat[CLOTHO (AR-CE, A2T).]{
        \centering
        \includegraphics[width=0.24\textwidth]{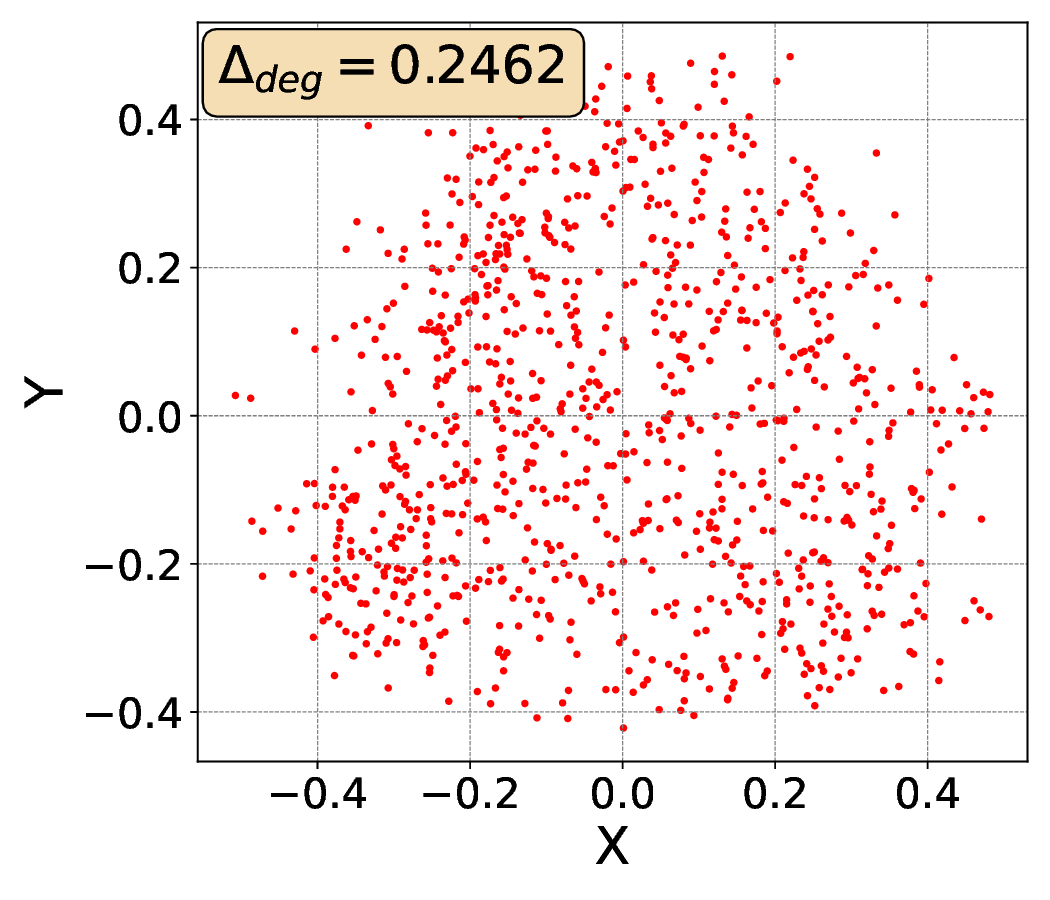}
    }
    \subfloat[MSCOCO (CLIP, T2I).]{
        \centering
        \includegraphics[width=0.24\textwidth]{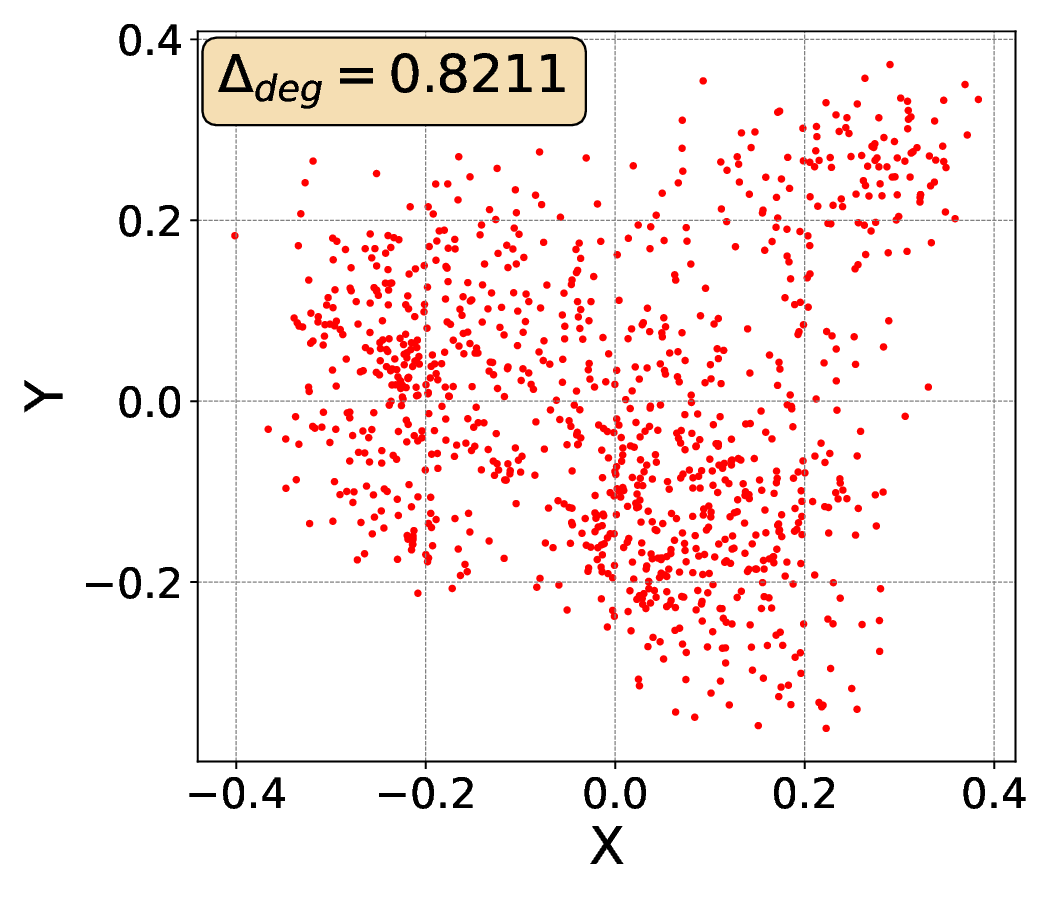}
    }
    \subfloat[MSCOCO (CLIP, I2T).]{
        \centering
        \includegraphics[width=0.24\textwidth]{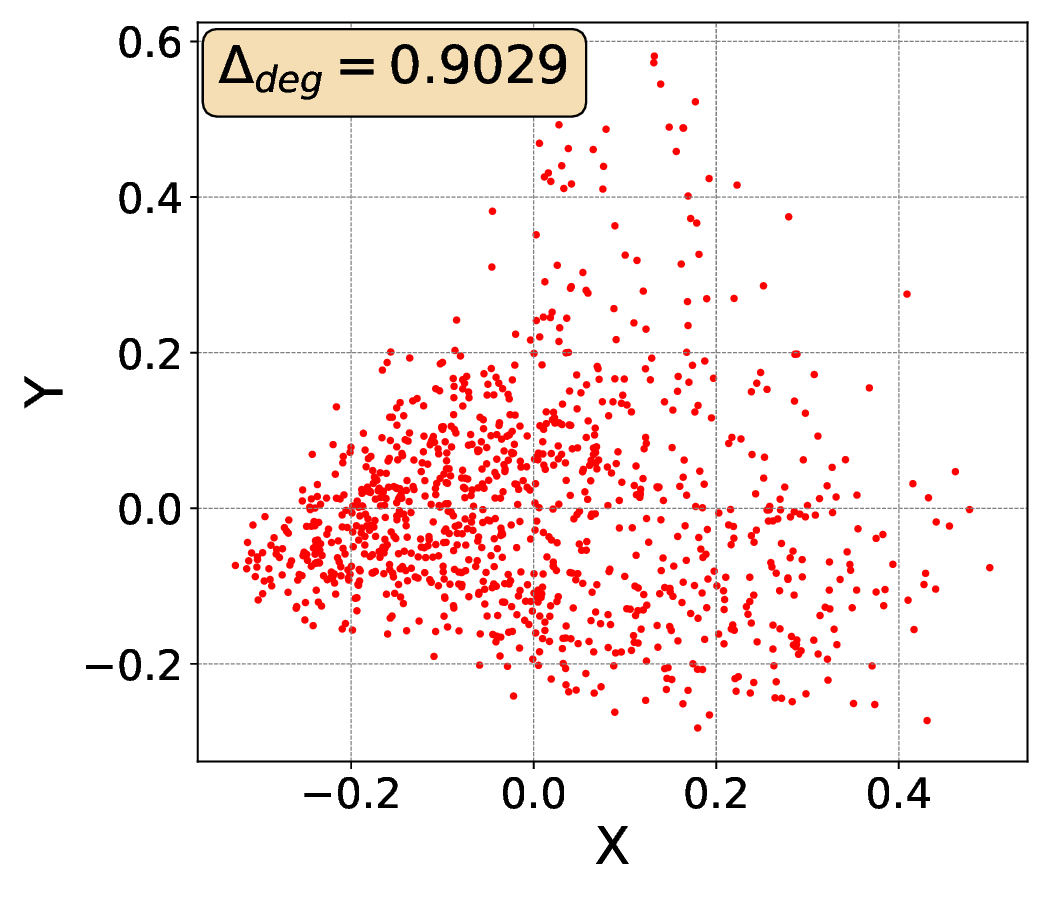}
    }
    
    \subfloat[MSCOCO (Oscar, I2T).]{
        \centering
        \includegraphics[width=0.24\textwidth]{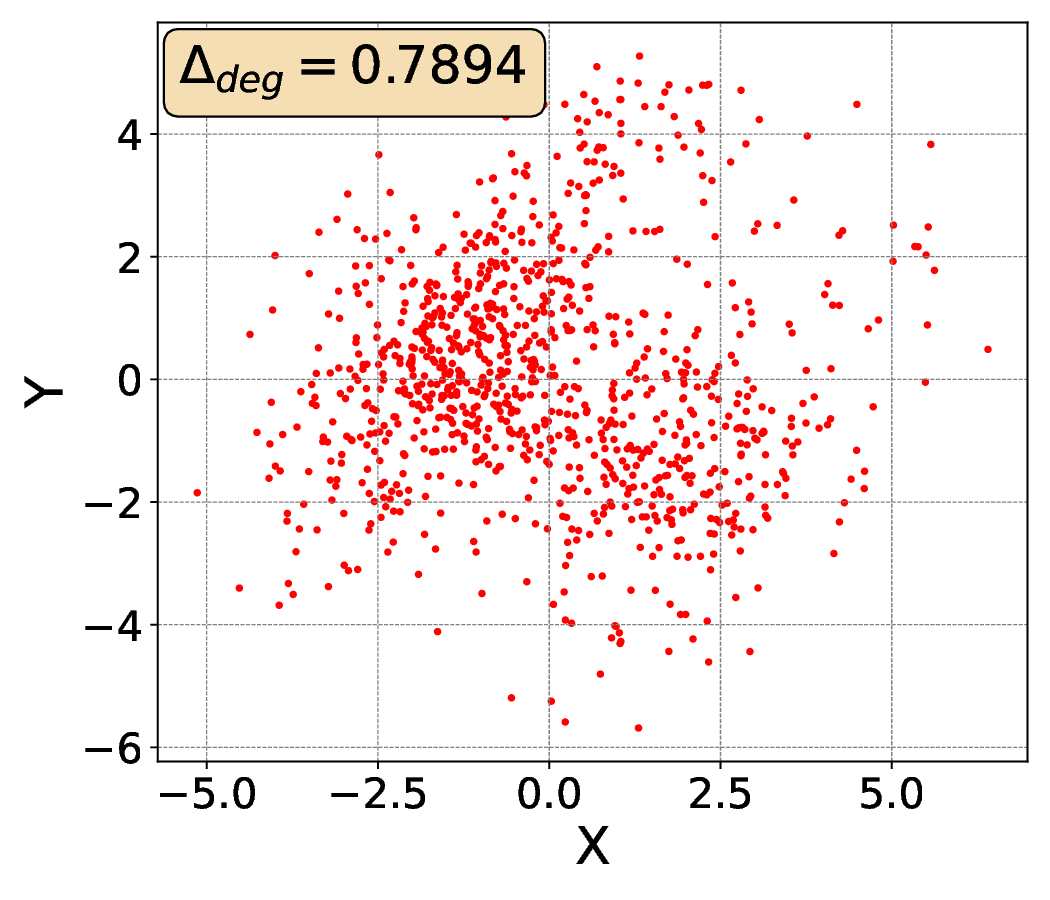}
    }
    \subfloat[Flickr30k (CLIP, I2T).]{
        \centering
        \includegraphics[width=0.24\textwidth]{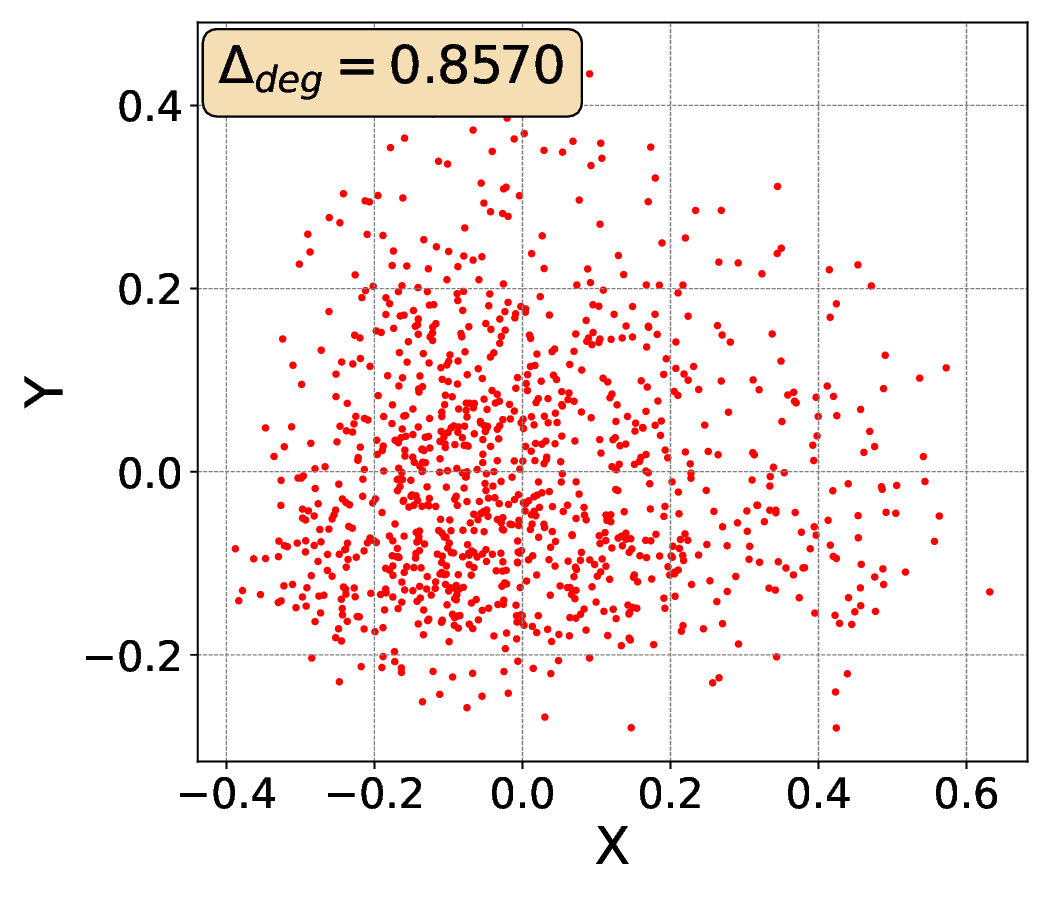}
    }
    \subfloat[Flickr30k (Oscar, T2I).]{
        \centering
        \includegraphics[width=0.24\textwidth]{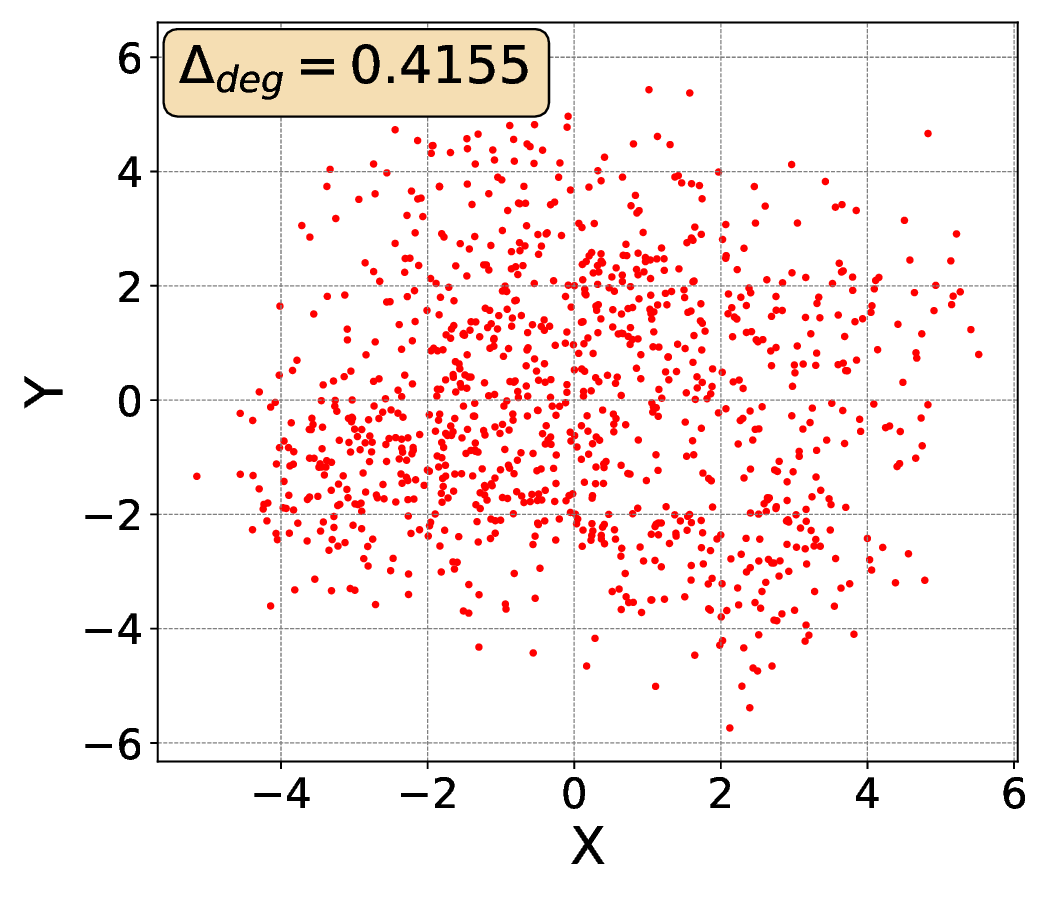}
    }
    \subfloat[MSVD (CLIP2Video, T2V).]{
        \centering
        \includegraphics[width=0.24\textwidth]{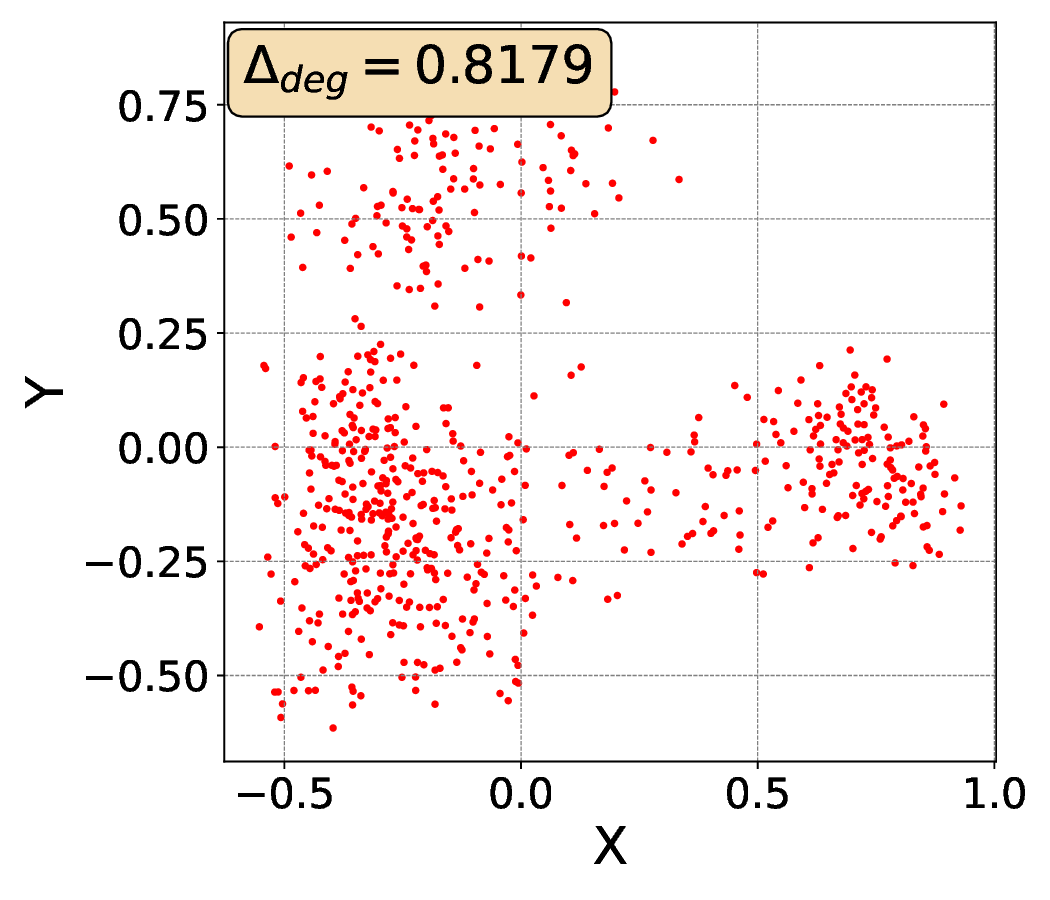}
    }

    \subfloat[MSVD (CLIP2Video, V2T).]{
        \centering
        \includegraphics[width=0.24\textwidth]{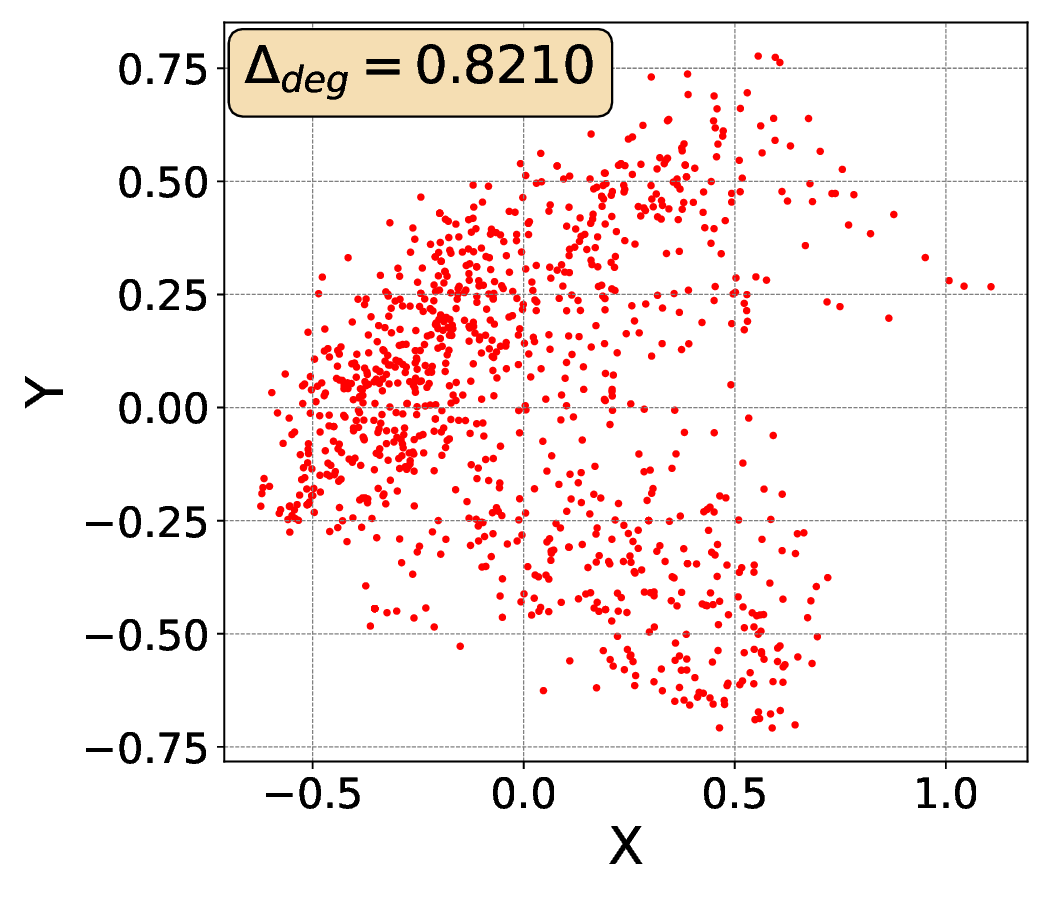}
    }
    \subfloat[MSR-VTT (X-CLIP, T2V).]{
        \centering
        \includegraphics[width=0.24\textwidth]{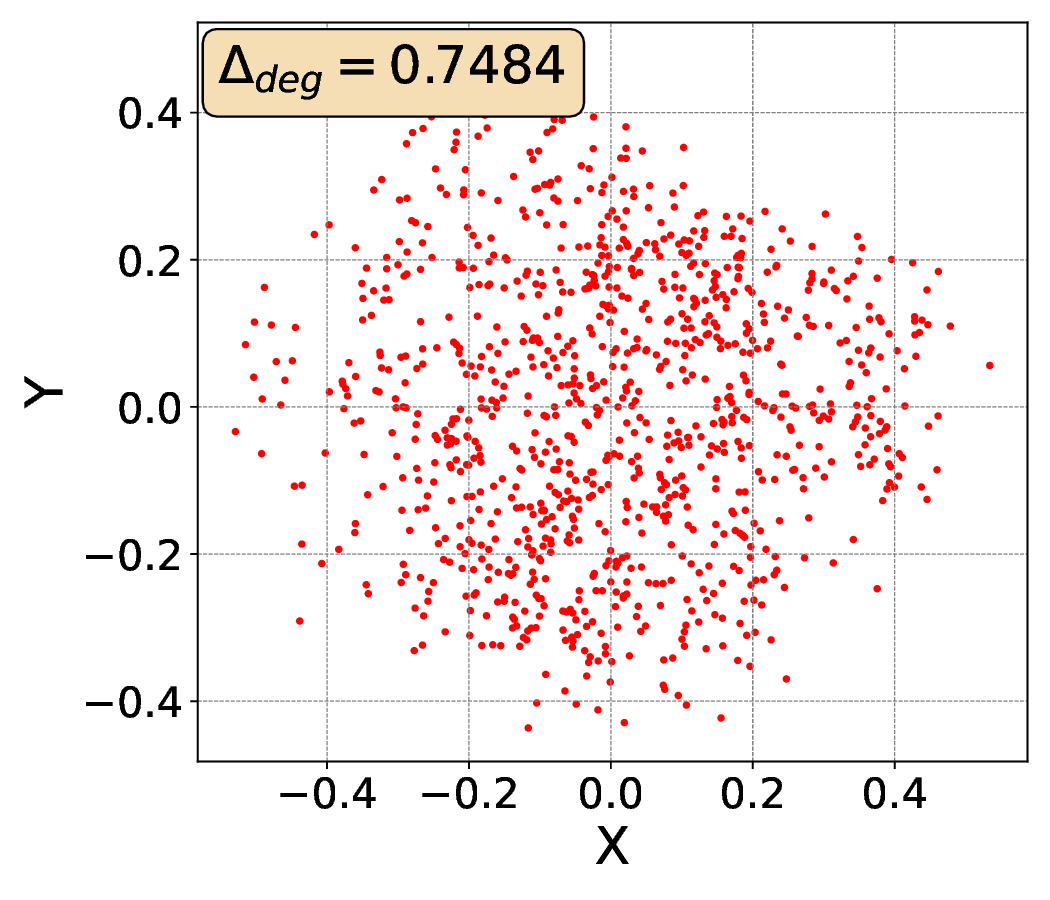}
    }
    \subfloat[ActivityNet (CE+, V2T).]{
        \centering
        \includegraphics[width=0.24\textwidth]{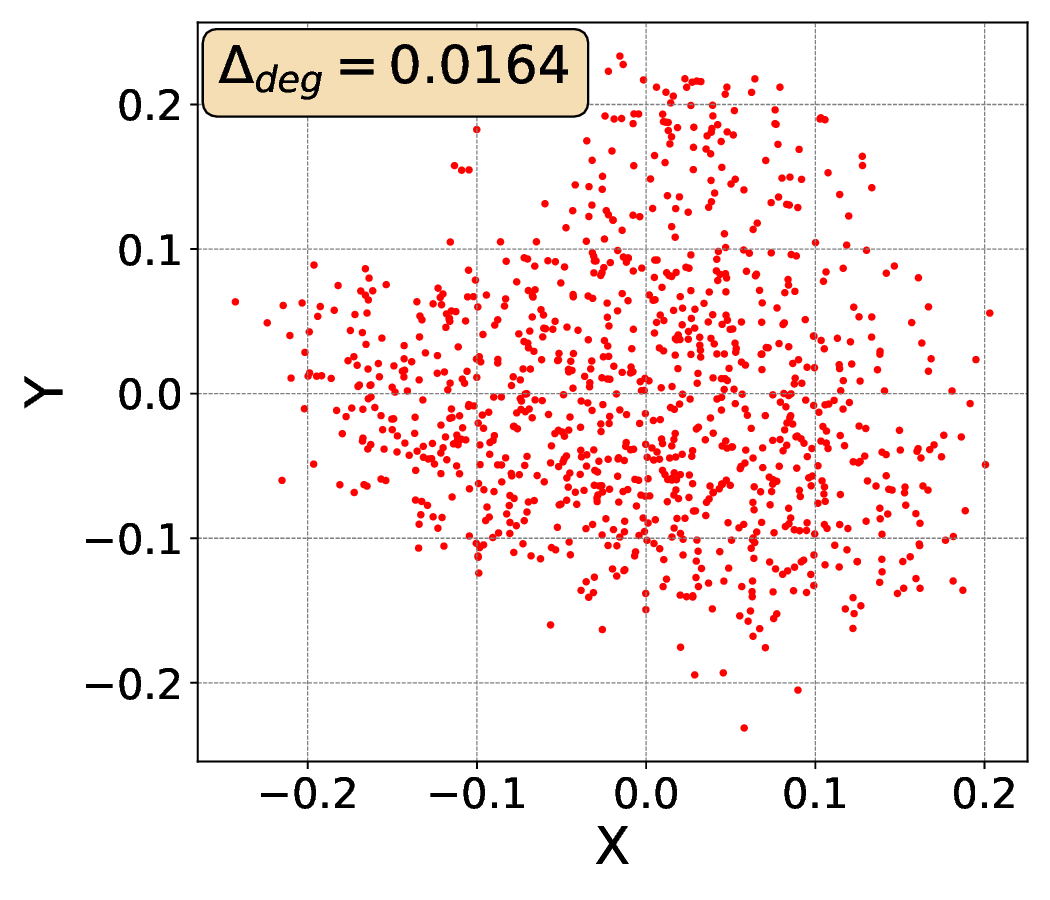}
    }
    \subfloat[MSVD (CE+, T2V)]{
        \centering
        \includegraphics[width=0.24\textwidth]{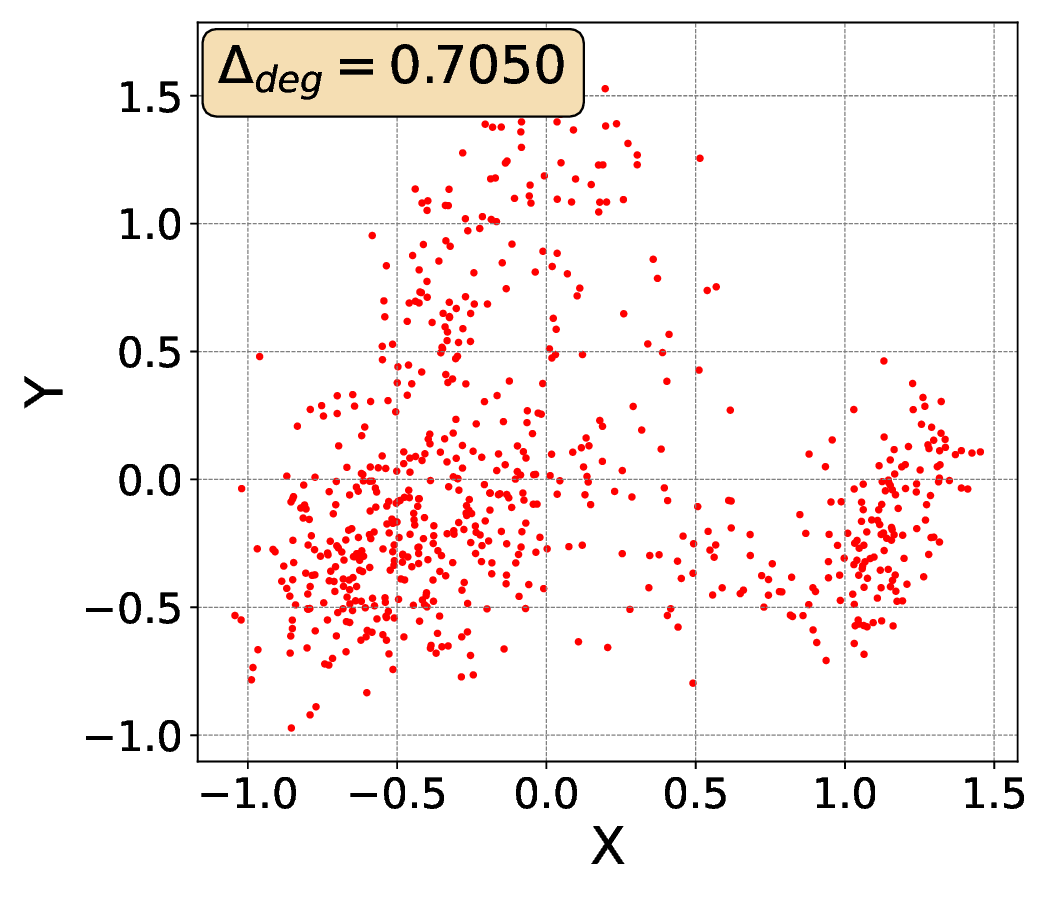}
    }

    \subfloat[MSVD (TT-CE+, T2V)]{
        \centering
        \includegraphics[width=0.24\textwidth]{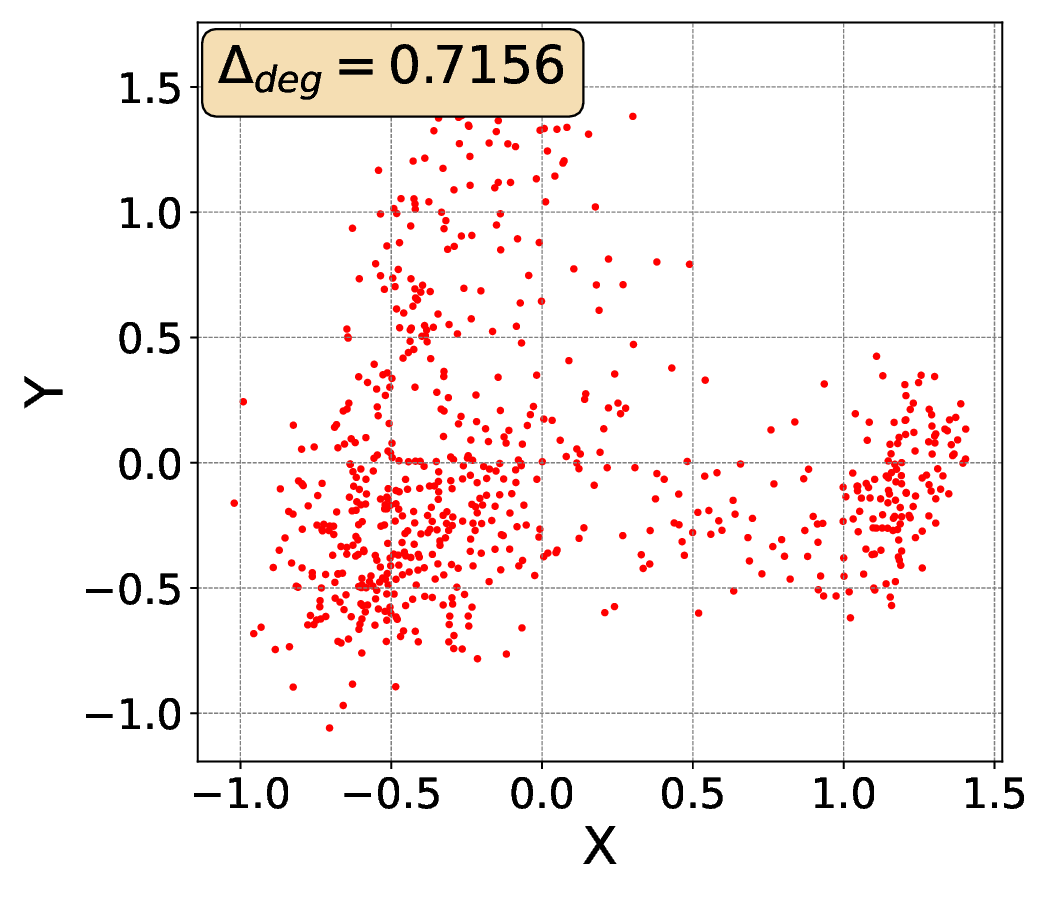}
    }
    \subfloat[MSVD (TT-CE+, V2T)]{
        \centering
        \includegraphics[width=0.24\textwidth]{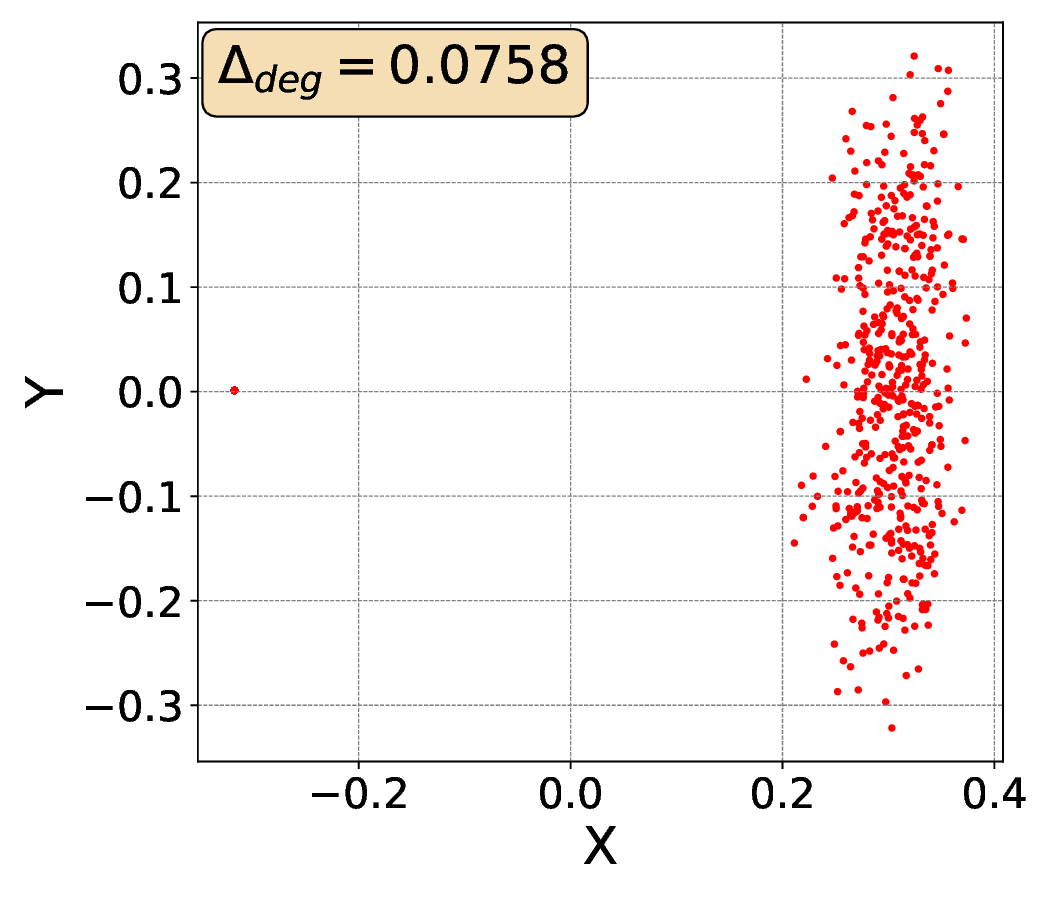}
    }
    \subfloat[MSVD (CLIP4Clip, T2V).]{
        \centering
        \includegraphics[width=0.24\textwidth]{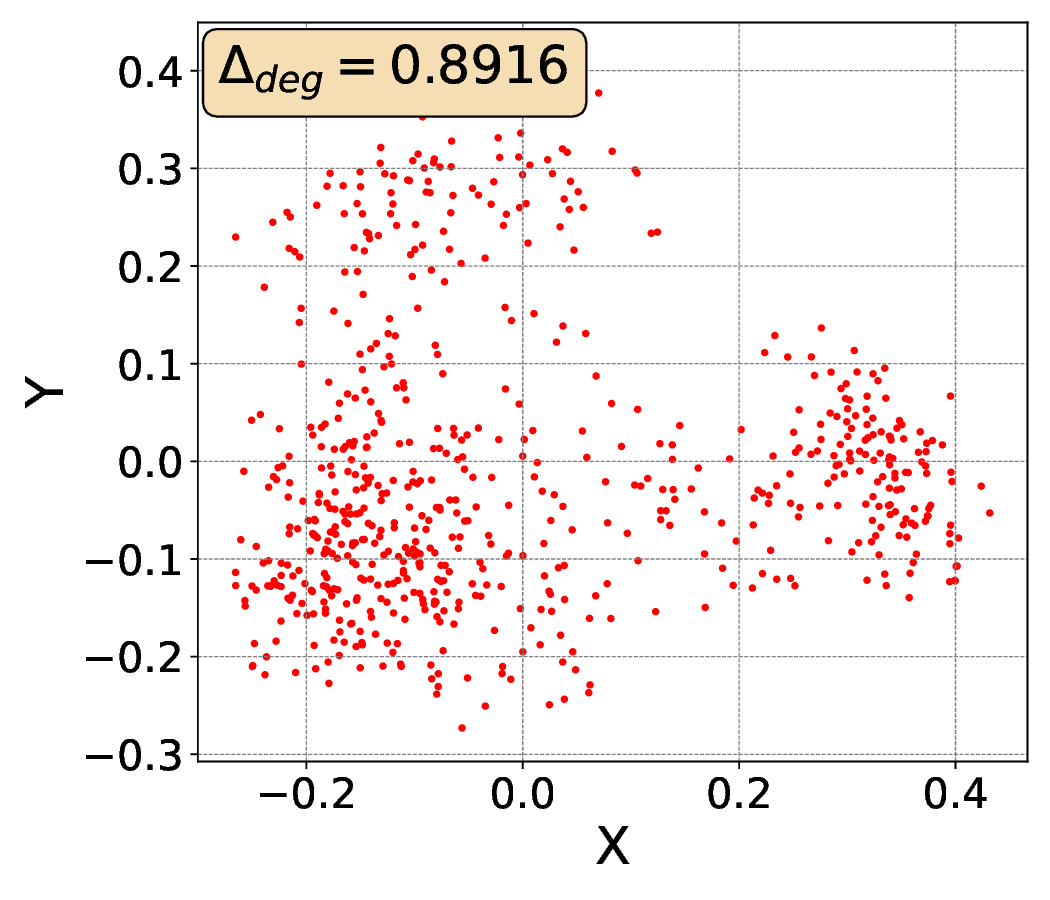}
    } 
    \subfloat[MSVD (CLIP4Clip, V2T).]{
        \centering
        \includegraphics[width=0.24\textwidth]{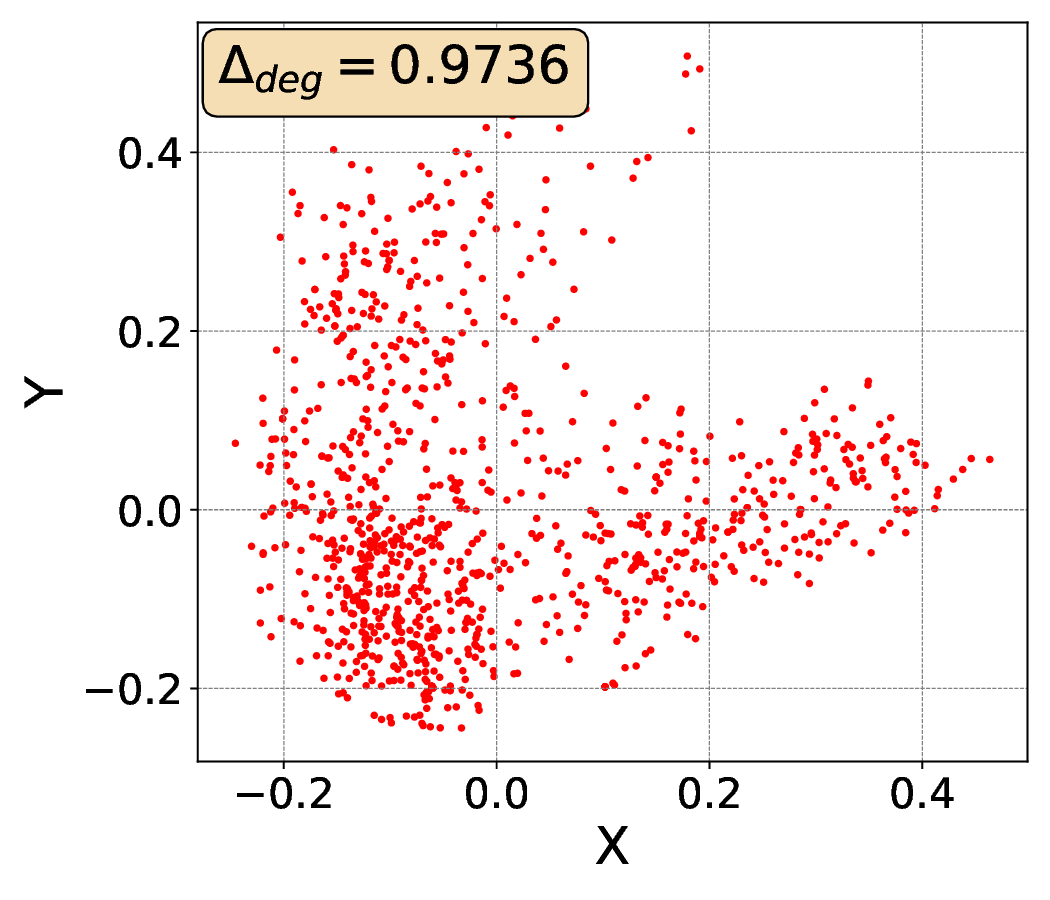}
    }
    
    \subfloat[MSVD (X-CLIP, T2V).]{
        \centering
        \includegraphics[width=0.24\textwidth]{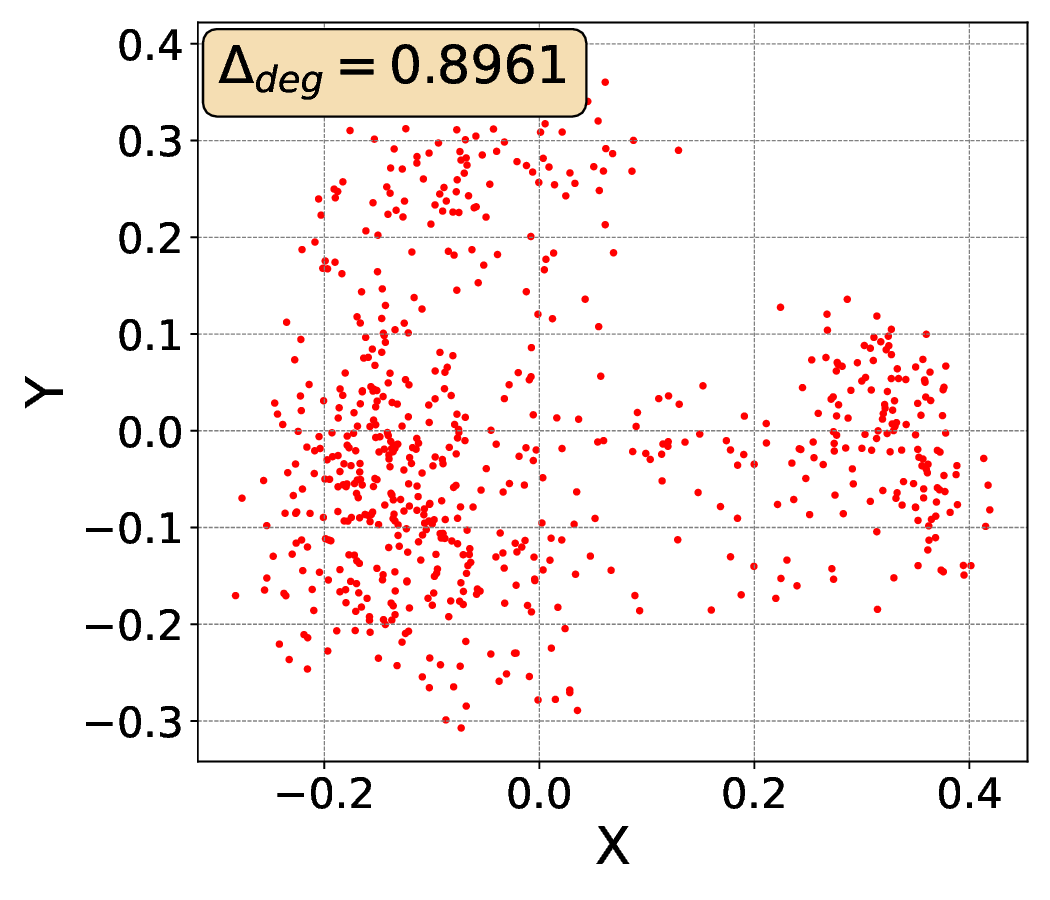}
    } 
    \subfloat[MSVD (X-CLIP, V2T).]{
        \centering
        \includegraphics[width=0.24\textwidth]{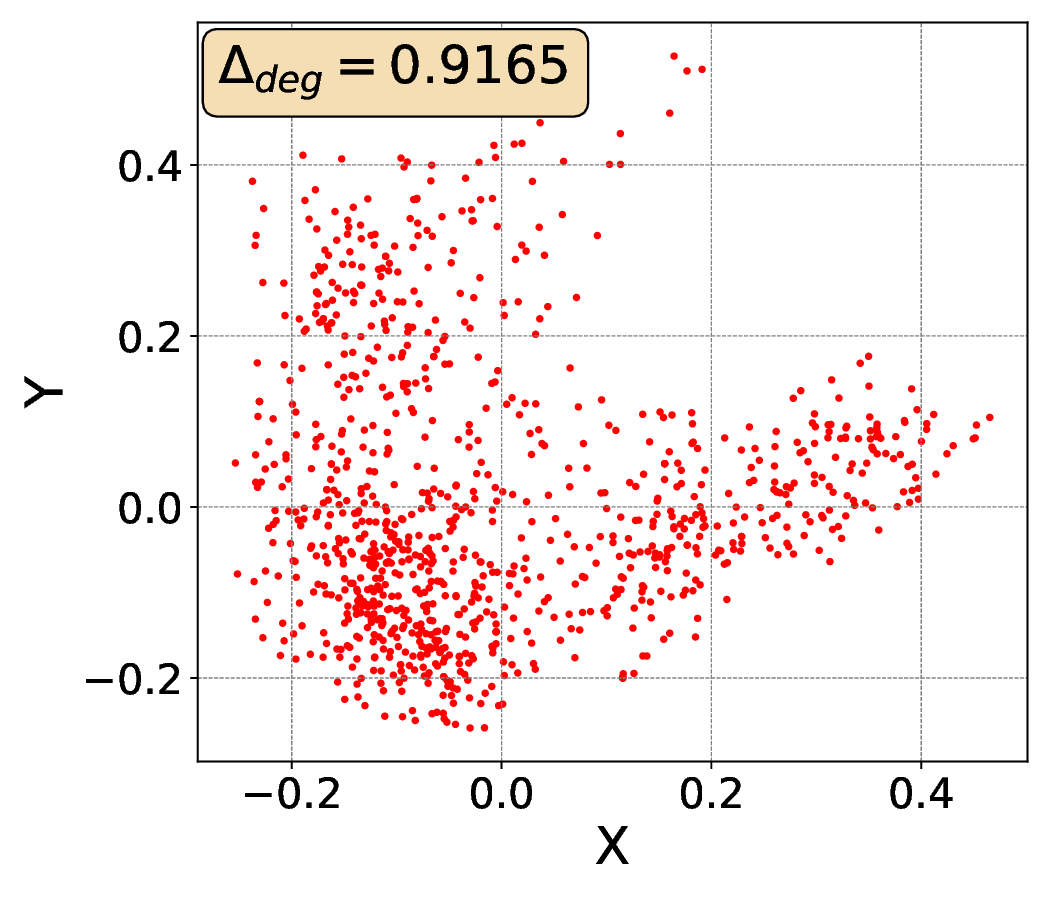}
    }
    \subfloat[DiDeMo (CE+, T2V).]{
        \centering
        \includegraphics[width=0.24\textwidth]{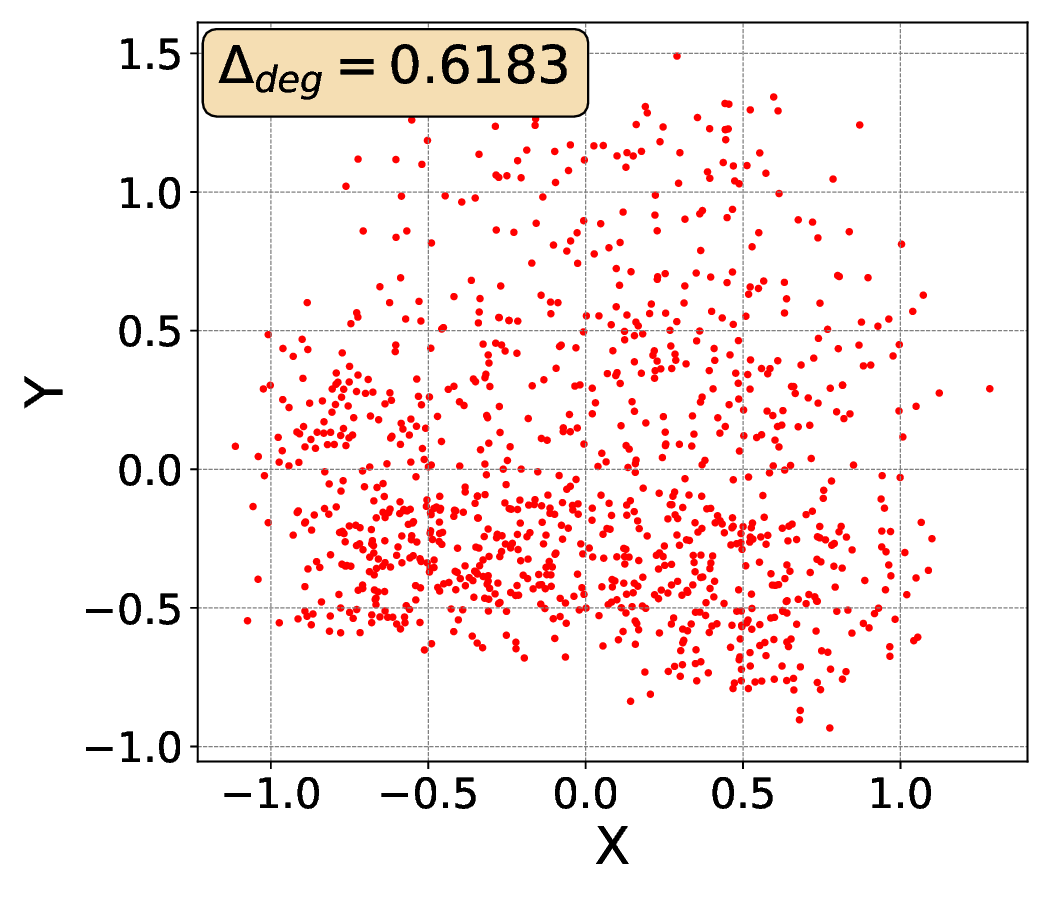}
    }
    \subfloat[DiDeMo (CE+, V2T).]{
        \centering
        \includegraphics[width=0.24\textwidth]{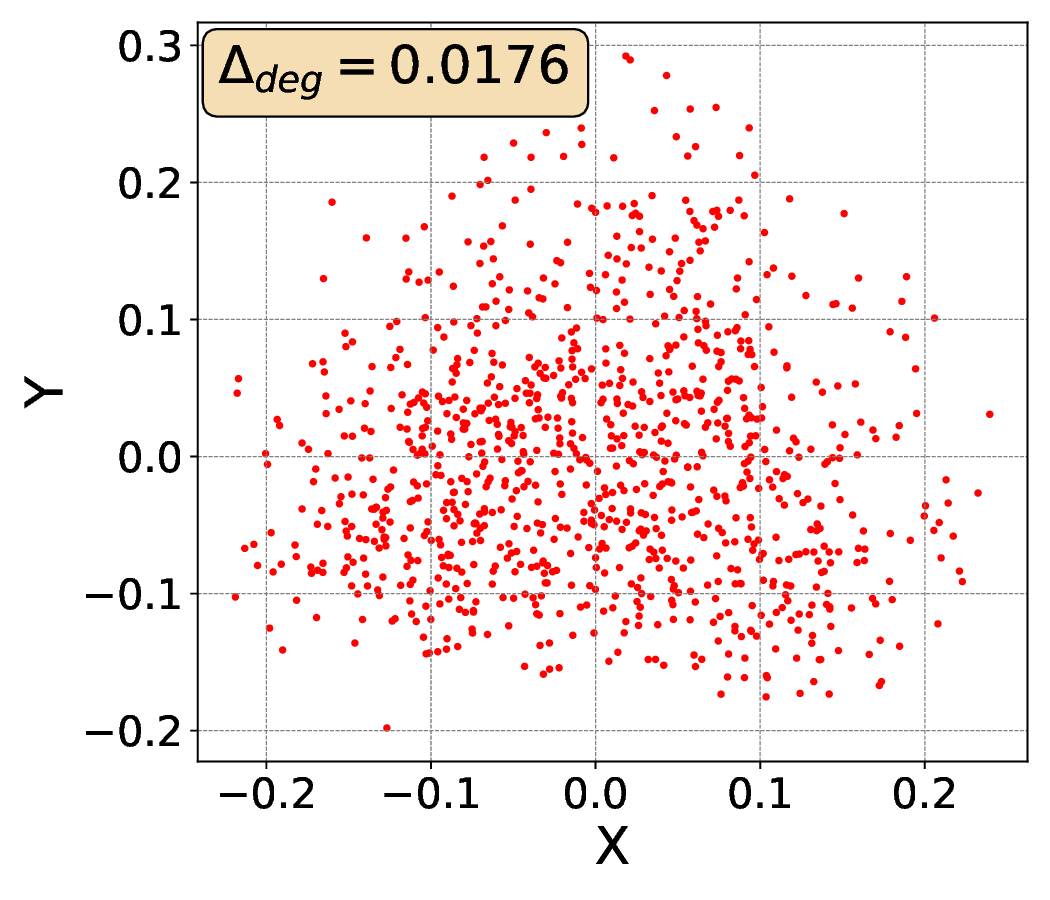}
    }
    
    \subfloat[DiDeMo (TT-CE+, T2V).]{
        \centering
        \includegraphics[width=0.24\textwidth]{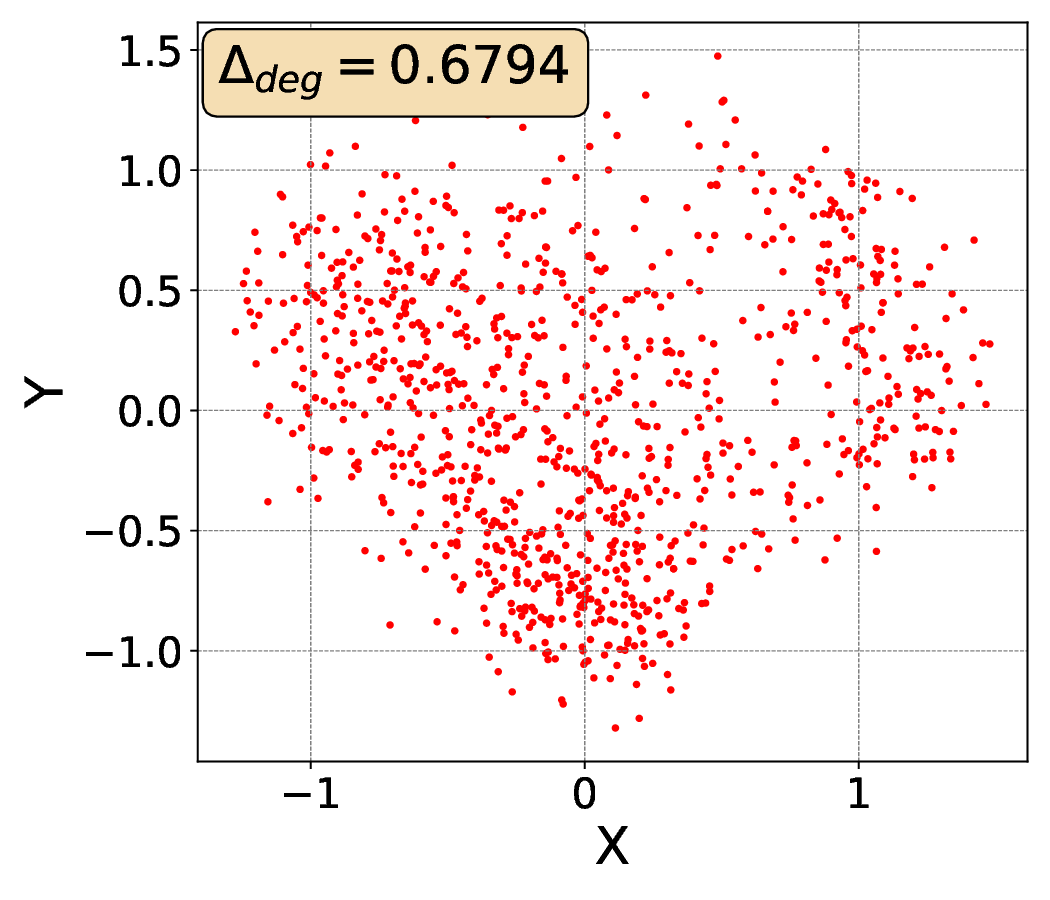}
    }
    \subfloat[DiDeMo (TT-CE+, V2T).]{
        \centering
        \includegraphics[width=0.24\textwidth]{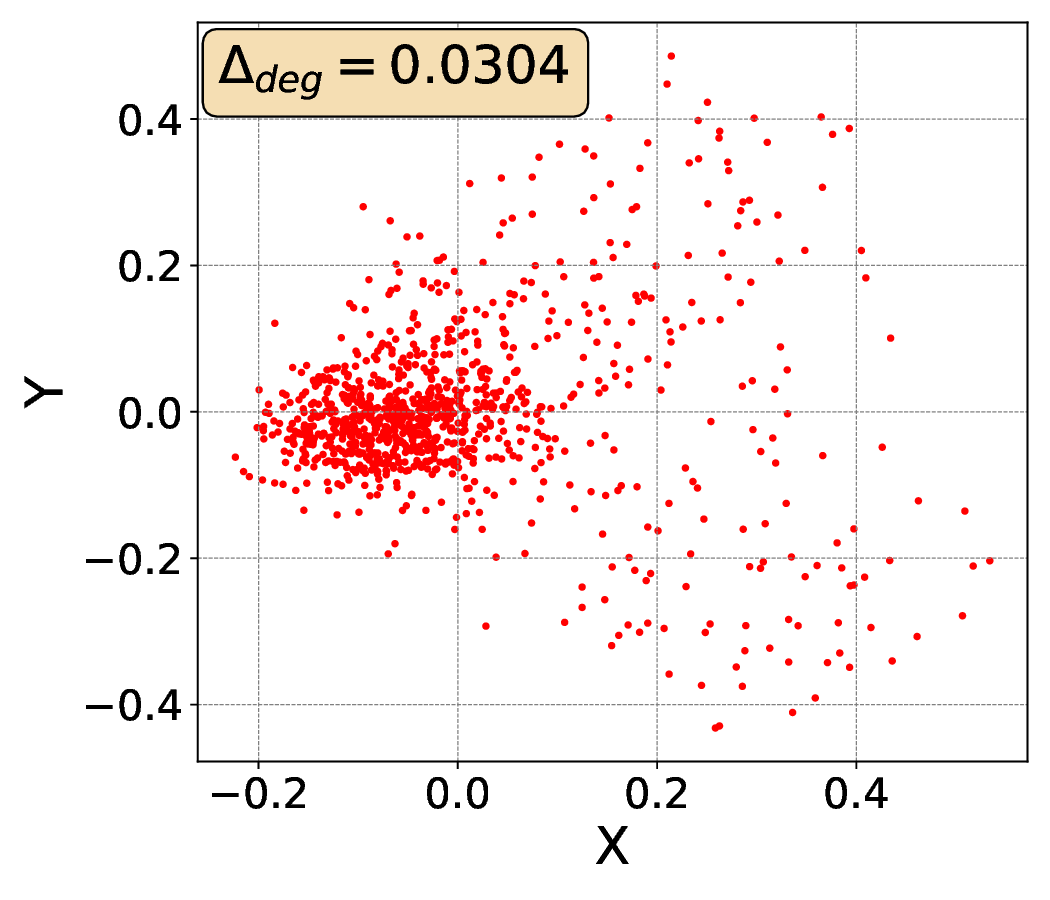}
    } 
      \caption{
      \textbf{The prevalence of \problem\ across various methods and datasets}. Each figure illustrates the distribution of the gallery set's representation for both retrieval tasks. Dimension reduction is performed using PCA, and the first two principal dimensions are chosen.
      }
      \label{fig: prevalence of problem}
   \end{center}
\end{figure*}

\begin{table*}[ht!]
\centering
\caption{Retrieval performance on Flickr30k.Best in \textbf{Bold} and the second best is \underline{underlined}. }
\label{tab: quan: f30k}
\resizebox{\textwidth}{!}{%
%
}
\end{table*}

\subsection{Detail Setting of \pool} \label{sec:pool}
Following \Cref{eq:formaldef}, the adjacency matrices $\mathcal{S}^{g}_{\text{pool}}$ and $\mathcal{S}^{q}_{\text{pool}}$ of \pool\ can be presented as, 
\begin{align*} \label{eq:simpool1}
&\mathcal{S}^{g}_{\text{pool}}(i,j)= \begin{cases}
1\,\text{,} \operatorname{sim}(\mathbf{g_i}, \mathbf{\hat{g}_j}) \geq P_i(\hat{G},p)\\
0\,\text{,\,else}
\end{cases}\\
&\text{and also,}\\
&\mathcal{S}^{q}_{\text{pool}}(i,j)= \begin{cases}
1\,\text{,} \operatorname{sim}(\mathbf{g_i}, \mathbf{\hat{q}_j}) \geq P_i(\hat{Q},p)\\
0\,\text{,\,else}\,,
\end{cases}
\end{align*}
where $P_i(\hat{G},p)$ and $P_i(\hat{Q},p)$ are the value of $p$-percentage largest similarity between node $i$ and all the nodes in set $\hat{G}$ and $\hat{Q}$, respectively. We include four values of $p$, namely $[10, 25, 50, 100]$, to obtain a solid benchmark. All four benchmarks for the same task share identical hyperparameters $r_g$ and $r_q$. Note that the worst scenario of tuning for \pool\ is when there is no hyperparameter that can enable \pool\ to beat the baseline performance of the original retrieval model. But it doesn't happen during the experiment since we can always locate a pair of hyperparameters with a little effort with which \pool\ with at least one of four selected $p$ values can outperform the baseline on at least one of the evaluation metrics mentioned in \Cref{sec:data}. This indicates the effectiveness of the idea of inverse convolution.

\subsection{Computational Complexity} \label{sec:complexity}

In the section, we discuss the computation complexity of the proposed \ours\ and \fcut. As noted \Cref{sec:def}, $N_g$ is the size of gallery data, and $N_{\hat{Q}}$ and $N_{\hat{G}}$ are the size of the training (or validation) query and gallery set, respectively. We also assume $N_q$ is the size of the query.

Both methods need to precompute two adjacency matrices between the gallery set and both sets of training (or validation) data before performing the convolution, which costs $\mathcal{O}(N_g (N_{\hat{G}}+N_{\hat{Q}}))$ as matrix multiplication. Then, the matrix multiplication of inverse convolution operation also incurs a cost of  $\mathcal{O}(N_g (N_{\hat{G}}+N_{\hat{Q}}))$ for \ours\ and $\mathcal{O}(k^{-1} N_g (N_{\hat{G}}+N_{\hat{Q}}))$ for \fcut, for the pruned matrix \tail. Note that if we choose a small $k$ for \tail\, like $1\%$ in the empirical study, the convolution step can be much faster in practice (though the complexity doesn't change due to the pre-computation step). 

After performing the proposed methods, we need another $\mathcal{O}(N_q N_g)$ time to calculate the similarity between the query and gallery set when performing the inference, given that we don't consider any advanced trick to perform $\operatorname{argmax}$.

In sum, both \ours\ and \fcut\ incur computational cost of $\mathcal{O}(N_g (N_{\hat{G}}+N_{\hat{Q}}))$ for inverse graph convolution (before inference) and $\mathcal{O}(N_q N_g)$ for inference.

\subsection{More Quantitative Results} \label{sec:more exp}

The results on retrieval performance for all the datasets with various methods are presented in \Cref{tab: quan: clotho appendix (full),tab: quan: audiocaps appendix (full),tab: quan: coco,tab: quan: f30k,tab: quan: actnet,tab: quan: msrvtt,tab: quan: mvd,tab: quan: didemo}. The results indicate that both proposed methods achieve significantly better retrieval performance on all the tasks compared to the original baseline and \pool\ benchmarks.

\textbf{Text-Image and Image-Text Retrieval.} 
Results are presented in \Cref{tab: quan: coco,tab: quan: f30k}. 

On the MSCOCO (5k split) dataset, the \fcut\ method outperforms other methods in both text-to-image retrieval and image-to-text Retrieval for both CLIP and Oscar models. Specifically, for the CLIP model, \fcut\ achieves the best R@1 and R@5 in text-to-image retrieval and the best R@1, R@5, R@10, and MnR scores in image-to-text retrieval. Similarly, for the Oscar model, \fcut\ also delivers the best results, with the best R@1 and R@5 in text-to-image retrieval.

In the Flickr30k dataset, the \fcut\ method generally shows the highest performance in both text-to-image retrieval and image-to-text retrieval across both CLIP and Oscar models.

\textbf{Text-Video and Video-Text Retrieval.} 
Results are presented in {\Cref{tab: quan: actnet,tab: quan: msrvtt,tab: quan: mvd,tab: quan: didemo}}. 

From the table for MSR-VTT (full split), we can observe that \fcut\ demonstrated the best performance in both text-to-video and video-to-text retrieval on R@1, R@5, and R@10 with both methods. The results for MSR-VTT (1k split) show that \fcut\ generally achieves the best performance in text-to-video retrieval while \ours\ presents superior results in video-to-text retrieval.

For the ActivityNet dataset, \fcut exhibits superior performance in text-to-video and video-to-text retrieval on R@1, R@5, and R@10 with all four methods.

For the MSVD dataset, the best-performing method for text-to-video retrieval is \fcut\ since it has the best performance on R@1, R@5, and R@10 with all methods except X-CLIP. Meanwhile, \ours\ method has the best results for video-to-text retrieval with CLIP4CLIP, CLIP2Video, and X-CLIP.

On the DiDeMo dataset, \fcut\ with both methods achieve the best performance in the text-to-video and video-to-text retrieval on R@1 and R@5.

\textbf{Text-Audio and Audio-Text Retrieval.} 
Results are presented in \Cref{tab: quan: clotho appendix (full),tab: quan: audiocaps appendix (full)}. On the CLOTHO dataset, the AR-CE method enhanced with the \ours\ and \fcut\ techniques outperforms the earlier techniques, MoEE and MMT, and all other benchmarks. Specifically, either \ours\ or \fcut\ achieves the best results in R@1, R@5, and R@10 in both text-to-audio and audio-to-text retrieval tasks. 

In the case of the AudioCaps dataset, \fcut\ method performs best across multiple metrics in both text-to-audio and audio-to-text retrieval tasks.

\subsection{More Ablation Study} \label{sec:more_ablt}

\textbf{Continuation on RQ1: Is the data degeneration problem alleviated?}

To strengthen the conclusion that both \ours\ and \fcut have strong capability to alleviate \problem, we conduct a comprehensive experiment on all the datasets and methods as an extension to \Cref{tab: sim measure intra} and \Cref{tab: sim measure inter}. We have \Cref{tab: deg: clotho appendix (full),tab: deg: audiocaps appendix (full),tab: deg: coco,tab: deg: f30k,tab: deg: msrvtt,tab: deg: actnet,tab: deg: mvd,tab: deg: didemo} that presents the mean similarity within the test gallery set of both tasks for three scenarios, the overall mean (MeanSim), the mean between the nearest neighbor(MeanSim@1), and the mean between nearest 10 neighbors (MeanSim@10). And we have \Cref{tab: cross: clotho appendix (full),tab: cross: audiocaps appendix (full),tab: cross: coco,tab: cross: f30k,tab: cross: msrvtt,tab: cross: actnet,tab: cross: mvd,tab: cross: didemo} include the similarity score from the test gallery set to the test query set with the same evaluation metrics.

It is quite obvious that both \ours\ and \fcut\ help decrease the similarity score on all metrics, especially on MeanSim@1(\ie, $\Delta_{deg}(G)$). Also, both methods have better performance on the retrieval task from text to other modalities, which is the more important task in practice compared to the other direction.

Based on the results, we can safely draw the conclusion that both proposed methods are able to significantly alleviate the \problem\ by significantly reducing $\Delta_{deg}(G)$ score.

\textbf{Continuation on RQ2: Is \fcut\ sensitive to the hyperparameter $r_g$ and $r_q$?}

To address the question, like analysis on \ours, we assess the R@1 metrics of our proposed method under a wide range of hyperparameters compared to the optimal choice adopted. For each task, one of $r_g$ or $r_q$ is fixed and the other is tuned to observe its impact on the R@1 metrics. 
The results obtained from the MSCOCO dataset are depicted in \Cref{fig: hyper sen cut}. Despite some observed variations, our method consistently outperforms the baseline, represented by the red dashed line. This suggests that the proposed method can continuously enhance performance even when parameters are varied over a broad range.

\begin{figure*}[t!]
    \centering
    \subcaptionbox{Image-to-Text R@1 w.r.t $r_g$}{\includegraphics[width=0.3\textwidth]{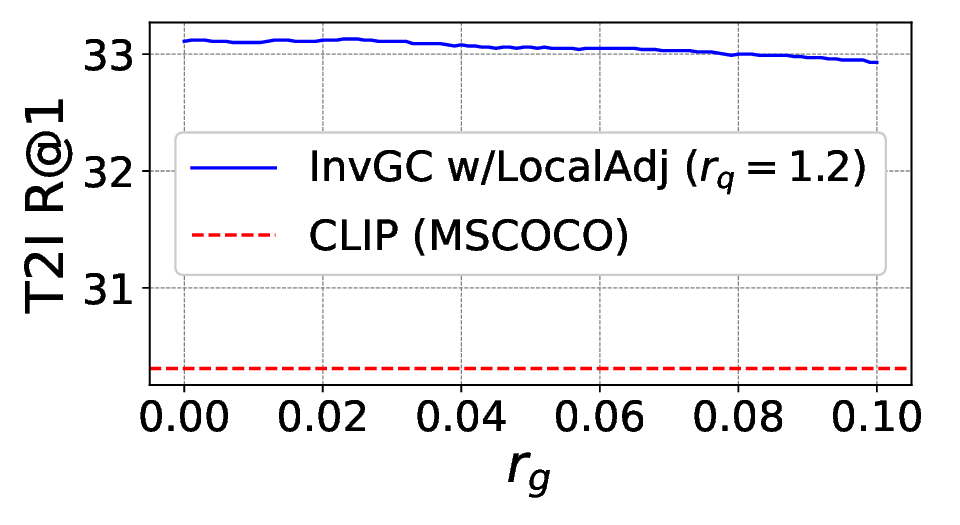}}
    \subcaptionbox{Image-to-Text R@1 w.r.t $r_q$}{\includegraphics[width=0.3\textwidth]{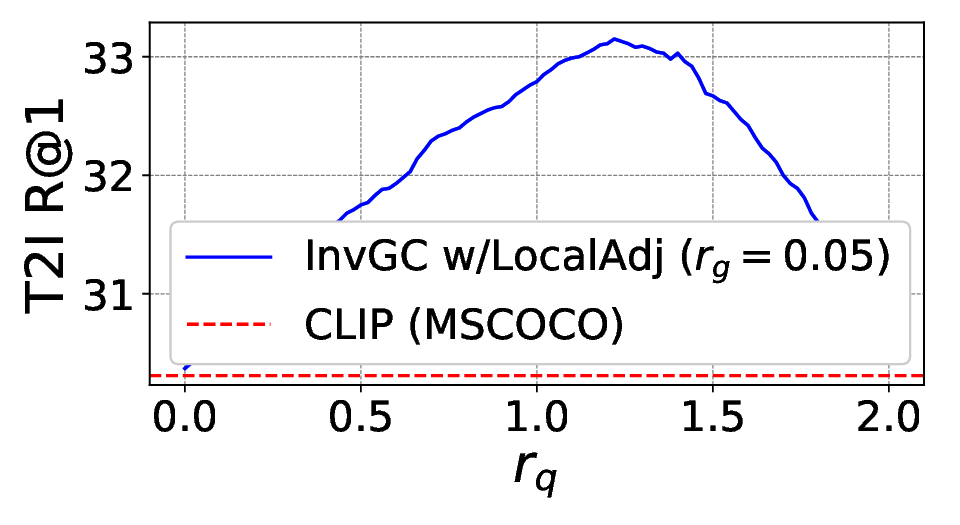}}
    
    \subcaptionbox{Text-to-Image R@1 w.r.t $r_g$}{\includegraphics[width=0.3\textwidth]{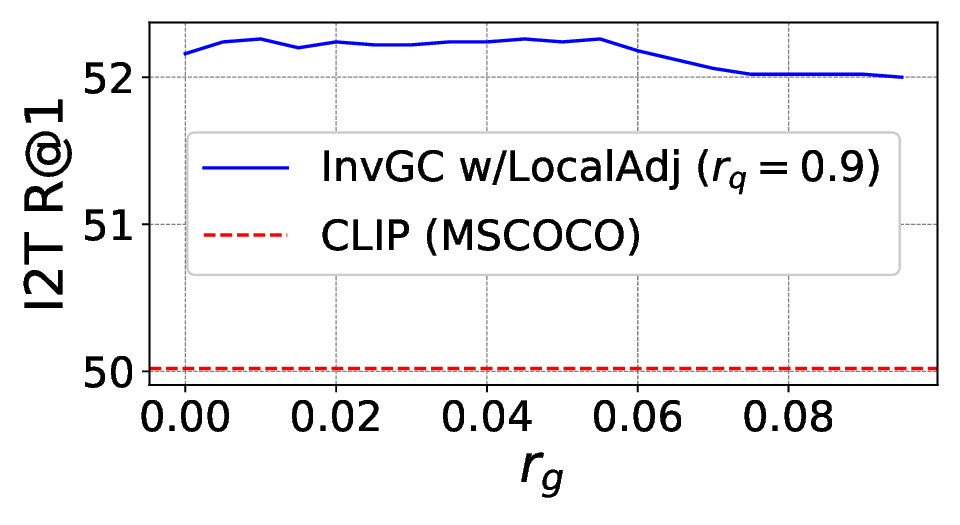}}
    \subcaptionbox{Text-to-Image R@1 w.r.t $r_q$}{\includegraphics[width=0.3\textwidth]{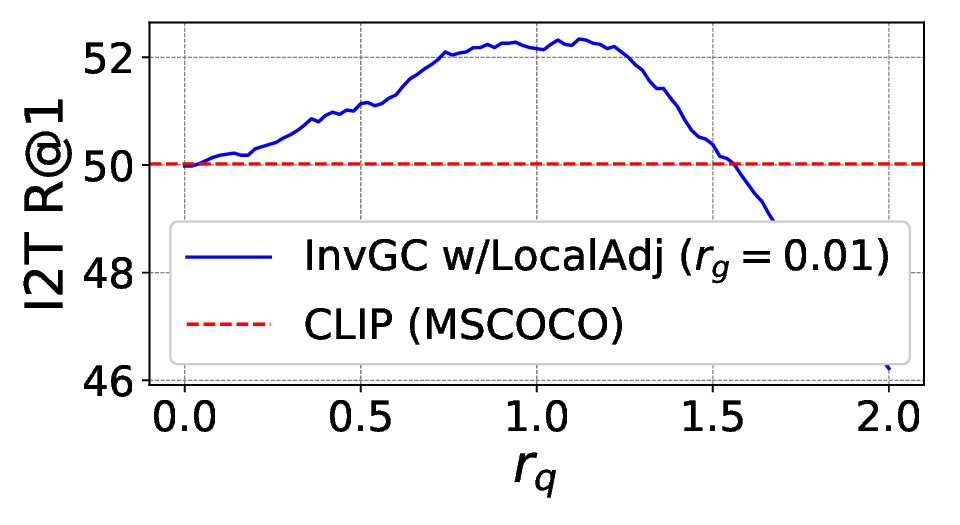}}
    \caption{Hyperpapameters sensitivity of \fcut\ on MSCOCO.
    }
    \label{fig: hyper sen cut}
\end{figure*}

\section{Proofs}
\subsection{Proof of Theorem \ref{the: 1}}\label{A:1}

 To prove the theorem, we start with some preliminary definitions. We denote the $n$-dimensional unit sphere as,
$$
 \begin{aligned}
      \mathcal{S}_{n-1}:=\left\{\mathbf{x} \in \mathbb{R}^n:\|\mathbf{x}\|_2=1\right\}
 \end{aligned}
$$

and $n$-dimensional ball, 

$$
 \begin{aligned}
    \mathcal{B}_{n, b}:=\left\{\mathbf{x} \in \mathbb{R}^n:||x||_2 \leq a \right\}
 \end{aligned}
$$

Then, $n$-dimensional ball representing the neighborhood of $\mathbf{x}$ with radius $a$ can be denoted as $\mathcal{B}_{n, \mathbf{x}, a}:=\left\{\mathbf{y} \in \mathbb{R}^n:||y -x||_2 \leq a \right\}$. We go on to denote spherical caps $\mathbf{C}_{n, \mathbf{x}, a}$ as
$$
\begin{aligned}
\mathbf{C}_{n, \mathbf{x}, b} & :=\mathcal{S}_{n-1} \cap \mathcal{B}_{n, \mathbf{x}, b}
\end{aligned}
$$
Note that the radius $a$ here is not the neighborhood coverage $b$, where they have a relation as $a^2 = 2\sqrt{a^2 -b^2}$ by simple triangle calculation. Since the basis of $\mathbf{C}_{R,\mathbf{x},a}$ is an $n-1$ dimensional hypersphere of radius $b$ and $b$ is the key factor we are interested in, we denote the spherical caps using $b$ as,

$$
\begin{aligned}
    \operatorname{Cap}_{n, \mathbf{x}, b} & := \mathbf{C}_{n, \mathbf{x}, a}
\end{aligned}
$$
Since the cosine similarity applied is independent of the norm of each point in $G$ and $Q$, we can assume without loss of generality that all the points in $G$ and $Q$ are on the unit sphere, i.e. $\forall \mathbf{x} \in G \cup Q, x \in \mathcal{S}_{n-1}$. Since query data follows an independent and identical uniform distribution in $\mathbb{R}^n$, therefore it still follows an independent and identical uniform distribution on $\mathcal{S}_{n-1}$.

Note that the probability of $\mathbf{x_1}$ is correctly retrieved is lower bounded by the probability that the corresponding query point $\mathbf{q} \in \operatorname{Cap}_{n, \mathbf{x_1}, b}$, for $q$ must be the nearest neighbor of $\mathbf{x_1}$.

Given uniform distribution, the probability is $\frac{\operatorname{A}(\operatorname{Cap}_{n, \mathbf{x_1}, b})}{\operatorname{A}(\mathcal{S}_{n-1})}$, where the $\operatorname{A}$ is the operator for area of hypersurface. 

Generally, bounding volume is easier than the surface area, so we have the following Lemma \Cref{the: 2} to help establish the relationship between the two.

\begin{lemma}\label{the: 2}
Therefore, the relationship between the surface area $\operatorname{A}(\operatorname{Cap}_{n,\mathbf{x},b})$ and volumn $\operatorname{V}(\operatorname{Cap}_{n,\mathbf{x},b})$\footnote{Every time we mention the hypervolume of a sphere $\operatorname{V}(\operatorname{Cap}_{n,\mathbf{x},b})$ where $\operatorname{Cap}_{n,\mathbf{x},b} = \mathbf{C}_{n,\mathbf{x},a} = \mathcal{S}_{n-1} \cap \mathcal{B}_{n, \mathbf{x}, a}$, we actually always refer to that of $\operatorname{V}(\mathcal{B}_{n, 1} \cap \mathcal{B}_{n, \mathbf{x}, a})$. We keep using expressions like $\operatorname{V}(\operatorname{Cap}_{n,\mathbf{x},b})$ to prevent possible confusion when dealing with the area and volume at the same time} of $n$-dimensional sphere cap is,

\begin{equation}
   \operatorname{A}(\operatorname{Cap}_{n,\mathbf{x},b})=n \operatorname{V}(\operatorname{Cap}_{n,\mathbf{x},b})
\end{equation}
\end{lemma}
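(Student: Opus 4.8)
The identity is the spherical-cap analogue of the classical relation $\operatorname{A}(\mathcal{S}_{n-1}) = n\,\operatorname{V}(\mathcal{B}_{n,1})$, so the plan is to reproduce the scaling proof of that fact for the lens region that the footnote designates. Writing $L := \mathcal{B}_{n,1}\cap\mathcal{B}_{n,\mathbf{x},a}$ for the solid region whose volume is taken as $\operatorname{V}(\operatorname{Cap}_{n,\mathbf{x},b})$, the key structural observation is that $L$ is homogeneous of degree $n$ under simultaneous dilation of both of its defining balls: scaling by a factor $t$ sends $L$ to $\mathcal{B}_{n,t}\cap\mathcal{B}_{n,t\mathbf{x},ta}$, whose volume is $t^{n}\operatorname{V}(L)$. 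Differentiating this at $t=1$ produces $n\,\operatorname{V}(L)$, precisely the right-hand side of the lemma, so the whole proof reduces to identifying this scaling derivative with the cap area $\operatorname{A}(\operatorname{Cap}_{n,\mathbf{x},b})$.

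To make that identification precise I would pass to the divergence theorem applied to the radial field $F(\mathbf{y}) = \mathbf{y}$, for which $\operatorname{div} F = n$ and hence $\int_{L}\operatorname{div} F = n\,\operatorname{V}(L)$. Converting the left side into a boundary flux $\int_{\partial L}\mathbf{y}\cdot\nu\,dS$ (with $\nu$ the outward unit normal) and splitting $\partial L$ into its two natural pieces does the work. The outer piece lies on the unit sphere and is exactly $\operatorname{Cap}_{n,\mathbf{x},b}$; there the outward normal is $\nu=\mathbf{y}$ with $\|\mathbf{y}\|_2=1$, so $\mathbf{y}\cdot\nu\equiv 1$ and this piece contributes exactly $\operatorname{A}(\operatorname{Cap}_{n,\mathbf{x},b})$, the left-hand side of the lemma. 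The inner piece lies on $\partial\mathcal{B}_{n,\mathbf{x},a}$, where $\nu=(\mathbf{y}-\mathbf{x})/a$, and contributes a flux $\Phi:=\int_{\partial\mathcal{B}_{n,\mathbf{x},a}\cap\mathcal{B}_{n,1}}\mathbf{y}\cdot\nu\,dS$. Collecting both pieces yields $n\,\operatorname{V}(\operatorname{Cap}_{n,\mathbf{x},b}) = \operatorname{A}(\operatorname{Cap}_{n,\mathbf{x},b}) + \Phi$, reducing the lemma to the behaviour of this single extra term.

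The main obstacle is therefore the inner flux $\Phi$: the clean factor-$n$ identity is exactly the assertion that this term is absorbed, so the heart of the proof is its evaluation rather than the (routine) outer-piece and homogeneity computations. Using $\|\mathbf{y}-\mathbf{x}\|_2=a$ on the inner surface I would rewrite $\mathbf{y}\cdot\nu = a + \mathbf{x}\cdot(\mathbf{y}-\mathbf{x})/a$ and exploit the rotational symmetry of the inner cap about the axis through $\mathbf{x}$ to collapse $\Phi$ to a single integral over the polar angle, which can then be compared directly against the cap area expressed in the same variable. Carrying out this symmetric reduction of $\Phi$, and reconciling it with the stated identity at the level of accuracy that \Cref{the: 1} ultimately needs (where only the loose constants $n/2$ and $1/4$ are at stake), is the delicate step; the divergence-theorem bookkeeping, the outer-piece evaluation, and the degree-$n$ homogeneity are all straightforward by comparison.
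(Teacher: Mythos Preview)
Your divergence-theorem approach is much more careful than the paper's own argument, which consists of the single line ``Therefore, the same relationship can be adapted to sphere cap as well'' after recalling $\operatorname{A}(\mathcal{S}_{n-1})=n\operatorname{V}(\mathcal{B}_{n,1})$. That analogy is not a proof, and your setup exposes exactly why: the lens $L=\mathcal{B}_{n,1}\cap\mathcal{B}_{n,\mathbf{x},a}$ has a second boundary piece on $\partial\mathcal{B}_{n,\mathbf{x},a}$, and its radial flux $\Phi$ is the obstruction you correctly isolated.

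The step that would fail is the last one, where you hope $\Phi$ can be absorbed: it cannot, and the identity is in fact false under the footnote's lens definition of the volume. A direct check in $n=2$ with $\|\mathbf{x}\|_2=1$ and $a=1$ gives cap arc-length $2\pi/3$ but lens area $2\pi/3-\sqrt{3}/2$, so $n$ times the volume equals $4\pi/3-\sqrt{3}\neq 2\pi/3$. What your scaling and divergence arguments do prove exactly is $\operatorname{A}(\operatorname{Cap}_{n,\mathbf{x},b})=n\,\operatorname{V}(K)$ where $K$ is the cone from the origin over $\operatorname{Cap}_{n,\mathbf{x},b}$: for that region the lateral boundary satisfies $\mathbf{y}\cdot\nu=0$ and only the spherical face contributes, so the extra flux genuinely vanishes. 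Your closing instinct is therefore the right one: the lemma as written cannot be established, and for the application to \Cref{the: 1} one should either work with the cone volume (which makes the area-to-volume passage exact) or argue directly that the cap-area ratio and the lens-volume ratio agree up to the loose constants already present there.
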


\begin{proof}
    We have the relationship between the surface area $A(\mathcal{S}_{n-1})$ and volume $V(\mathcal{S}_{n-1})$ \footnote{When mentioning the hypervolume of a sphere $\mathcal{S}_{n-1}$, we actually always refer to that of $\mathcal{B}_{n,1}$.} of $n$-dimensional unit hypersphere as,

\begin{equation}
    \operatorname{A}(S_{n-1})=\frac{d}{d r} \operatorname{V}(S_{n-1})=n \operatorname{V}(S_{n-1})
\end{equation}

Therefore, the same relationship can be adapted to sphere cap as well.
\end{proof}

With \Cref{the: 2}, we only need to bound the volume $\operatorname{V}(\operatorname{Cap}_{n,\mathbf{x},b})$, which is given by the following \Cref{the: 3}.

\begin{theorem}\label{the: 3}
Given $n$-dimensional unit sphere $\mathcal{S}_{n-1}$ and spherical caps $\operatorname{Cap}_{n, \mathbf{x}, b}$ on it, we have,

$$
\frac{1}{2 }\cdot b^{n}  > \frac{\operatorname{V}\left(\operatorname{Cap}_{n, \mathbf{x}, b}\right)}{\operatorname{V}\left(\mathcal{S}_{n-1}\right)}> \frac{1}{4n }\cdot b^{n+1}  .
$$

\end{theorem}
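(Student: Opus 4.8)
The plan is to use rotational symmetry to put $\mathbf{x}$ at the pole $(0,\dots,0,1)$, and to work with the \emph{solid} cap $K\subset\mathcal{B}_{n,1}$ that $\operatorname{Cap}_{n,\mathbf{x},b}$ bounds, i.e.\ the part of the unit ball on the far side of the hyperplane $\{y_n = 1-h\}$, where $h$ is the cap height and $b^2 = h(2-h)$ (so $h = 1-\sqrt{1-b^2}$ and $\tfrac12 b^2 < h \le b^2$ for $h\in(0,1]$). Slicing $K$ orthogonally to the axis, the slice at height $t\in[1-h,1]$ is an $(n-1)$-ball of radius $\sqrt{1-t^2}$, hence
\begin{equation*}
V(K) = V(\mathcal{B}_{n-1,1})\int_{1-h}^{1}(1-t^2)^{\frac{n-1}{2}}\,dt,
\qquad
V(\mathcal{B}_{n,1}) = 2\,V(\mathcal{B}_{n-1,1})\int_{0}^{1}(1-t^2)^{\frac{n-1}{2}}\,dt,
\end{equation*}
so the claim reduces to sandwiching $R := \dfrac{\int_{1-h}^{1}(1-t^2)^{(n-1)/2}\,dt}{2\int_{0}^{1}(1-t^2)^{(n-1)/2}\,dt}$ between $\tfrac{1}{4n}b^{n+1}$ and $\tfrac12 b^n$; along the way I will use that $V(\mathcal{B}_{n,1})/V(\mathcal{B}_{n-1,1}) = \sqrt{\pi}\,\Gamma(\tfrac{n+1}{2})/\Gamma(\tfrac{n}{2}+1) \le 2$ (this ratio is decreasing in $n$ by log-convexity of $\Gamma$, with value $2$ at $n=1$).

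For the lower bound I would inscribe an explicit body in $K$: the cone $\Lambda$ with apex $(0,\dots,0,1)$ and base the $(n-1)$-disk $\{y_n = 1-h,\ \|\mathbf{y}_\perp\|\le b\}$ lies inside $K$, because a generic point $(1-\lambda)(0,\dots,0,1) + \lambda\mathbf{v}$ of $\Lambda$ has $y_n = 1-\lambda h \ge 1-h$ and $\|\mathbf{y}\|^2 = 1 - 2\lambda(1-\lambda)h \le 1$ (using $h^2+b^2 = 2h$). Therefore
\[V(K) \ge V(\Lambda) = \tfrac1n\, h\, b^{n-1}\, V(\mathcal{B}_{n-1,1}) > \tfrac{1}{2n}\, b^{n+1}\, V(\mathcal{B}_{n-1,1}),\]
and dividing by $V(\mathcal{B}_{n,1}) \le 2\,V(\mathcal{B}_{n-1,1})$ yields $R > \tfrac{1}{4n}b^{n+1}$.

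For the upper bound, a crude containment such as ``$K$ sits in the radius-$b$ cylinder of height $h$'' is not enough, since $\tfrac12 b^n$ is attained in the limit $b\to1$ (where $K$ is the half-ball and $R=\tfrac12$) while being far from tight for small $b$; so I would argue from the exact integral. Substituting $s=1-t$ rewrites the numerator as $g(h):=\int_0^h (s(2-s))^{(n-1)/2}\,ds$ and the denominator as $2C$ with $C:=\int_0^1 (s(2-s))^{(n-1)/2}\,ds$, and $R\le\tfrac12 b^n$ becomes $g(h)\le (h(2-h))^{n/2}C$. Setting $\Phi(h):=(h(2-h))^{n/2}C - g(h)$, one has $\Phi(0)=\Phi(1)=0$ and
\begin{equation*}
\Phi'(h) = \big(h(2-h)\big)^{\frac n2-1}\Big(n(1-h)\,C - \sqrt{h(2-h)}\,\Big),
\end{equation*}
whose bracketed factor is strictly decreasing on $[0,1]$, positive at $h=0$ and equal to $-1$ at $h=1$; hence $\Phi$ increases then decreases and, pinned to $0$ at both endpoints, stays $\ge 0$ on $[0,1]$, strictly positive on $(0,1)$. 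This gives $R\le\tfrac12 b^n$, with equality only in the degenerate case $b=1$.

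I expect the upper bound to be the crux: one must resist the one-line volume comparison and instead reduce to the sharp one-dimensional identity, then extract exactly the sign information needed (a single sign change of $\Phi'$, located by monotonicity of the bracket rather than by solving for it). The lower bound is routine once the inscribed cone is spotted, modulo the $\Gamma$-ratio bookkeeping; the $n=1$ case, where the ``$(n-1)$-ball'' degenerates, is checked directly.
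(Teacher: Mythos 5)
Your proof is correct, and for the lower bound it is essentially the paper's argument: both inscribe the cone of height $h=1-\sqrt{1-b^2}$ over the radius-$b$ base disk and use $h>b^2/2$; you differ only in bookkeeping (you normalize via the gamma-ratio $V(\mathcal{B}_{n,1})/V(\mathcal{B}_{n-1,1})\le 2$, while the paper compares the cone to a cylinder of height $2b$ and then to the ball $\mathcal{B}_{n,b}$ — both routes land on the same $\tfrac{1}{4n}b^{n+1}$). The upper bound is where you genuinely diverge. The paper gets it in one line from a containment you ruled out too quickly: not the cylinder, but the observation that the solid cap sits inside the \emph{upper half of the radius-$b$ ball centered at the center of the cap's base}. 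Concretely, for $y$ in the cap one has
$\|y-(1-h)\mathbf{x}\|^2 \le (1-y_n^2)+(y_n-(1-h))^2$, which is decreasing in $y_n$ on the cap and hence at most $1-(1-h)^2=b^2$; this gives $V(\mathrm{Cap}) < \tfrac12 V(\mathcal{B}_{n,b}) = \tfrac12 b^n V(\mathcal{B}_{n,1})$ immediately. Your replacement — reducing to the one-dimensional identity $g(h)\le (h(2-h))^{n/2}C$ and controlling the single sign change of $\Phi'$ — is sound (the bracket $n(1-h)C-\sqrt{h(2-h)}$ is indeed strictly decreasing, so $\Phi$ is unimodal and pinned to zero at both ends), and it buys you slightly more: strictness on $(0,1)$ with equality exactly at the hemisphere $b=1$. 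But it costs an order of magnitude more work than the half-ball containment, and the extra sharpness is not needed anywhere downstream in Theorem~\ref{the: 1} or Corollary~\ref{the: 4}.
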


\begin{proof}
\textbf{Lower bound}: Follow the proof of Lemma 4.1 in \cite{cap_vol}. The basis of $\operatorname{Cap}_{R,\mathbf{x},b}$ is an $n-1$ dimensional hypersphere of radius $r=b$, denoted as $\mathcal{B}_{n-1,b}$. Therefore $\mathrm{Cap}_{n, \mathbf{x}, b}$ includes a cone $C_1$ with basis as $\mathcal{B}_{n-1,b}$ and height $h=1-\sqrt{1-b^2}$. Then, a cylinder $C_2$ with the same basis but height $2 \cdot b$ includes $\mathcal{B}_{n,b}$. Note that we have $h>b^2/2$ by Taylor expansion. Based on all the facts, we have:
$$
\begin{aligned}
\operatorname{V}\left(\operatorname{Cap}_{n, \mathbf{x}, b}\right)&>\operatorname{V}\left(C_1\right)=\operatorname{V}\left(\mathcal{B}_{n-1,b}\right) \frac{h}{n}\\
&= \operatorname{V}\left(C_2\right) \frac{h}{2b n}\\
& >\operatorname{V}\left(\mathcal{B}_{n,b}\right) \frac{h}{2 b n}\\
& >\operatorname{V}\left(\mathcal{B}_{n,b}\right) \frac{b}{4 n}
\end{aligned}
$$
Therefore,

$$
\begin{aligned}
     \frac{b}{4 n} \cdot \frac{\operatorname{V}\left(\operatorname{Cap}_{n, \mathbf{x}, b}\right)}{\operatorname{V}\left(\mathcal{S}_{n-1}\right)} &>  \frac{b}{4 n} \cdot \frac{\operatorname{V}\left(\mathcal{B}_{n,b}\right)}{\operatorname{V}\left(\mathcal{B}_{n,1}\right)}\\
    &=  \frac{b}{4 n} \cdot \left(\frac{b}{1}\right)^n \\
    &=  \frac{1}{4 n} \cdot b^{n+1} 
\end{aligned}
$$
\textbf{Upper bound}: Based on proof of Lower bound, we notice that half of $\mathcal{B}_{n,b}$ includes $\operatorname{Cap}_{n, \mathbf{x}, b}$, we have

$$
\begin{aligned}
\operatorname{V}\left(\operatorname{Cap}_{n, \mathbf{x}, b}\right)& <\frac{\operatorname{V}\left(\mathcal{B}_{n,b}\right)}{2}
\end{aligned}
$$
Therefore,

$$
\begin{aligned}
    \frac{\operatorname{V}\left(\operatorname{Cap}_{n, \mathbf{x}, b}\right)}{\operatorname{V}\left(\mathcal{S}_{n-1}\right)} &< \frac{1}{2} \cdot \frac{\operatorname{V}\left(\mathcal{B}_{n,b}\right)}{\operatorname{V}\left(\mathcal{B}_{n,1}\right)}\\
    &= \frac{1}{2} \cdot b^{n} 
\end{aligned}
$$
\end{proof}

Finally, we finish the proof using \Cref{the: 2}.
\subsection{Proof of Corollary \ref{the: 4}}

\begin{proof}

Using the result from \Cref{the: 1}, we have the following inequalities for the probability of successful retrieval of $b_1$ and $b_2$,

$$
\begin{aligned}
 \operatorname{P}(\mathbf{x},b_1) < \frac{n}{2 }\cdot b_1^{n},
\end{aligned}
$$
And,

$$
\begin{aligned}
\operatorname{P}(\mathbf{x},b_2) >\frac{1}{4 }\cdot b_2^{n+1}\,,
\end{aligned}
$$

Therefore, we have,

$$
\begin{aligned}
\frac{\operatorname{P}(\mathbf{x_1},b_1)}{\operatorname{P}(\mathbf{x_1},b_2)} < \frac{2n}{b_2} \cdot \left(\frac{b_1}{b_2}\right)^{n} \,.
\end{aligned}
$$

Note that $2/b_2$ is constant with respect to $n$, so we finish the proof.
\end{proof}

\end{document}